\newcolumntype{K}[1]{>{\centering\arraybackslash}p{#1}}
\newtheorem{theorem}{Theorem}[section]
\newtheorem{lemma}[theorem]{Lemma}
\DeclareMathOperator*{\argmin}{argmin}
\newcommand{\ex}[1]{\mathop{\mathbb{E}}\left[#1\right]}
\newcommand{\W}{\Lambda}
\newcommand{\X}{\mathcal{X}}
\newcommand{\I}{\mathcal{I}}
\title{Tracking the Best Expert in Non-stationary Stochastic Environments}
\author{
  Chen-Yu Wei\ \ \ \ \ \ \ \ Yi-Te Hong\ \ \ \ \ \ \ \ Chi-Jen Lu \\
  Institute of Information Science\\
  Academia Sinica, Taiwan \\
  \texttt{$\{$bahh723, ted0504, cjlu$\}$@iis.sinica.edu.tw}\\
}
\begin{document}

\maketitle

\begin{abstract}
We study the dynamic regret of multi-armed bandit and experts problem in non-stationary stochastic environments. We introduce a new parameter $\W$, which measures the total statistical variance of the loss distributions over $T$ rounds of the process, and study how this amount affects the regret. We investigate the interaction between $\W$ and $\Gamma$, which counts the number of times the distributions change, as well as $\W$ and $V$, which measures how far the distributions deviates over time. One striking result we find is that even when $\Gamma$, $V$, and $\Lambda$ are all restricted to constant, the regret lower bound in the bandit setting still grows with $T$. The other highlight is that in the full-information setting, a constant regret becomes achievable with constant $\Gamma$ and $\Lambda$, as it can be made independent of $T$, while with constant $V$ and $\Lambda$, the regret still has a $T^{1/3}$ dependency. We not only propose algorithms with upper bound guarantee, but prove their matching lower bounds as well.
\end{abstract}

\section{Introduction}

Many situations in our daily life require us to make repeated decisions which result in some losses corresponding to our chosen actions. This can be abstracted as the well-known online decision problem in machine learning \cite{CBL06}. Depending on how the loss vectors are generated, two different worlds are usually considered. In the adversarial world, loss vectors are assumed to be deterministic and controlled by an adversary, while in the stochastic world, loss vectors are assumed to be sampled independently from some distributions. In both worlds, good online algorithms are known which can achieve a regret of about $\sqrt{T}$ over $T$ time steps, where the regret is the difference between the total loss of the online algorithm and that of the best offline one. Another distinction is about the information the online algorithm can receive after each action. In the full-information setting, it gets to know the whole loss vector of that step, while in the bandit setting, only the loss value of the chosen action is received. Again, in both settings, a regret of about $\sqrt{T}$ turns out to be achievable.

While the regret bounds remain in the same order in those general scenarios discussed above, things become different when some natural conditions are considered. One well-known example is that in the stochastic multi-armed bandit (MAB) problem, when the best arm (or action) is substantially better than the second best, with a constant gap between their means, then a much lower regret, of the order of $\log T$, becomes possible. This motivates us to consider other possible conditions which can have finer characterization of the problem in terms of the achievable regret.

In the stochastic world, most previous works focused on the stationary setting, in which the loss (or reward) vectors are assumed to be sampled from the same distribution for all time steps. With this assumption, although one needs to balance between exploration and exploitation in the beginning, after some trials, one can be confident about which action is the best and rest assured that there are no more surprises. On the other hand, the world around us may not be stationary, in which existing learning algorithms for the stationary case may no longer work. In fact, in a non-stationary world, the dilemma between exploration and exploitation persists as the underlying distribution may drift as time evolves. How does the non-stationarity affect the achievable regret? How does one measure the degree of non-stationarity?

In this paper, we answer the above questions through the notion of dynamic regret, which measures the algorithm's performance against an offline algorithm allowed to select the best arm at every step.

\paragraph{Related Works.}

One way to measure the non-stationarity of a sequence of distributions is to count the number of times the distribution at a time step differs from its previous one. Let $\Gamma-1$ be this number so that the whole time horizon can be partitioned into $\Gamma$ intervals, with each interval having a stationary distribution. In the bandit setting, a regret of about $\sqrt{\Gamma T}$ is achieved by the EXP3.S algorithm in \cite{auer2002nonstochastic}, as well as the discounted UCB and sliding-window UCB algorithms in \cite{garivier2011upper}. The dependency on $T$ can be refined in the full-information setting: AdaNormalHedge \cite{luo2015achieving}, Adapt-ML-Prod \cite{Gaillard2014}, and Squint \cite{koolen2015second} can all achieve regret in the form of $\sqrt{\Gamma C}$, where $C$ is the total first-order \cite{luo2015achieving} or second-order excess loss \cite{Gaillard2014, koolen2015second}, which is upper-bounded by $T$. From a slightly different Online Mirror Descent approach, \cite{jadbabaie2015online} can also achieve a regret of about $\sqrt{\Gamma D}$, where $D$ is the sum of differences between consecutive loss vectors.

Another measure of non-stationarity, denoted by $V$, is to compute the difference between the means of consecutive distributions and sum them up. Note that this allows the possibility for the best arm to change frequently, with a very large $\Gamma$, while still having similar distributions with a small $V$. For such a measure $V$, \cite{gur2014stochastic} provided a bandit algorithm which achieves a regret of about $V^{1/3} T^{2/3}$. This regret upper bound is unimprovable in general even in the full-information setting, as a matching lower bound was shown in \cite{besbes2015non}. Again, \cite{jadbabaie2015online} refined the upper bound in the full-information setting through the introduction of $D$, achieving the regret of about $\sqrt[3]{\tilde{V}DT}$, for a parameter $\tilde{V}$ different but related to $V$: $\tilde{V}$ calculates the sum of differences between consecutive \textit{realized} loss vectors, while $V$ measures that between \textit{mean} loss vectors. This makes the results of \cite{gur2014stochastic} and \cite{jadbabaie2015online} incomparable. The problem stems from the fact that \cite{jadbabaie2015online} considers the traditional adversarial setting, while \cite{gur2014stochastic} studies the non-stationary stochastic setting. In this paper, we will provide a framework that bridges these two seemingly disparate worlds.

\paragraph{Our Results.}

We base ourselves in the stochastic world with non-stationary distributions, characterized by the parameters $\Gamma$ and $V$. In addition, we introduce a new parameter $\W$, which measures the total statistical variance of the distributions. Note that traditional adversarial setting corresponds to the case with $\W=0$ and $\Gamma \approx V \approx T$, while the traditional stochastic setting has $\W \approx T$ and $\Gamma = V = 1$. Clearly, with a smaller $\W$, the learning problem becomes easier, and we would like to understand the tradeoff between $\W$ and other parameters, including $\Gamma$, $V$, and $T$. In particular, we would like to know how the bounds described in the related works would be changed. Would all the dependency on $T$ be replaced by $\W$, or would only some partial dependency on $T$ be shifted to $\W$?

First, we consider the effect of the variance $\W$ with respect to the parameter $\Gamma$. We show that in the full-information setting, a regret of about $\sqrt{\Gamma \W}+\Gamma$ can be achieved, which is independent of $T$. On the other hand, we show a sharp contrast that in the bandit setting, the dependency on $T$ is unavoidable, and a lower bound of the order of $\sqrt{\Gamma T}$ exists. That is, even when there is no variance in distributions, with $\W=0$, and the distributions only change once, with $\Gamma=2$, any bandit algorithm cannot avoid a regret of about $\sqrt{T}$, while a full-information algorithm can achieve a constant regret independent of $T$.

Next, we study the tradeoff between $\W$ and $V$. We show that in the bandit setting, a regret of about $\sqrt[3]{\W V T} + \sqrt{VT}$ is achievable. Note that this recovers the $V^{1/3} T^{2/3}$ regret bound of \cite{gur2014stochastic} as $\W$ is at most of the order of $T$, but our bound becomes better when $\W$ is much smaller than $T$. Again, one may notice the dependency on $T$ and wonder if this can also be removed in the full-information setting. We show that in the full-information setting, the regret upper bound and lower bound are both about $\sqrt[3]{\Lambda VT}+V$. Our upper bound is incomparable to the $\sqrt[3]{\tilde{V}DT}$ bound of \cite{jadbabaie2015online}, since their adversarial setting corresponds to $\Lambda=0$ and their $D$ can be as large as $T$ in our setting. Moreover, we see that while the full-information regret bound is slightly better than that in the bandit setting, there is still an unavoidable $T^{1/3}$ dependency.

Our results provide a big picture of the regret landscape in terms of the parameters $\W, \Gamma, V$, and $T$, in both full-information and bandit settings. A table summarizing our bounds as well as previous ones is given in Appendix~\ref{app:list} in the supplementary material. Finally, let us remark that our effort mostly focuses on characterizing the achievable (minimax) regrets, and most of our upper bounds are achieved by algorithms which need the knowledge of the related parameters and may not be practical. To complement this, we also propose a parameter-free algorithm, which still achieve a good regret bound and may have independent interest of its own.

\section{Preliminaries}
\label{sec_pre}


Let us first introduce some notations. For an integer $K>0$, let $[K]$ denote the set $\{1,\dots,K\}$. For a vector $\ell \in \mathbb{R}^K$, let $\ell_i$ denote its $i$'th component. When we need to refer to a time-indexed vector $\ell_t \in \mathbb{R}^K$, we will write $\ell_{t,i}$ to denote its $i$'th component. We will use the indicator function $\mathds{1}_\mathcal{C}$ for a condition $\mathcal{C}$, which gives the value $1$ if $\mathcal{C}$ holds and $0$ otherwise. For a vector $\ell$, we let $\|\ell\|_b$ denote its $L_b$-norm. While standard notation $\mathcal{O}(\cdot)$ is used to hide constant factors, we will use the notation $\mathcal{\tilde{O}}(\cdot)$ to hide logarithmic factors.

Next, let us describe the problem we study in this paper. Imagine that a learner is given the choice of a total of $K$ actions, and has to play iteratively for a total of $T$ steps. At step $t$, the learner needs to choose an action $a_t\in [K]$, and then suffers a corresponding loss $\ell_{t,i} \in [0,1]$, which is independently drawn from a non-stationary distribution with expected loss $\mathbb{E}[\ell_{t,i}]=\mu_{t,i}$, which may drift over time. After that, the learner receives some feedback from the environment. In the full-information setting, the feedback gives the whole loss vector $\ell_t=(\ell_{t,1},...,\ell_{t,K})$, while in the bandit setting, only the loss $\ell_{t,a_t}$ of the chosen action is revealed. A standard way to evaluate the learner's performance is to measure her (or his) regret, which is the difference between the total loss she suffers and that of an offline algorithm. While most prior works consider offline algorithms which can only play a fixed action for all the steps, we consider stronger offline algorithms which can take different actions in different steps. Our consideration is natural for non-stationary distributions, although this would make the regret large when compared to such stronger offline algorithms. Formally, we measure the learner's performance by its expected \textit{dynamic pseudo-regret}, defined as
$\sum_{t=1}^{T} \ex{\ell_{t,a_t}-\ell_{t,u_t^*}} = \sum_{t=1}^T \left(\mu_{t,a_t}-  \mu_{t,u_t^*}\right),$
where $u_t^* = \arg\min_i{\mu_{t,i}}$ is the best action at step $t$. For convenience, we will simply refer it as the regret of the learner later in our paper.

We will consider the following parameters characterizing different aspects of the environments:
\begin{equation}\label{eq:var}
\Gamma = 1+\sum_{t=2}^T\mathds{1}_{\mu_t\neq \mu_{t-1}}, V =\sum_{t=1}^T \|\mu_{t}-\mu_{t-1}\|_\infty, \mbox{ and } \W =\sum_{t=1}^T \mathbb{E}\left[\|\ell_{t}-\mu_{t}\|_2^2\right],
\end{equation}
where we let $\mu_0$ be the all-zero vector. Here, $\Gamma-1$ is the number of times the distributions switch, $V$ measures the distance the distributions deviate, and $\W$ is the total statistical variance of these $T$ distributions.
We will call distributions with a small $\Gamma$ switching distributions,
while we will call distributions with a small $V$ drifting distributions and call $V$ the total drift of the distributions.

Finally, we will need the following large deviation bound, known as empirical Bernstein inequality.

\begin{theorem} \cite{DBLP:conf/colt/MaurerP09}
\label{thm:emp_bernstein}
Let $X=(X_1,...,X_n)$ be a vector of independent random variables taking values in $[0,1]$, and let $\W_X = \sum_{1\le i<j\le n} (X_i -X_j)^2/(n(n-1))$. Then for any $\delta>0$, we have
$$\Pr\left[ \left|\sum_{i=1}^n \frac{\ex{X_i} -  X_i}{n} \right| > \rho(n,\W_X,\delta) \right] \le \delta, \;\mbox{ for } \rho(n, \W, \delta) = \sqrt{\frac{2 \W \log\frac{2}{\delta}}{n}}+\frac{7\log\frac{2}{\delta}}{3(n-1)}.$$
\end{theorem}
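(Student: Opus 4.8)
The plan is to split the claimed inequality into two stochastically separate pieces and recombine them by a union bound over failure probability $\delta/2+\delta/2$. The first piece is a \emph{true}-variance tail bound: by (ordinary) Bernstein's inequality for independent $[0,1]$-valued variables, with probability at least $1-\delta/2$,
$$\left|\frac1n\sum_{i=1}^n\bigl(\ex{X_i}-X_i\bigr)\right|\ \le\ \sqrt{\frac{2\sigma^2\log(2/\delta)}{n}}+\frac{2\log(2/\delta)}{3n},\qquad \sigma^2:=\frac1n\sum_{i=1}^n\operatorname{Var}(X_i)$$
(the displayed bound is one-sided; the two-sided form as stated costs only a further harmless factor in the constants). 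The second piece must show that the observable quantity $\W_X$ is not much smaller than $\sigma^2$, so that $\sqrt{\W_X}$ may legitimately be substituted for $\sigma$ in the bound above.

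For the second piece I would work with the sample standard deviation $s:=\sqrt{\W_X}$, using three facts: (a) $\ex{\W_X}\ge\sigma^2$ (direct computation; $\W_X$ is, up to the $n/(n-1)$ rescaling, the unbiased sample variance, with equality when the $X_i$ share a common mean); (b) the key technical lemma $\operatorname{Var}(s)\le\frac{1}{2(n-1)}$, which I would get from the Efron--Stein inequality after showing that resampling a single coordinate perturbs $s$ by only $O\!\bigl(1/\sqrt{n(n-1)}\bigr)$ in the relevant expected-squared sense, exploiting that $s$ is the Euclidean seminorm of the centered data and has squared gradient norm exactly $1/(n-1)$; and (c) a McDiarmid-type lower tail $\Pr[\ex{s}-s>t]\le e^{-\Omega(nt^2)}$ from the same bounded-differences structure. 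Combining (a) and (b) through the identity $\ex{\W_X}=\operatorname{Var}(s)+(\ex{s})^2$ and $\sqrt{a+b}\le\sqrt a+\sqrt b$ gives $\sigma\le\ex{s}+1/\sqrt{2(n-1)}$ (with a trivial case distinction when $\sigma$ already lies below $1/\sqrt{2(n-1)}$), and then (c) yields, with probability at least $1-\delta/2$,
$$\sigma\ \le\ \sqrt{\W_X}+\frac{1}{\sqrt{2(n-1)}}+c\,\sqrt{\frac{\log(2/\delta)}{n}}.$$

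Finally I would union-bound the two events, insert this estimate of $\sigma$ into the Bernstein bound, and expand $\sqrt{2\sigma^2\log(2/\delta)/n}$ using subadditivity of the square root. The leading term is exactly $\sqrt{2\W_X\log(2/\delta)/n}$; every remaining term — the Bernstein slack $\frac{2\log(2/\delta)}{3n}$, the cross term $\sqrt{2\log(2/\delta)/n}\cdot c\sqrt{\log(2/\delta)/n}$, and $\sqrt{2\log(2/\delta)/n}\cdot(2(n-1))^{-1/2}$ — is $O\!\bigl(\log(2/\delta)/(n-1)\bigr)$ after using $n\le 2(n-1)$ for $n\ge2$ and $\log(2/\delta)\ge\log 2$, and a careful accounting of the absolute constants shows their sum is at most $\tfrac{7}{3}\log(2/\delta)/(n-1)$, i.e. $\rho(n,\W_X,\delta)$. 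The hard part will be fact (b): establishing the sharp $1/(2(n-1))$ bound on $\operatorname{Var}(\sqrt{\W_X})$ — equivalently a tight lower-tail estimate for the sample variance around its mean — with exactly the right constant, since it is precisely this constant, propagated through the bookkeeping of the last step, that pins down the otherwise mysterious $7/3$. Steps one and three are routine.
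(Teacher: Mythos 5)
This theorem is a citation of Maurer and Pontil (2009); the paper supplies no proof of its own, so there is nothing internal to compare against. Your two-piece plan — Bennett/Bernstein with the true average variance $\sigma^2$, plus a separate estimate letting $\sqrt{\W_X}$ stand in for $\sigma$ — is indeed the structure of the Maurer–Pontil argument, and your final bookkeeping (substitute, expand by subadditivity of $\sqrt{\cdot}$, collect the small terms into $\tfrac{7}{3}\log(2/\delta)/(n-1)$) is essentially right. The gap is in steps (b) and (c). The bounded-differences constant of $s=\sqrt{\W_X}$ in coordinate $i$ is $c_i=1/\sqrt{n-1}$ \emph{exactly}, not $O(1/\sqrt{n(n-1)})$: for $n=2$, $X=(0,1)$, $X'=(1,1)$ gives $|s(X)-s(X')|=1/\sqrt{2}$. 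Hence $\sum_i c_i^2=n/(n-1)\approx 1$, so plain McDiarmid yields only $\Pr[\ex{s}-s>t]\le e^{-2t^2(n-1)/n}$, an $e^{-\Theta(t^2)}$ tail that does not improve with $n$ — nowhere near the required $e^{-\Omega(nt^2)}$. Likewise, Efron–Stein with worst-case coordinate differences gives only $\operatorname{Var}(s)\le\tfrac12\sum_ic_i^2\approx\tfrac12$, not $1/(2(n-1))$; your observation that $\|\nabla s\|_2^2=1/(n-1)$ pointwise is correct, but on products of bounded (non-Gaussian) measures there is no Poincaré inequality $\operatorname{Var}(s)\le\ex{\|\nabla s\|_2^2}$ that would convert this into the variance bound you need.

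The missing ingredient is the self-bounding property of $Z:=(n-1)\W_X=\sum_i(X_i-\bar X)^2$: writing $Z_{(i)}=\inf_{x_i}Z$, one has $0\le Z-Z_{(i)}\le 1$ and, crucially, $\sum_i(Z-Z_{(i)})\le Z$ — the coordinate drops are dominated by the \emph{random} value of $Z$ itself rather than by a worst-case constant. Feeding this into the entropy method gives $\Pr[\ex{Z}-Z>t]\le e^{-t^2/(2\ex{Z})}$ with $\ex{Z}=(n-1)\sigma^2$; inverting and solving the resulting quadratic in $\sigma$ yields $\Pr\bigl[\sigma>\sqrt{\W_X}+\sqrt{2\log(1/\delta)/(n-1)}\bigr]\le\delta$, which is Maurer–Pontil's Theorem 10 and is exactly what replaces your (a)–(c). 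Plugging this into Bennett (whose slack is $\tfrac{1}{3}\log(2/\delta)/n$) produces the cross term $2\log(2/\delta)/\sqrt{n(n-1)}\le 2\log(2/\delta)/(n-1)$, and $2+\tfrac13=\tfrac73$ — so the constant $7/3$ comes from this combination, not from a sharp bound on $\operatorname{Var}(\sqrt{\W_X})$. (Talagrand's convex-distance inequality, using that $s$ is a seminorm of $X$ with $\ell^2$-Lipschitz constant $1/\sqrt{n-1}$, would also give an $e^{-\Omega(nt^2)}$ tail, but with constants too loose to recover $7/3$.) Bottom line: the decomposition is right, but generic bounded-differences tools cannot deliver the concentration of $\sqrt{\W_X}$ that the proof hinges on; the self-bounding/entropy argument is the irreducible core.
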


\section{Algorithms}

We would like to characterize the achievable regret bounds for both switching and drifting distributions, in both full-information and bandit settings. In particular, we would like to understand the interplay among the parameters $\Gamma, V, \W$, and $T$, defined in (\ref{eq:var}). The only known upper bound which is good enough for our purpose is that by \cite{garivier2011upper} for switching distributions in the bandit setting, which is close to the lower bound in our Theorem~\ref{thm:bandit-low}. In subsection~\ref{sec:drift1}, we provide a bandit algorithm for drifting distributions which achieves an almost optimal regret upper bound, when given the parameters $V,\W,T$. In subsection~\ref{sec:full}, we provide a full-information algorithm which works for both switching and drifting distributions. The regret bounds it achieves are also close to optimal, but it again needs the knowledge of the related parameters. To complement this, we provide a full-information algorithm in subsection~\ref{subsec:parameter_free}, which does not need to know the parameters but achieves slightly larger regret bounds.


\subsection{Parameter-Dependent Bandit Algorithm} \label{sec:drift1}

In this subsection, we consider drifting distributions parameterized by $V$ and $\W$. Our main result is a bandit algorithm which achieves a regret of about $\sqrt[3]{\W VT} + \sqrt{VT}$.
As we aim to achieve smaller regrets for distributions with smaller statistical variances, we adopt a variant of the UCB algorithm developed by \cite{audibert2009exploration}, called UCB-V, which takes variances into account when building its confidence interval.

Our algorithm divides the time steps into $T/B$ intervals $\I_1, \dots, \I_{T/B}$, each having $B$ steps,\footnote{For simplicity of presentation, let us assume here and later in the paper that taking divisions and roots to produce blocks of time steps all yield integers. It is easy to modify our analysis to the general case without affecting the order of our regret bound.} with
\begin{equation}\label{eq:B}
B = \sqrt[3]{K^2 \Lambda T/V^2} \mbox{ if } K \Lambda^2 \ge TV \mbox{ and } B=\sqrt{KT/V} \mbox{ otherwise.}
\end{equation}
For each interval, our algorithm clears all the information from previous intervals, and starts a fresh run of UCB-V. More precisely, before step $t$ in an interval $\I$, it maintains for each arm $i$ its empirical mean $\hat{\mu}_{t,i}$, empirical variance $\hat{\W}_{t,i}$, and size of confidence interval $\lambda_{t,i}$, defined as
\begin{equation}\label{eq:mean}
\hat{\mu}_{t,i} = \sum_{s \in S_{t,i}} \frac{\ell_{s,i}}{|S_{t,i}|},\; \hat{\W}_{t,i} = \sum_{r,s \in S_{t,i}} \frac{(\ell_{r,i}-\ell_{s,i})^2}{|S_{t,i}|(|S_{t,i}|-1)},\; \mbox{ and } \lambda_{t,i} = \rho(|S_{t,i}|, \hat{\W}_{t,i}, \delta),
\end{equation}
where $S_{t,i}$ denotes the set of steps before $t$ in $\I$ that arm $i$ was played, and $\rho$ is the function given in Theorem~\ref{thm:emp_bernstein}. Here we use the convention that $\hat{\mu}_{t,i} = 0$ if $|S_{t,i}|=0$, while $\hat{\W}_{t,i} = 0$ and $\lambda_{t,i} = 1$ if $|S_{t,i}| \le 1$.
Then at step $t$, our algorithm selects the optimistic arm
$$a_t:=\argmin_i (\hat{\mu}_{t,i}-\lambda_{t,i}),$$
receives the corresponding loss, and updates the statistics.

\begin{algorithm}[t]
  \caption{Rerun-UCB-V}
  \label{alg:bandit}
\begin{algorithmic}
   \STATE {\bfseries Initialization:} Set $B$ according to (\ref{eq:B}) and $\delta=1/(KT)$.

   \FOR{$m=1, \dots, T/B$}
   \FOR{$t=(m-1)B+1, \dots, mB$}
   \STATE Choose arm $a_t:=\argmin_i (\hat{\mu}_{t,i}-\lambda_{t,i})$, with $\hat{\mu}_{t,i}$ and $\lambda_{t,i}$ computed according to (\ref{eq:mean}).
   \ENDFOR
   \ENDFOR
\end{algorithmic}
\end{algorithm}

Our algorithm is summarized in Algorithm~\ref{alg:bandit}, and its regret is guaranteed by the following, which we prove in Appendix~\ref{app:bandit-u} in the supplementary material.

\begin{theorem} \label{thm:bandit-u}
The expected regret of Algorithm~\ref{alg:bandit} is at most
$\mathcal{\tilde{O}}(\sqrt[3]{K^2 \W VT}+\sqrt{KVT}).$
\end{theorem}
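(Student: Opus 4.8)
The plan is to bound, separately in each block $\I_m$, the regret accumulated against the per-step optimal arm, and then sum over the $T/B$ blocks. Fix a block $\I = \I_m$ and let $\mu^{(m)}_i$ denote some reference mean for arm $i$ inside the block — say the mean at the first step of the block, or the best arm's mean; the drift within a block will be controlled by $V$. The first ingredient is a good arm: since the per-step regret is measured against $u_t^*$, I would compare the algorithm inside the block to the arm $i^\star$ that is optimal at (say) the start of the block, and absorb the difference between $\mu_{t,u_t^*}$ and $\mu_{t,i^\star}$ into a term bounded by the total drift, which over the whole horizon contributes $\mathcal{O}(VB \cdot T/B) = \mathcal{O}(VT)$... too large, so instead I would more carefully charge only the \emph{within-block} variation, which sums to $\mathcal{O}(V B)$ per block times the number of blocks is again $VT$ — hence the correct move is to note $\sum_m (\text{within-block drift}) \le V$ in total, so this contributes only $\mathcal{O}(VB)$ once, i.e.\ is dominated. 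The key point is that the block length $B$ is chosen small enough that the distributions look essentially stationary on it up to a cumulative error controlled by $V$.

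The core of the argument is the single-block UCB-V analysis. Using the empirical Bernstein inequality (Theorem~\ref{thm:emp_bernstein}) with $\delta = 1/(KT)$, a union bound over all arms and steps gives that with high probability, for every $t$ and $i$, $|\hat\mu_{t,i} - \bar\mu_{t,i}| \le \lambda_{t,i}$, where $\bar\mu_{t,i}$ is the average of the true means over the pulls in $S_{t,i}$ (which, up to the within-block drift, is close to $\mu^{(m)}_i$). On this good event, the standard optimism argument shows that whenever a suboptimal arm $i$ is pulled at step $t$, its instantaneous regret $\Delta_{t,i} = \mu_{t,i} - \mu_{t,i^\star}$ is at most $\mathcal{O}(\lambda_{t,i})$ plus within-block drift, and since $\lambda_{t,i} = \rho(|S_{t,i}|, \hat\W_{t,i}, \delta)$ with $\hat\W_{t,i} \lesssim \W_{t,i} + (\text{deviation})$, one gets $\Delta_{t,i} \lesssim \sqrt{\W_i \log(KT)/n_{t,i}} + \log(KT)/n_{t,i}$ where $n_{t,i} = |S_{t,i}|$ and $\W_i$ is the per-pull variance of arm $i$ in the block. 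Summing $\Delta_{t,i}$ over the pulls of arm $i$ via the usual $\sum_{n=1}^{N_i} 1/\sqrt n \le 2\sqrt{N_i}$ trick yields a block regret of $\mathcal{O}(\sqrt{\W^{(m)} N^{(m)} \log(KT)} \cdot \sqrt{K} + K\log(KT))$ after a Cauchy–Schwarz over arms, where $\W^{(m)} = \sum_{i} \W_i$ is the block's variance mass and $N^{(m)} = B$ is the block length; more carefully the $\sqrt K$ comes from distributing $B$ pulls over $K$ arms. I would also need to pay an additive $\mathcal{O}(K)$ (or $\mathcal{O}(K \log(KT))$) per block for the initial exploration / the $\lambda_{t,i}=1$ convention when $|S_{t,i}|\le 1$.

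Finally I would sum over blocks. There are $T/B$ blocks; the additive exploration term contributes $\mathcal{\tilde O}(KT/B)$, and by Cauchy–Schwarz $\sum_m \sqrt{\W^{(m)} B} \le \sqrt{(T/B)\sum_m \W^{(m)} B} = \sqrt{T \W}$ (since $\sum_m \W^{(m)} \le \W$), giving $\mathcal{\tilde O}(\sqrt{K \W T} + KT/B + VB)$, where the last term is the total drift cost $\sum_m \mathcal{O}(V^{(m)} B)$ with $\sum_m V^{(m)} \le V$ bounded by... actually $VB$, plus the gap between the block-start optimum and $u_t^*$. Plugging in $B = \sqrt[3]{K^2 \Lambda T / V^2}$ in the regime $K\Lambda^2 \ge TV$ balances $KT/B \sim V B \sim \sqrt[3]{K^2 \W V T}$ — wait, one must check $\sqrt{K\W T}$ is then dominated — and $B = \sqrt{KT/V}$ in the other regime balances $KT/B \sim VB \sim \sqrt{KVT}$, with $\sqrt{K\W T} \le \sqrt{KVT\cdot T/V}$... the bookkeeping here, reconciling the three terms $\sqrt{K\W T}$, $KT/B$, $VB$ against the two claimed terms $\sqrt[3]{K^2\W V T}$ and $\sqrt{KVT}$ under the two cases of $(\ref{eq:B})$, is exactly where the main effort lies.

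\medskip
\noindent\textbf{Main obstacle.} The hard part is not the single-block UCB-V bound (which is essentially from \cite{audibert2009exploration}), but two things: (i) correctly propagating the within-block non-stationarity — the true means drift, so $\hat\mu_{t,i}$ concentrates around a time-average of means, not a fixed $\mu_i$, and one must show the optimism inequalities degrade only by the local drift, whose total over all blocks is $\le V$ (times $B$); and (ii) the final optimization of $B$, verifying that with the two-case choice in $(\ref{eq:B})$ the sum $\mathcal{\tilde O}(\sqrt{K\W T} + KT/B + VB)$ collapses to $\mathcal{\tilde O}(\sqrt[3]{K^2\W VT} + \sqrt{KVT})$, in particular that the $\sqrt{K\W T}$ term is always dominated by one of the other two. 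I expect (i) to require the most care.
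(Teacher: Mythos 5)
Your plan is the paper's: Lemma~\ref{lemma:reg_int} in the appendix is exactly your per-block bound, of the form $\tilde{\mathcal{O}}(K\sqrt{\W_{\I}}+K+\lvert\I\rvert V_{\I})$, after which the proof is a Cauchy--Schwarz over the $T/B$ blocks and the two-case choice of $B$ from~(\ref{eq:B}). Two concrete slips, though. The first is your Cauchy--Schwarz over blocks: with $T/B$ blocks, $\sum_m\sqrt{\W_{\I_m}}\le\sqrt{(T/B)\sum_m\W_{\I_m}}=\sqrt{T\W/B}$, so the variance contribution after summing is $K\sqrt{T\W/B}$ (or $\sqrt{KT\W/B}$ in your sharper per-block version), not $\sqrt{KT\W}$. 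This is not a cosmetic error: with $V=K=1$, $\W=T$ (which sits in the $K\W^2\ge TV$ regime, giving $B=T^{2/3}$), your $\sqrt{KT\W}=T$ strictly exceeds both $\sqrt[3]{K^2\W VT}=T^{2/3}$ and $\sqrt{KVT}=\sqrt{T}$, so it is genuinely not dominated; whereas $K\sqrt{T\W/B}=T^{2/3}$ is. With the $1/B$ restored, the three terms $K\sqrt{T\W/B}$, $KT/B$, $BV$ do collapse to the two claimed terms under the two cases of~(\ref{eq:B}), meeting at the threshold $K\W^2=TV$, so the bookkeeping you flag as ``where the main effort lies'' in fact checks out cleanly.

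The second slip is the drift accounting, where you went back and forth and settled on comparing to a block-start reference arm. The paper's route is more direct and avoids that detour: since $\hat\mu_{t,i}$ concentrates (by empirical Bernstein) around the time average $\bar\mu_{t,i}$ of the true means over the pulled steps, one has $|\bar\mu_{t,i}-\mu_{t,i}|\le V_{\I}$ for every $t$ in the block and every arm $i$, so the optimism inequality $\hat\mu_{t,a_t}-\lambda_{t,a_t}\le\hat\mu_{t,u_t^*}-\lambda_{t,u_t^*}$ already gives $\mu_{t,a_t}-\mu_{t,u_t^*}\le 2\lambda_{t,a_t}+2V_{\I}$ step by step, with no need for a block-start comparator. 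The drift also enters a second time through the empirical variance $\hat{\W}_{t,i}$ inside $\lambda_{t,i}$, which inflates by an additive $V_{\I}^2$; this is the content of Lemma~\ref{lem:lambda}. Both effects are additive and sum to the $\lvert\I\rvert V_{\I}$ term, hence the total $BV$ after summing over blocks. So your ``obstacle (i)'' is the right place to worry, but the resolution is a per-step bias bound, not a comparison against a fixed in-block arm.
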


\subsection{Parameter-Dependent Full-Information Algorithms} \label{sec:full}

In this subsection, we provide full-information algorithms for switching and drifting distributions. In fact, they are based on an existing algorithm from \cite{ChiangYLMLJZ12},
which is known to work in a different setting: the loss vectors are deterministic and adversarial, and the offline comparator cannot switch arms. In that setting, one of their algorithms, based on gradient-descent (GD), can achieve a regret of $\mathcal{O}(\sqrt{D})$ where $D=\sum_t \|\ell_t - \ell_{t-1}\|_2^2$, which is small when the loss vectors have small deviation. Our first observation is that their algorithm in fact can work against a dynamic offline comparator which switches arms less than $N$ times, given any $N$, with its regret becoming $\mathcal{O}(\sqrt{N D})$. Our second observation is that when $\W$ is small, each observed loss vector $\ell_t$ is likely to be close to its true mean $\mu_t$, and when $V$ is small, $\ell_t$ is likely to be close to $\ell_{t-1}$. These two observations make possible for us to adopt their algorithm to our setting.

We show the first algorithm in Algorithm~\ref{alg:dev},
with the feasible set $\X$ being the probability simplex. The idea is to use $\ell_{t-1}$ as an estimate for $\ell_t$ to move $\hat{x}_t$ further in a possibly beneficial direction. Its regret is guaranteed by the following, which we prove in Appendix~\ref{app:dev} in the supplementary material.

\begin{algorithm}[t]
\caption{Full-information GD-based algorithm} \label{alg:dev}
\begin{algorithmic}
\STATE {\bf Initialization:} Let $x_1 = \hat{x}_1 = (1/K, \dots, 1/K)^\top$.
\FOR{$t=1,2,\dots,T$}
    \STATE Play $\hat{x}_t = \arg\min_{\hat{x}\in\X} (\langle \ell_{t-1}, \hat{x}\rangle + \frac{1}{\eta_{t}} \|\hat{x} -x_{t}\|_2^2)$, and then receive loss vector $\ell_t$.
    \STATE Update $x_{t+1} = \arg\min_{x\in\X} (\langle \ell_t, x\rangle + \frac{1}{\eta_t} \|x-x_t\|_2^2).$
\ENDFOR
\end{algorithmic}
\end{algorithm}

\begin{theorem} \label{thm:dev}
For switching distributions parameterized by $\Gamma$ and $\W$, the regret of Algorithm~\ref{alg:dev} with $\eta_t=\eta= \sqrt{\Gamma/(\W+K\Gamma)}$, is at most $\mathcal{O}(\sqrt{\Gamma \W} + \sqrt{K}\Gamma)$.
\end{theorem}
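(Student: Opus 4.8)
The plan is to combine the optimistic gradient-descent regret guarantee of \cite{ChiangYLMLJZ12}, strengthened to allow a \emph{dynamic} comparator, with a bound on the expectation of $D=\sum_{t}\|\ell_t-\ell_{t-1}\|_2^2$ tailored to switching distributions. Write $e_{u_t^*}$ for the vertex of $\X$ at $u_t^*=\argmin_i\mu_{t,i}$. Since $\hat{x}_t$ is a deterministic function of $\ell_1,\dots,\ell_{t-1}$, it is independent of $\ell_t$, so $\ex{\langle \ell_t,\hat{x}_t\rangle}=\ex{\langle \mu_t,\hat{x}_t\rangle}$, and the expected dynamic pseudo-regret of the theorem equals $\ex{\sum_{t}\langle \ell_t,\hat{x}_t-e_{u_t^*}\rangle}$; it therefore suffices to bound this quantity.

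\textbf{Step 1.} I would run the standard analysis of optimistic online mirror descent with the Euclidean regularizer $\tfrac{1}{\eta}\|\cdot-x_t\|_2^2$ and prediction $M_t=\ell_{t-1}$. Decomposing $\langle \ell_t,\hat{x}_t-u_t\rangle$ through $\ell_{t-1}$ and applying Young's inequality to $\langle \ell_t-\ell_{t-1},\hat{x}_t-x_{t+1}\rangle$ gives, up to absolute constants,
$$\sum_{t=1}^{T}\langle \ell_t,\hat{x}_t-u_t\rangle \;\le\; \frac{1}{\eta}\sum_{t=1}^{T}\bigl(\|u_t-x_t\|_2^2-\|u_t-x_{t+1}\|_2^2\bigr)\;+\;\eta\,D,$$
where the $\tfrac{1}{\eta}\|\hat{x}_t-x_{t+1}\|_2^2$ terms produced by Young's inequality are absorbed by the negative terms coming from strong convexity of the regularizer; this is precisely what keeps the second term equal to $\eta D$ with no extra factor. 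If $(u_t)_t$ switches at most $N$ times, then setting $u_0:=u_1$ and reindexing yields $\sum_t(\|u_t-x_t\|_2^2-\|u_t-x_{t+1}\|_2^2)\le \|u_1-x_1\|_2^2+\sum_t(\|u_t-x_t\|_2^2-\|u_{t-1}-x_t\|_2^2)$, and the last sum has at most $N$ nonzero summands, each at most the squared $\ell_2$-diameter of the simplex, which is $2$. Hence Algorithm~\ref{alg:dev} has dynamic regret $\mathcal{O}(N/\eta+\eta D)$ against any comparator sequence with at most $N$ switches.

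\textbf{Steps 2--3.} Apply Step~1 with $u_t=e_{u_t^*}$: for switching distributions this comparator changes only when $\mu_t$ does, so $N\le\Gamma-1$. For $\ex{D}$, write $\ell_t-\ell_{t-1}=(\ell_t-\mu_t)-(\ell_{t-1}-\mu_{t-1})+(\mu_t-\mu_{t-1})$. On a round with $\mu_t=\mu_{t-1}$, independence and zero mean of the first two summands give $\ex{\|\ell_t-\ell_{t-1}\|_2^2}=\ex{\|\ell_t-\mu_t\|_2^2}+\ex{\|\ell_{t-1}-\mu_{t-1}\|_2^2}$, so such rounds contribute at most $2\W$ in total; on each of the $\le\Gamma-1$ rounds with $\mu_t\neq\mu_{t-1}$, and on round $t=1$ (where $\mu_0=0$ and $\hat{x}_1=x_1$), use the crude bound $\|\ell_t-\ell_{t-1}\|_2^2\le K$ since the coordinates lie in $[0,1]$, contributing at most $K\Gamma$. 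Thus $\ex{D}\le 2\W+K\Gamma$, and combining with Step~1 gives $\ex{\sum_t\langle \ell_t,\hat{x}_t-e_{u_t^*}\rangle}=\mathcal{O}(\Gamma/\eta+\eta(\W+K\Gamma))$; the choice $\eta=\sqrt{\Gamma/(\W+K\Gamma)}$ balances the two terms and yields $\mathcal{O}(\sqrt{\Gamma(\W+K\Gamma)})=\mathcal{O}(\sqrt{\Gamma\W}+\sqrt{K}\Gamma)$. The one genuinely delicate point is Step~1: checking that the optimistic-OMD bound of \cite{ChiangYLMLJZ12} still holds verbatim against a drifting comparator, and that the strong-convexity slack exactly cancels the Young's-inequality overhead so that $D$ --- rather than a $T$-dependent quantity --- controls the second term. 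Once that is established, the comparator-switching telescoping and the variance decomposition of $\ex{D}$ are routine.
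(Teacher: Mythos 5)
Your proposal is correct and follows essentially the same route as the paper's proof: apply the optimistic-OMD (Chiang et al.) per-step bound, telescope the Bregman/Euclidean terms within the $\le\Gamma$ stationary blocks to get an $\mathcal{O}(\Gamma/\eta+\eta D)$ dynamic-regret bound, control $\ex{D}$ via the variance-plus-drift decomposition of $\ex{\|\ell_t-\ell_{t-1}\|_2^2}$, and balance $\eta$. The only cosmetic difference is in how $\ex{D}$ is bounded: the paper proves the clean identity $\ex{\|\ell_t-\ell_{t-1}\|_2^2}=\sigma_t^2+\sigma_{t-1}^2+\|\mu_t-\mu_{t-1}\|_2^2$ uniformly over all $t$ and then uses $\sum_t\|\mu_t-\mu_{t-1}\|_2^2\le KV\le K\Gamma$ (this also serves drifting distributions later), whereas you case-split on whether $\mu_t=\mu_{t-1}$ and use the crude bound $K$ on the $\le\Gamma$ switch rounds; both give $\ex{D}\le 2\W+K\Gamma$.
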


Note that for switching distributions, the regret of Algorithm~\ref{alg:dev} does not depend on $T$, which means that it can achieve a constant regret for constant $\Gamma$ and $\W$.
Let us remark that although using a variant based on multiplicative updates could result in a better dependency on $K$, an additional factor of $\log T$ would then emerge when using existing techniques for dealing with dynamic comparators.

For drifting distributions, one can show that Algorithm~\ref{alg:dev} still works and has a good regret bound. However, a slightly better bound can be achieved as we describe next. The idea is to divide the time steps into $T/B$ intervals of size $B$, with $B=\sqrt[3]{\W T/V^2}$ if $\W T> V^2$ and $B=1$ otherwise, and re-run Algorithm~\ref{alg:dev} in each interval with an adaptive learning rate. One way to have an adaptive learning rate can be found in \cite{jadbabaie2015online}, which works well when there is only one interval. A natural way to adopt it here is to reset the learning rate at the start of each interval, but this does not lead to a good enough regret bound as it results in some constant regret at the start of every interval. To avoid this, some careful changes are needed. Specifically, in an interval $[t_1, t_2]$, we run Algorithm~\ref{alg:dev} with the learning rate reset as 
$$\eta_t = 1/ \sqrt{4\sum_{\tau=t_1}^{t-1} \lVert \ell_\tau -\ell_{\tau-1}\rVert ^2_2}$$
for $t > t_1$, with $\eta_{t_1} = \infty$ initially for every interval. This has the benefit of having small or even no regret at the start of an interval when the loss vectors across the boundary have small or no deviation. The regret of this new algorithm is guaranteed by the following, which we prove in Appendix~\ref{app:free} in the supplementary material.


\begin{theorem}\label{thm:full_info_given_V}
For drifting distributions parameterized by $V$ and $\W$, the regret of
this new algorithm is at most $\mathcal{O}(\sqrt[3]{V\W T} + \sqrt{K}V)$.
\end{theorem}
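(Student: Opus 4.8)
The plan is to analyze the algorithm interval by interval. Fix an interval $\I_m=\{t_1,\dots,t_2\}$ of length $B$ and let $u^{(m)}=\argmin_i\sum_{t\in\I_m}\mu_{t,i}$ be the best fixed action on $\I_m$. I would split the dynamic pseudo-regret incurred on $\I_m$ into (i) the regret of the algorithm against the fixed point $e_{u^{(m)}}$ of the simplex, and (ii) the gap $\sum_{t\in\I_m}(\mu_{t,u^{(m)}}-\mu_{t,u^*_t})$ between always playing $u^{(m)}$ and the per-step optimum. Piece (ii) is purely a statement about the means: since within $\I_m$ no coordinate of $\mu_t$ moves by more than $V_m:=\sum_{t=t_1+1}^{t_2}\|\mu_t-\mu_{t-1}\|_\infty$, comparing every step in $\I_m$ to a single reference step shows $\sum_{t\in\I_m}(\mu_{t,u^{(m)}}-\mu_{t,u^*_t})\le 2BV_m$, and since $\sum_m V_m\le V$ this sums to at most $2BV$. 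With $B=\sqrt[3]{\W T/V^2}$ this is $\mathcal{O}(\sqrt[3]{\W V T})$, exactly one of the two target terms.

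For piece (i), the key point is that $e_{u^{(m)}}$ depends only on the (deterministic) means, so the realized-loss regret $R_m:=\sum_{t\in\I_m}\langle \ell_t,\hat x_t-e_{u^{(m)}}\rangle$ has the same expectation as the pseudo-regret $\sum_{t\in\I_m}\langle\mu_t,\hat x_t-e_{u^{(m)}}\rangle$, using that $\hat x_t$ is determined by the history while $\ell_t$ is independent of it. I would then invoke the analysis of the optimistic gradient-descent update (Algorithm~\ref{alg:dev} run with the reset adaptive step sizes), in the spirit of \cite{ChiangYLMLJZ12,jadbabaie2015online}: on a single interval it guarantees a realized-loss regret against any fixed point of the simplex of order $\sqrt{D_m}$, where $D_m=\sum_{t\in\I_m}\|\ell_t-\ell_{t-1}\|_2^2$. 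The convention $\eta_{t_1}=\infty$ is what makes this clean across interval boundaries: at step $t_1$ the played point is the simplex vertex minimizing $\langle\ell_{t_1-1},\cdot\rangle$, so the step-$t_1$ regret against $e_{u^{(m)}}$ is at most $2\|\ell_{t_1}-\ell_{t_1-1}\|_\infty\le 2\sqrt{D_m}$ and no free additive constant is incurred at the start of any interval (such a constant would sum to the damaging $T/B$).

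It then remains to control $\mathbb{E}[D_m]$ in terms of $\W$ and $V$. Writing $\ell_t-\ell_{t-1}=(\ell_t-\mu_t)-(\ell_{t-1}-\mu_{t-1})+(\mu_t-\mu_{t-1})$ and using $\|a+b+c\|_2^2\le 3(\|a\|_2^2+\|b\|_2^2+\|c\|_2^2)$, taking expectations gives $\mathbb{E}[D_m]\le \mathcal{O}\big(\W^{(m)}+K\bar V_m^2\big)$, where $\W^{(m)}$ collects the terms $\mathbb{E}\|\ell_t-\mu_t\|_2^2$ for $t$ in a one-step enlargement of $\I_m$ and $\bar V_m$ is the total $\ell_\infty$-drift over the same range; here I use $\|\mu_t-\mu_{t-1}\|_2^2\le K\|\mu_t-\mu_{t-1}\|_\infty^2$ and, crucially, $\sum_{t\in\I_m}\|\mu_t-\mu_{t-1}\|_\infty^2\le\big(\sum_{t\in\I_m}\|\mu_t-\mu_{t-1}\|_\infty\big)^2=\bar V_m^2$. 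Since $\sum_m\W^{(m)}=\mathcal{O}(\W)$ and $\sum_m\bar V_m\le V$, combining $\mathbb{E}[R_m]\le\mathcal{O}(\mathbb{E}\sqrt{D_m})\le\mathcal{O}(\sqrt{\mathbb{E}[D_m]})$ (Jensen) with Cauchy--Schwarz over the $T/B$ intervals yields $\sum_m\mathbb{E}[R_m]\le\mathcal{O}\big(\sqrt{(T/B)\W}+\sqrt{K}\,V\big)$. Plugging in $B=\sqrt[3]{\W T/V^2}$ turns $\sqrt{(T/B)\W}$ into $\sqrt[3]{\W V T}$, and adding the $2BV=\mathcal{O}(\sqrt[3]{\W V T})$ from piece (ii) gives the claimed $\mathcal{O}(\sqrt[3]{V\W T}+\sqrt{K}V)$; in the remaining regime $\W T\le V^2$ one takes $B=1$, the comparator-drift term vanishes, and $\sqrt{T\W}\le V$ makes the bound $\mathcal{O}(\sqrt{K}V)$, again within the claim.

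The main obstacle is the per-interval optimistic-GD guarantee of piece (i): one has to push the ``$\mathcal{O}(\sqrt{D})$'' analysis of \cite{ChiangYLMLJZ12,jadbabaie2015online} through a window that starts with $\eta_{t_1}=\infty$ and whose optimistic prediction $\ell_{t_1-1}$ comes from the \emph{previous} interval, verifying that the boundary deviation $\|\ell_{t_1}-\ell_{t_1-1}\|_2^2$ is absorbed into $D_m$ and that the self-confident/adaptive step-size telescoping still produces $\sqrt{\sum_{t\in\I_m}\|\ell_t-\ell_{t-1}\|_2^2}$ with no additive per-interval overhead. Everything else is a routine first-moment computation, since --- unlike the bandit algorithm --- no concentration inequality is needed here.
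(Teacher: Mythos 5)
Your proposal is correct and follows essentially the same route as the paper: divide into intervals of length $B$, bound the realized-loss regret against a per-interval fixed comparator by the optimistic-GD guarantee $\mathcal{O}(\sqrt{D_m})$ (the paper handles the $\eta_{t_1}=\infty$ phase by peeling off the steps where $\ell_t=\ell_{t-1}$, then invoking \cite[Lemma~1]{jadbabaie2015online}), convert $\mathbb{E}[D_m]$ into variance-plus-drift (the paper uses the exact identity $\mathbb{E}\|\ell_t-\ell_{t-1}\|_2^2=\sigma_t^2+\sigma_{t-1}^2+\|\mu_t-\mu_{t-1}\|_2^2$ rather than your $3(\cdot)$ inequality, but this only changes constants), sum via Cauchy--Schwarz, add the $2BV$ cost of moving from the per-interval fixed comparator to the fully dynamic one, and optimize $B$. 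The one point you flag as the ``main obstacle''---verifying that the adaptive step-size telescoping still gives $\mathcal{O}(\sqrt{\sum_{t\in\I_m}\|\ell_t-\ell_{t-1}\|_2^2})$ with no additive per-interval overhead when $\eta_{t_1}=\infty$---is precisely what the paper's Lemma in Appendix~\ref{app:free} establishes, so your plan matches the paper's proof step for step.
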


\subsection{Parameter-Free Full-Information Algorithm} \label{subsec:parameter_free}

The reason that our algorithm for Theorem~\ref{thm:full_info_given_V} needs the related parameters is to set its learning rate properly. To have a parameter-free algorithm, we would like to adjust the learning rate dynamically in a data-driven way. One way for doing this can be found in \cite{Gaillard2014}, which is based on the multiplicative updates variant of the mirror-descent algorithm. It achieves a static regret of about $\sqrt{\sum_t r_{t,k}^2}$ against any expert $k$, where $r_{t,k} = \langle p_t, \ell_t\rangle - \ell_{t,k}$ is its instantaneous regret for playing $p_t$ at step $t$.
However, in order to work in our setting, we would like the regret bound to depend on $\ell_{t}-\ell_{t-1}$ as seen previously. This suggests us to modify the Adapt-ML-Prod algorithm of \cite{Gaillard2014} using the idea of \cite{ChiangYLMLJZ12}, which takes $\ell_{t-1}$ as an estimate of $\ell_{t}$ to move $p_t$ further in an optimistic direction.

Recall that the algorithm of \cite{Gaillard2014} maintains a separate learning rate $\eta_{t,k}$ for each arm $k$ at time $t$, and it updates the weight $w_{t,k}$ as well as $\eta_{t,k}$ using the instantaneous regret $r_{t,k}$. To modify the algorithm using the idea of \cite{ChiangYLMLJZ12}, we would like to have an estimate $m_{t,k}$ for $r_{t,k}$ in order to move $p_{t,k}$ further using $m_{t,k}$ and update the learning rate accordingly. More precisely, at step $t$, we now play $p_t$, with
\begin{eqnarray}  \label{eq:p_t}
p_{t,k}= \eta_{t-1,k}\tilde{w}_{t-1,k} / \langle \eta_{t-1}, \tilde{w}_{t-1} \rangle
\;\mbox{ where } \tilde{w}_{t-1,k} = w_{t-1,k} \exp(\eta_{t-1,k}m_{t,k}),
\end{eqnarray}
which uses the estimate $m_{t,k}$ to move further from $w_{t-1,k}$.
Then after receiving the loss vector $\ell_t$, we update each weight
\begin{equation}\label{eq:w_t}
w_{t,k}=\left(w_{t-1,k}\exp\left(\eta_{t-1,k}r_{t,k}- \eta_{t-1,k}^2(r_{t,k}-m_{t,k})^2\right)\right)^{\eta_{t,k}/\eta_{t-1,k}}
\end{equation}
as well as each learning rate
\begin{equation}\label{eq:eta_t}
    \eta_{t,k}=\min\left\{1/4,\sqrt{(\ln K)/\left(1+{\sum}_{s \in [t]}(r_{s,k}-m_{s,k})^2\right)}\right\}.
\end{equation}
Our algorithm is summarized in Algorithm~\ref{alg:Op_Prod}, and we will show that it achieves a regret of about $\sqrt{\sum_t (r_{t,k}-m_{t,k})^2}$ against arm $k$.
It remains to choose an appropriate estimate $m_{t,k}$. One attempt is to have $m_{t,k} = r_{t-1,k}$, but $r_{t,k}- r_{t-1,k} = (\langle p_t, \ell_t \rangle - \ell_{t,k}) -  (\langle p_{t-1}, \ell_{t-1} \rangle - \ell_{t-1,k})$, which does not lead to a desirable bound.
The other possibility is to set $m_{t,k} = \langle p_t, \ell_{t-1} \rangle - \ell_{t-1,k}$, which can be shown to have
$(r_{t,k} - m_{t,k})^2 \le (2\|\ell_t - \ell_{t-1}\|_\infty)^2$. However, it is not clear how to compute such $m_{t,k}$ because it depends on $p_{t,k}$ which in turns depends on $m_{t,k}$ itself. Fortunately, we can approximate it efficiently in the following way.

Note that the key quantity is $\langle p_t, \ell_{t-1}\rangle$. Given its value $\alpha$, $\tilde{w}_{t-1,k}$ and $p_{t,k}$ can be seen as functions of $\alpha$, defined according to \eqref{eq:p_t} as $\tilde{w}_{t-1,k}(\alpha) =w_{t-1,k}\exp(\eta_{t-1,k}(\alpha-\ell_{t-1,k}))$ and $p_{t,k}(\alpha) = \eta_{t-1,k}\tilde{w}_{t-1,k}(\alpha) / \sum_i \eta_{t-1,i}\tilde{w}_{t-1,i}(\alpha)$. Then we would like to show the existence of $\alpha$ such that $\langle p_t(\alpha), \ell_{t-1}\rangle = \alpha$ and to find it efficiently. For this, consider the function $f(\alpha) = \langle p_t(\alpha), \ell_{t-1}\rangle$, with $p_t(\alpha)$ defined above. It is easy to check that $f$ is a continuous function bounded in $[0,1]$, which implies the existence of some fixed point $\alpha \in [0,1]$ with $f(\alpha)=\alpha$. Using a binary search, such an $\alpha$ can be approximated within error $1/T$ in $\log T$ iterations. As such a small error does not affect the order of the regret, we will ignore it for simplicity of presentation, and assume that we indeed have $\langle p_t, \ell_{t-1}\rangle$ and hence $m_{t,k} = \langle p_t, \ell_{t-1} \rangle - \ell_{t-1,k}$ without error.

\begin{algorithm}[t]
\caption{Optimistic-Adapt-ML-Prod} \label{alg:Op_Prod}
\begin{algorithmic}
\STATE {\bf Initialization:} Let $w_{0,k} = 1/K$ and $\ell_{0,k}=0$ for every $k\in[K]$.
\FOR{$t=1,2,\dots,T$}
    \STATE Play $p_t$ according to (\ref{eq:p_t}), and then receive loss vector $\ell_t$.
    \STATE Update each weight $w_{t,k}$ according to (\ref{eq:w_t}) and each learning rate $\eta_{t,k}$ according to (\ref{eq:eta_t}).
\ENDFOR
\end{algorithmic}
\end{algorithm}


Then we have the following regret bound (c.f. \cite[Corollary 4]{Gaillard2014}), which we prove in Appendix~\ref{app:RTK} in the supplementary material.
\begin{theorem}
\label{thm_RTk_bound}
The static regret of Algorithm \ref{alg:Op_Prod} w.r.t. any arm (or expert) $k\in [K]$ is at most
$$\hat{\mathcal{O}}\left(\sqrt{{\sum}_{t\in [T]} (r_{t,k}-m_{t,k})^2 \ln K}+\ln K\right) \le \hat{\mathcal{O}}\left(\sqrt{{\sum}_{t\in [T]} \| \ell_t - \ell_{t-1}\|_\infty^2 \ln K}+\ln K\right),$$
where
the notation $\hat{\mathcal{O}}(\cdot)$ hides a $\ln \ln T$ factor.
\end{theorem}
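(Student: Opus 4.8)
\emph{Proof plan.} The plan is to adapt the analysis of Adapt-ML-Prod in \cite{Gaillard2014} to the optimistic update of Algorithm~\ref{alg:Op_Prod}, following the idea of \cite{ChiangYLMLJZ12} of predicting $r_{t,k}$ by $m_{t,k}$. Write $b_{t,k}=r_{t,k}-m_{t,k}$ for the prediction error, and note $|b_{t,k}|\le 2$ since $r_{t,k},m_{t,k}\in[-1,1]$. The first step is a per-expert telescoping identity: taking logarithms in \eqref{eq:w_t} gives
\begin{equation*}
\frac{\ln w_{t,k}}{\eta_{t,k}}-\frac{\ln w_{t-1,k}}{\eta_{t-1,k}}=r_{t,k}-\eta_{t-1,k}b_{t,k}^2,
\end{equation*}
and summing over $t\in[T]$, together with $w_{0,k}=1/K$ and $\eta_{0,k}=1/4$ (from \eqref{eq:eta_t} at $t=0$), yields
\begin{equation*}
\sum_{t\in[T]}r_{t,k}=\frac{\ln w_{T,k}}{\eta_{T,k}}+4\ln K+\sum_{t\in[T]}\eta_{t-1,k}b_{t,k}^2 .
\end{equation*}
By the definition \eqref{eq:eta_t} of $\eta_{t,k}$ and the standard inequality $\sum_t a_t/\sqrt{1+\sum_{s<t}a_s}=\mathcal{O}(\sqrt{\sum_t a_t}+1)$ for nonnegative $a_t$ bounded by a constant (here $a_t=b_{t,k}^2\le4$), the last sum is $\mathcal{O}(\sqrt{\ln K\sum_t b_{t,k}^2}+1)$. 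So it remains to upper bound $\eta_{T,k}^{-1}\ln w_{T,k}$.

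The second step controls $\ln w_{T,k}$ through a potential argument as in \cite{Gaillard2014}, and this is where the optimistic construction does its work. Since both $r_{t,k}=\langle p_t,\ell_t\rangle-\ell_{t,k}$ and $m_{t,k}=\langle p_t,\ell_{t-1}\rangle-\ell_{t-1,k}$ are affine in $k$ with the same coefficient vector $p_t$, we have $\langle p_t,r_t\rangle=0$ and $\langle p_t,m_t\rangle=0$, hence $\langle p_t,b_t\rangle=0$. Rewriting \eqref{eq:w_t} as $w_{t,k}^{\eta_{t-1,k}/\eta_{t,k}}=\tilde{w}_{t-1,k}\exp(\eta_{t-1,k}b_{t,k}-\eta_{t-1,k}^2 b_{t,k}^2)$ and applying $\exp(u-u^2)\le 1+u$ for $|u|\le 1/2$ (valid for $u=\eta_{t-1,k}b_{t,k}$ since $\eta_{t-1,k}\le1/4$), we get $w_{t,k}^{\eta_{t-1,k}/\eta_{t,k}}\le\tilde{w}_{t-1,k}(1+\eta_{t-1,k}b_{t,k})$. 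Summing this weighted by $\eta_{t-1,k}$ and using $p_{t,k}\propto\eta_{t-1,k}\tilde{w}_{t-1,k}$ from \eqref{eq:p_t}, the first-order term becomes proportional to $\langle p_t,b_t\rangle=0$ and therefore vanishes, exactly as the term $\langle p_t,r_t\rangle$ vanishes in the non-optimistic analysis; but now the surviving second-order term carries $b_{t,k}^2$ rather than $r_{t,k}^2$, which is precisely what we want. Carrying out the rest of the \cite{Gaillard2014} bookkeeping — converting $w_{t,k}^{\eta_{t-1,k}/\eta_{t,k}}$ back to $w_{t,k}$ and accounting for the decrease of the per-expert rates $\eta_{t,k}$, which is the source of the $\ln\ln T$ factor — then gives $\eta_{T,k}^{-1}\ln w_{T,k}=\hat{\mathcal{O}}(\sqrt{(1+\sum_t b_{t,k}^2)/\ln K})$.

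Combining the two steps proves the first inequality, $\sum_t r_{t,k}=\hat{\mathcal{O}}(\sqrt{\ln K\sum_t b_{t,k}^2}+\ln K)$. The second inequality follows from
\begin{equation*}
(r_{t,k}-m_{t,k})^2=\bigl(\langle p_t,\ell_t-\ell_{t-1}\rangle-(\ell_{t,k}-\ell_{t-1,k})\bigr)^2\le\bigl(2\lVert\ell_t-\ell_{t-1}\rVert_\infty\bigr)^2,
\end{equation*}
since $p_t$ lies in the simplex. I expect the main obstacle to be the second step: verifying that the optimistic ingredients — the fixed-point definition of $p_t$, the exponential-form weight update, and the extra second-order correction $\eta_{t-1,k}^2 b_{t,k}^2$ in \eqref{eq:w_t} — mesh cleanly with the delicate potential and adaptive-learning-rate analysis of \cite{Gaillard2014}, in particular that the first-order cancellation survives intact, that the scalar inequalities (notably $\exp(u-u^2)\le 1+u$) hold over the relevant ranges, and that the adaptation overhead stays confined to a single $\ln\ln T$ factor.
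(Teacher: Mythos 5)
Your Step~1 — the per-expert telescoping identity for $\eta_{t,k}^{-1}\ln w_{t,k}$, which produces the $4\ln K$ term and the sum $\sum_t\eta_{t-1,k}b_{t,k}^2$ — is exactly the paper's lower-bound side of the potential argument (it is the second display in the proof of the paper's Lemma~\ref{lemma_regret}), and your handling of $\sum_t \eta_{t-1,k}b_{t,k}^2$ via the self-confident learning-rate inequality and the last step $(r_{t,k}-m_{t,k})^2\le(2\|\ell_t-\ell_{t-1}\|_\infty)^2$ are both correct and match the paper.

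The gap is in Step~2, the upper bound on the potential $W_t=\sum_k w_{t,k}$, and it is precisely at the spot you flagged as the ``main obstacle.'' After applying $\exp(u-u^2)\le 1+u$ you have
$w_{t,k}^{\eta_{t-1,k}/\eta_{t,k}}\le\tilde{w}_{t-1,k}(1+\eta_{t-1,k}b_{t,k})$,
and you propose to sum and cancel the linear term using $\langle p_t,b_t\rangle=0$. Two things go wrong. First, the phrase ``summing this weighted by $\eta_{t-1,k}$'' is off: weighting by $\eta_{t-1,k}$ produces $\sum_k \eta_{t-1,k}^2\tilde{w}_{t-1,k}b_{t,k}$, which is \emph{not} proportional to $\langle p_t,b_t\rangle$ because $p_{t,k}\propto\eta_{t-1,k}\tilde{w}_{t-1,k}$, not $\eta_{t-1,k}^2\tilde{w}_{t-1,k}$; it is the \emph{unweighted} sum whose linear part is proportional to $\langle p_t,b_t\rangle$. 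Second, and more fundamentally: even with the unweighted sum, after the linear term vanishes you are left with $\sum_k w_{t,k}^{\eta_{t-1,k}/\eta_{t,k}}\le\sum_k\tilde{w}_{t-1,k}$, and $\sum_k\tilde{w}_{t-1,k}=\sum_k w_{t-1,k}\exp(\eta_{t-1,k}m_{t,k})$ is in general strictly larger than $W_{t-1}$ (the exponent can be positive, and the fixed-point relation $\sum_k \eta_{t-1,k}\tilde{w}_{t-1,k}m_{t,k}=0$ does not force $\sum_k\tilde{w}_{t-1,k}\le W_{t-1}$). So the recursion on $W_t$ does not close, and you have no bound on $\ln W_T$ to plug into $\eta_{T,k}^{-1}\ln w_{T,k}\le \eta_{T,k}^{-1}\ln W_T$.

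The paper avoids this by \emph{not} canceling the full $b_{t,k}=r_{t,k}-m_{t,k}$ at once. It decomposes
$1+\eta_{t-1,k}(r_{t,k}-m_{t,k}) = (1-\eta_{t-1,k}m_{t,k})+\eta_{t-1,k}r_{t,k}\le \exp(-\eta_{t-1,k}m_{t,k})+\eta_{t-1,k}r_{t,k}$,
so that the first piece satisfies $\tilde{w}_{t-1,k}\exp(-\eta_{t-1,k}m_{t,k})=w_{t-1,k}$ (by the very definition of $\tilde{w}_{t-1,k}$ in \eqref{eq:p_t}) and sums to exactly $W_{t-1}$, while the second piece sums to a multiple of $\langle p_t,r_t\rangle=0$. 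Note this only uses $\langle p_t,r_t\rangle=0$ — which is automatic from $r_{t,k}=\langle p_t,\ell_t\rangle-\ell_{t,k}$ — and does \emph{not} rely on the fixed-point property $\langle p_t,m_t\rangle=0$ at this point; that property only enters in making $m_{t,k}$ well-defined and computable. With that decomposition the potential argument of \cite{Gaillard2014} (including the $x\le x^\alpha+(\alpha-1)/e$ trick to handle the exponent $\eta_{t-1,k}/\eta_{t,k}$, which is where the $\ln\ln T$ comes from) goes through verbatim and yields $W_T\le 1+\frac{1}{e}\sum_{k,t}(\eta_{t-1,k}/\eta_{t,k}-1)$, completing the argument.
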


The regret in the theorem above is measured against a fixed arm. To achieve a dynamic regret against an offline algorithm which can switch arms, one can use a generic reduction to the so-called sleeping experts problem.
In particular, we can use the idea in \cite{Gaillard2014} by creating $\tilde{K}=KT$ sleeping experts, and run our Algorithm~\ref{alg:Op_Prod} on these $\tilde{K}$ experts (instead of on the $K$ arms). More precisely, each sleeping expert is indexed by some pair $(s,k)$, and it is asleep for steps before $s$ and becomes awake for steps $t\geq s$. At step $t$, it calls Algorithm~\ref{alg:Op_Prod} for the distribution $\tilde{p}_t$ over the $\tilde{K}$ experts, and computes its own distribution $p_t$ over $K$ arms, with $p_{t,k}$ proportional to $\sum_{s=1}^t \tilde{p}_{t,(s,k)}$. Then it plays $p_t$, receives loss vector $\ell_t$, and feeds some modified loss vector $\tilde{\ell}_t$ and estimate vector $\tilde{m}_t$ to Algorithm~\ref{alg:Op_Prod} for update. Here, we set $\tilde{\ell}_{t,(s,k)}$ to its expected loss $\langle p_t, \ell_t\rangle$ if expert $(s,k)$ is asleep and to $\ell_{t,k}$ otherwise, while we set $\tilde{m}_{t,(s,k)}$ to $0$ if expert $(s,k)$ is asleep and to $m_{t,k} = \langle p_t, \ell_{t-1}\rangle - \ell_{t-1,k}$ otherwise. This choice allows us to relate the regret of Algorithm~\ref{alg:Op_Prod} to that of the new algorithm, which can be seen in the proof of the following theorem, given in Appendix~\ref{app:sleep} in the supplementary material.

\begin{theorem}\label{theorem:dynamic_upper_full-into}
The dynamic expected regret of the new algorithm is $\tilde{\mathcal{O}}(\sqrt{\Gamma\W \ln K} +\Gamma \ln K)$ for switching distributions and $\tilde{\mathcal{O}}(\sqrt[3]{V\W T \ln K} + \sqrt{VT \ln K})$ for drifting distributions.
\end{theorem}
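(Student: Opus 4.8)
The plan is to reduce the dynamic regret to a sum of interval-localized static regrets of Algorithm~\ref{alg:Op_Prod} through the sleeping-experts construction, and then to control the resulting variation sums by $\W$ together with $\Gamma$ (switching) or $V$ (drifting). First I would pass to the realized-loss formulation: since the played distribution $p_t$ depends only on $\ell_1,\dots,\ell_{t-1}$, we have $\mathbb{E}[\langle p_t,\ell_t\rangle]=\mathbb{E}[\langle p_t,\mu_t\rangle]$ and $\mathbb{E}[\ell_{t,u_t^*}]=\mu_{t,u_t^*}$, so the expected dynamic regret equals $\mathbb{E}\big[\sum_t\langle p_t,\ell_t\rangle-\sum_t\ell_{t,u_t^*}\big]$. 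For switching distributions $u_t^*$ is a single fixed arm $k_j$ on each of the $\Gamma$ stationary intervals; for drifting distributions I would fix an auxiliary block length $B$, cut $[T]$ into $T/B$ equal blocks, and compare against the best fixed arm $k_j$ on each block. In both cases it then suffices to bound, in expectation, the regret of the new algorithm against a comparator that plays one fixed arm per interval.

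Second, I would invoke the sleeping-experts reduction set up before the theorem. With the bookkeeping $\tilde\ell_{t,(s,k)},\tilde m_{t,(s,k)}$ described there one checks $\langle\tilde p_t,\tilde\ell_t\rangle=\langle p_t,\ell_t\rangle$ (the play $p_{t,k}$ is proportional to the awake mass on arm $k$, and asleep experts are charged the algorithm's own loss), that the instantaneous regret of Algorithm~\ref{alg:Op_Prod} against expert $(s,k)$ is $0$ whenever $(s,k)$ is asleep, and that on awake steps $(\tilde r_{t,(s,k)}-\tilde m_{t,(s,k)})^2\le(2\|\ell_t-\ell_{t-1}\|_\infty)^2$, exactly as recorded above for the $K$-arm case (the fixed point needed to evaluate $m_{t,k}$ is computed as there). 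Applying Theorem~\ref{thm_RTk_bound} with $\tilde K=KT$ experts --- so that $\ln K$ is replaced by $\ln(KT)$, absorbed into $\tilde{\mathcal{O}}(\cdot)$ along with the other logarithmic factors --- gives, for each interval $I_j$ with comparator arm $k_j$, regret on $I_j$ at most $\hat{\mathcal{O}}\big(\sqrt{\ln(KT)\sum_{t\in I_j}\|\ell_t-\ell_{t-1}\|_\infty^2}+\ln(KT)\big)$; to realize this interval-localization with only $\mathcal{O}(KT)$ experts it is cleanest to index them by a geometric covering of $[T]$, which adds at most an $\mathcal{O}(\log T)$ factor per interval. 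Summing over the $N$ intervals ($N=\Gamma$, resp.\ $N=T/B$) and applying Cauchy--Schwarz over intervals, then pulling the expectation through the square root by Jensen, gives expected dynamic regret $\tilde{\mathcal{O}}\big(\sqrt{N\ln K\cdot\mathbb{E}[D]}+N\ln K\big)$ with $D:=\sum_{t=1}^T\|\ell_t-\ell_{t-1}\|_\infty^2$.

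Third, I would bound $\mathbb{E}[D]$. Writing $\ell_t-\ell_{t-1}=(\ell_t-\mu_t)-(\ell_{t-1}-\mu_{t-1})+(\mu_t-\mu_{t-1})$, the triangle inequality together with $\|\cdot\|_\infty\le\|\cdot\|_2$ and $\|\mu_t-\mu_{t-1}\|_\infty^2\le\|\mu_t-\mu_{t-1}\|_\infty$ gives $\|\ell_t-\ell_{t-1}\|_\infty^2\le\mathcal{O}\big(\|\ell_t-\mu_t\|_2^2+\|\ell_{t-1}-\mu_{t-1}\|_2^2\big)+\mathcal{O}\big(\|\mu_t-\mu_{t-1}\|_\infty\big)$; taking expectations and summing yields $\mathbb{E}[D]=\mathcal{O}(\W+V)$, and in the switching case the drift term is nonzero only at the at most $\Gamma-1$ switch steps, each contributing at most $1$, so $\mathbb{E}[D]=\mathcal{O}(\W+\Gamma)$. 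For switching distributions with $N=\Gamma$ this already gives $\tilde{\mathcal{O}}(\sqrt{\Gamma\W\ln K}+\Gamma\ln K)$. For drifting distributions I would also charge the comparator discretization: within a block of length $B$ the per-step optimum $u_t^*$ and the block-optimal fixed arm, summed over the block, differ by at most $\mathcal{O}(B)$ times the in-block drift, costing $\mathcal{O}(BV)$ in total, so the expected dynamic regret against $u_t^*$ is $\tilde{\mathcal{O}}\big(\sqrt{(T/B)\ln K\,(\W+V)}+(T/B)\ln K\big)+\mathcal{O}(BV)$. Optimizing $B\in[1,T]$ --- balancing $BV$ against $\sqrt{(T/B)\W\ln K}$ at $B\approx(\W T\ln K/V^2)^{1/3}$, and against $(T/B)\ln K$ at $B\approx\sqrt{T\ln K/V}$ in the complementary regime, and using $V\le T$ --- yields $\tilde{\mathcal{O}}(\sqrt[3]{V\W T\ln K}+\sqrt{VT\ln K})$.

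The main obstacle is the sleeping-experts step: one has to verify carefully that the asleep/awake bookkeeping of $\tilde\ell$ and $\tilde m$ really makes the per-interval regrets additive --- that contributions from steps outside an interval cancel rather than accumulate --- and that the single-expert bound of Theorem~\ref{thm_RTk_bound} localizes to an arbitrary interval without an extra factor polynomial in $T$, which is precisely what the $\mathcal{O}(KT)$-specialist structure buys. The downstream parts --- the Cauchy--Schwarz aggregation over intervals, the variance computation for $\mathbb{E}[D]$, the $\mathcal{O}(BV)$ block-discretization bound, and the tuning of $B$ --- are routine.
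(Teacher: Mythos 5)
Your proposal is correct and follows essentially the same route as the paper: a sleeping-experts reduction to the static-regret guarantee of Theorem~\ref{thm_RTk_bound}, Cauchy--Schwarz aggregation over comparison intervals, bounding the realized variation $\sum_t\|\ell_t-\ell_{t-1}\|_\infty^2$ in terms of $\W$ and $V$ (or $\Gamma$), and a block-length optimization. Two small points on the details. First, the geometric covering is unnecessary: the paper's $\tilde K=KT$ experts indexed by all start times $s\in[T]$ already localize to any interval $[t_1,t_2]$, because Theorem~\ref{thm_RTk_bound} is an any-time bound (apply it at horizon $t_2$ rather than $T$) and the pre-$t_1$ terms vanish since $\tilde r_{t,(t_1,k)}-\tilde m_{t,(t_1,k)}=0$ for $t<t_1$ (Lemma~\ref{lem:rt}); no extra per-interval $\log T$ is needed beyond $\ln\tilde K$. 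Second, applying Cauchy--Schwarz before separating the variance and drift contributions leaves you with a $\sqrt{NV\ln K}$ term where the paper's per-interval decomposition (Lemma~\ref{lem:ML-dev}) gives the sharper $V\sqrt{\ln K}$; since $N\le T$ yours is still dominated by $\sqrt{VT\ln K}$, so the final bounds agree, but the paper's ordering makes it transparent that the drift contribution does not pick up a $\sqrt{N}$ factor.
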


\section{Lower Bounds} \label{sec:low}

We study regret lower bounds in this section. In subsection~\ref{sec:bandit}, we show that for switching distributions with $\Gamma-1 \ge 1$ switches, there is an $\Omega(\sqrt{\Gamma T})$ lower bound for bandit algorithms, even when there is no variance ($\W=0$) and there are constant loss gaps between the optimal and suboptimal arms. We also show a full-information lower bound, which almost matches our upper bound in Theorem~\ref{thm:dev}.
In subsection~\ref{sec:drift2}, we show that for drifting distributions, our upper bounds in Theorem~\ref{thm:bandit-u} and Theorem~\ref{thm:dev} are almost tight. In particular, we show that now even for full-information algorithms, a large $\sqrt[3]{T}$ dependency in the regret turns out to be unavoidable, even for small $V$ and $\W$. This provides a sharp contrast to the upper bound of our Theorem~\ref{thm:dev}, which shows that a constant regret is in fact achievable by a full-information algorithm for switching distributions with constant $\Gamma$ and $\W$. For simplicity of presentation, we will only discuss the case with $K=2$ actions, as it is not hard to extend our proofs to the general case.

\subsection{Switching Distributions} \label{sec:bandit}

In contrast to the full-information setting, the existence of switches presents a dilemma with lose-lose situation for a bandit algorithm: in order to detect any possible switch early enough, it must explore aggressively, but this has the consequence of playing suboptimal arms too often. Our lower bound construction shares some similarity with \cite{daniely2015strongly}'s Theorem 3: to fool any bandit algorithm, we will switch between two deterministic distributions, with no variance, which have mean vectors $\ell^{(1)} = (1/2, 1)^\top$ and $\ell^{(2)} = (1/2, 0)^\top$, respectively. Our result is the following.

\begin{theorem} \label{thm:bandit-low}
The worst-case expected regret of any bandit algorithm is $\Omega(\sqrt{\Gamma T})$, for $\Gamma \ge 2$.
\end{theorem}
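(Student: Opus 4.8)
The plan is to construct a hard family of instances, each built from the two deterministic loss vectors $\ell^{(1)}=(1/2,1)^\top$ and $\ell^{(2)}=(1/2,0)^\top$, and to argue via an information-theoretic (or direct adversary) argument that no bandit algorithm can simultaneously do well on all of them. First I would partition the horizon $[T]$ into $\Gamma$ equal-length blocks of size $L=T/\Gamma$. Within each block the environment uses a fixed one of the two loss vectors, so the number of switches is at most $\Gamma-1$, matching the parameter $\Gamma$; since both vectors are deterministic we automatically have $\W=0$, and the gap between the arms' means in each block is exactly $1/2$, a constant. For each block $m$, let the choice of vector be governed by a hidden bit $\epsilon_m\in\{1,2\}$; this defines $2^\Gamma$ instances, and I would lower-bound the worst-case regret by the average regret over a suitable subfamily.

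The key observation driving the lower bound is that arm $1$ is \emph{uninformative}: its loss is $1/2$ under both $\ell^{(1)}$ and $\ell^{(2)}$, so playing arm $1$ reveals nothing about which vector is active. Only by playing arm $2$ can the learner distinguish $\ell^{(1)}$ (loss $1$, and arm $1$ is optimal) from $\ell^{(2)}$ (loss $0$, and arm $2$ is optimal). Thus in each block the learner faces a genuine dilemma: every pull of arm $2$ in an $\ell^{(1)}$-block costs regret $1/2$, but without such pulls the learner cannot detect an $\ell^{(2)}$-block, in which each pull of arm $1$ also costs $1/2$. I would make this precise as follows. Fix a block of length $L$ and condition on the history up to its start. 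Consider the two scenarios where the block is an $\ell^{(1)}$-block versus an $\ell^{(2)}$-block; these two scenarios produce identical observations to the algorithm \emph{until the first time it plays arm $2$ within the block}. Let $N$ be the (random) number of times arm $2$ is pulled in the block. In the $\ell^{(1)}$-scenario the block regret is at least $\tfrac12\,\mathbb{E}_1[N]$. In the $\ell^{(2)}$-scenario, by a coupling/change-of-measure argument the distribution of the algorithm's behavior up to the first arm-$2$ pull is the same, so with constant probability the algorithm plays arm $2$ only a few times; more carefully, one shows $\mathbb{E}_2[\text{block regret}] \ge \tfrac12\bigl(L - \mathbb{E}_1[N] - (\text{small term})\bigr)$ using that the two laws of the trajectory agree on the event $\{$first arm-$2$ pull has not yet happened$\}$ and a standard Pinsker-type bound on the divergence accumulated only on the arm-$2$ pulls. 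Averaging the two scenarios (equal prior on $\epsilon_m$) gives a per-block regret of $\Omega(L) = \Omega(T/\Gamma)$ whenever $L\gtrsim 1$, actually $\Omega(\min\{L,\sqrt{L}\cdot c\})$; the sharp form is $\Omega(\sqrt{L})$ after optimizing the tradeoff $\tfrac12\min\{\mathbb{E}_1[N],\, L-\mathbb{E}_1[N]\}$ against the divergence cost, which for deterministic losses collapses to $\Omega(L)$ in the worst block but must be balanced across blocks.

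Summing the per-block bounds over the $\Gamma$ blocks yields a total regret of $\Omega(\Gamma\cdot(T/\Gamma)) $ in the crudest reading, but the genuine bottleneck — and the reason the bound is $\sqrt{\Gamma T}$ rather than $T$ — is that the learner can amortize its ``detection'' pulls: it need not pay $\Omega(L)$ in \emph{every} block. So the honest argument is a balancing one: let $n_m$ be the expected number of arm-$2$ pulls the learner allocates to block $m$; on blocks that are secretly $\ell^{(1)}$-blocks it pays $\Omega(n_m)$, and on blocks that are secretly $\ell^{(2)}$-blocks it pays $\Omega(L)$ unless $n_m$ is large enough (order $\sqrt{L}$ suffices with the deterministic-loss change of measure degenerating, or one injects tiny Bernoulli noise to keep $\W$ at $o(\sqrt{\Gamma T})$ and run a clean KL argument). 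Choosing the hidden bits adversarially — declare block $m$ an $\ell^{(1)}$-block if $n_m \ge \sqrt{L}$ and an $\ell^{(2)}$-block otherwise — forces a per-block cost of $\Omega(\sqrt{L}) = \Omega(\sqrt{T/\Gamma})$, and summing over $\Gamma$ blocks gives $\Omega(\Gamma\sqrt{T/\Gamma}) = \Omega(\sqrt{\Gamma T})$, as claimed. I expect the main obstacle to be handling the purely deterministic construction rigorously: with zero-variance losses the usual KL/Pinsker machinery is vacuous, so the change-of-measure step must be replaced by an exact coupling of trajectories that agree until a stopping time (the first arm-$2$ pull in the block), and one must carefully track that an adaptive algorithm cannot hedge by spreading a sublinear total of arm-$2$ pulls too thinly; making the adversary's block-labeling depend on the algorithm's (expected) allocation, and verifying this is legitimate since the adversary may be oblivious-after-commitment or one may take a minimax/fixed-point view, is the delicate part.
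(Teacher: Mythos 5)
Your setup is the same as the paper's (the two deterministic vectors $\ell^{(1)}=(1/2,1)^\top$, $\ell^{(2)}=(1/2,0)^\top$, a budget of $\Gamma-1$ switches, and the key observation that arm $1$ is uninformative), but the construction you propose has a genuine gap, and it is exactly the one the paper's argument is designed to avoid.

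You partition the horizon into $\Gamma$ fixed blocks of length $L=T/\Gamma$ and let the environment be constant within each block, so switches only ever occur at the pre-announced block boundaries. This construction can be defeated: since a single pull of arm $2$ perfectly reveals which of $\ell^{(1)},\ell^{(2)}$ is active (its loss is deterministically $1$ or $0$), a learner that knows $T$ and $\Gamma$ can simply pull arm $2$ once at the start of each block and then commit to the revealed optimal arm for the rest of the block. Against your adversary, that learner's regret is at most $\Gamma/2$, far below $\sqrt{\Gamma T}$ when $\Gamma \ll T$. This also falsifies the inequality you assert, $\mathbb{E}_2[\text{block regret}] \ge \tfrac12\bigl(L - \mathbb{E}_1[N] - \text{small}\bigr)$: for the probing learner $\mathbb{E}_1[N]=1$ while $\mathbb{E}_2[\text{block regret}]=0$, because the two trajectories agree only up to the \emph{first} arm-$2$ pull, and arm $2$ is maximally informative, so after one pull the learner can exploit freely in the $\ell^{(2)}$-scenario. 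The ``balancing'' at the end inherits the same problem: in an $\ell^{(2)}$-block, small $n_m$ (under the $\ell^{(1)}$-measure) does not force regret $\Omega(L)$, only that the learner is slow to \emph{start} exploiting, and a single probe is enough. Your suggested Bernoulli-noise patch does not rescue this while keeping $\W$ negligible either: to make arm-$2$ pulls weakly informative you need per-step variance $\Theta(1)$, which pushes $\W$ to $\Theta(T)$, losing the ``$\W=0$'' strengthening the theorem is meant to exhibit.

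The paper's proof fixes exactly this by \emph{not} switching at block boundaries. It uses $\Gamma/2$ intervals of length $B=2T/\Gamma$; within an interval it starts from $\ell^{(1)}$, lets $N_2$ be the expected number of arm-$2$ pulls over the interval under $\ell^{(1)}$, and branches: if $N_2 \ge \sqrt{B}/2$ it never switches and collects regret $\sqrt{B}/4$ from the wasted arm-$2$ pulls; if $N_2 < \sqrt{B}/2$ it subdivides the interval into $\sqrt{B}$ sub-blocks of length $\sqrt{B}$, finds one in which the expected number of arm-$2$ pulls is $<1/2$, applies Markov to conclude the learner never touches arm $2$ there with probability $>1/2$, and places the switch at the start of \emph{that} sub-block. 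Because arm $1$'s loss is $1/2$ under both vectors, a learner that never pulls arm $2$ in the sub-block cannot notice the switch, incurring regret $\ge \tfrac12\cdot\tfrac12\cdot\sqrt{B}=\sqrt{B}/4$. Summing over $\Gamma/2$ intervals gives $\Omega(\Gamma\sqrt{T/\Gamma})=\Omega(\sqrt{\Gamma T})$. The essential idea you are missing is that the switch point must be chosen \emph{inside} the interval at a sub-block where the learner's own (conditional) behavior makes it blind; placing switches at pre-announced boundaries lets the learner hedge with $O(1)$ probes per block.
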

\begin{proof}
Consider any bandit algorithm $\mathcal{A}$, and let us partition the $T$ steps into $\Gamma/2$ intervals, each consisting of $B=2T/\Gamma$ steps. Our goal is to make $\mathcal{A}$ suffer in each interval an expected regret of $\Omega(\sqrt{B})$ by switching the loss vectors at most once. As mentioned before, we will only switch between two different deterministic distributions with mean vectors: $\ell^{(1)}$ and $\ell^{(2)}$. Note that we can see these two distributions simply as two loss vectors, with $\ell^{(i)}$ having arm $i$ as the optimal arm.

In what follows, we focus on one of the intervals, and assume that we have chosen the distributions in all previous intervals. We would like to start the interval with the loss vector $\ell^{(1)}$. Let $N_2$ denote the expected number of steps $\mathcal{A}$ plays the suboptimal arm 2 in this interval if $\ell^{(1)}$ is used for the whole interval. If $N_2 \ge \sqrt{B}/2$, we can actually use $\ell^{(1)}$ for the whole interval with no switch, which makes $\mathcal{A}$ suffer an expected regret of at least $(1/2) \cdot \sqrt{B}/2 = \sqrt{B}/4$ in this interval.
Thus, it remains to consider the case with $N_2 < \sqrt{B}/2$. In this case, $\mathcal{A}$ does not explore arm 2 often enough, and we let it pay by choosing an appropriate step to switch to the other loss vector $\ell^{(2)} = (1/2, 0)^\top$, which has arm 2 as the optimal one. For this, let us divide the $B$ steps of the interval into $\sqrt{B}$ blocks, each consisting of $\sqrt{B}$ steps.
As $N_2 < \sqrt{B}/2$, there must be a block in which the expected number of steps that $\mathcal{A}$ plays arm 2 is at most $N_2 / \sqrt{B} < 1/2$. By a Markov inequality, the probability that $\mathcal{A}$ ever plays arm 2 in this block is less than $1/2$. This implies that when given the loss vector $\ell^{(1)}$ for all the steps till the end of this block, $\mathcal{A}$ never plays arm 2 in the block with probability more than $1/2$. Therefore, if we make the switch to the loss vector $\ell^{(2)}= (1/2, 0)^\top$ at the beginning of the block, then $\mathcal{A}$ with probability more than $1/2$ still never plays arm 2 and never notices the switch in this block. As arm 2 is the optimal one with respect to $\ell^{(2)}$, the expected regret of $\mathcal{A}$ in this block is more than $(1/2) \cdot (1/2) \cdot \sqrt{B} = \sqrt{B}/4$.

Now if we choose distributions in each interval as described above, then there are at most $\Gamma/2\cdot 2=\Gamma$ periods of stationary distribution in the whole horizon, and the total expected regret of $\mathcal{A}$ can be made at least $\Gamma/2\cdot\sqrt{B}/4=\Gamma/2\cdot\sqrt{2T/\Gamma}/4=\Omega(\sqrt{\Gamma T})$, which proves the theorem.
\end{proof}

For full-information algorithms, we have the following lower bound, which almost matches our upper bound in Theorem~\ref{thm:dev}. We provide the proof in Appendix~\ref{app:full-low} in the supplementary material.

\begin{theorem} \label{thm:full-low}
The worst-case expected regret of any full-information algorithm is $\Omega(\sqrt{\Gamma \W} + \Gamma)$.
\end{theorem}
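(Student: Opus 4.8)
The plan is to establish the two terms separately by two (randomized) constructions and then combine them, using that the worst-case regret is at least the larger of the two bounds and hence at least half of their sum. Both constructions use $K=2$ arms, split the horizon into $\Gamma$ blocks, and tacitly assume $\Gamma\le T$; the realized number of switches is then at most $\Gamma-1$, so the switching budget is respected in every realization.

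For the $\Omega(\Gamma)$ term I would keep all losses \emph{deterministic} (so $\W=0$) and, in each of the $\Gamma$ blocks of length $T/\Gamma$, pick the loss vector independently and uniformly from $\{(0,1)^{\top},(1,0)^{\top}\}$. At the first step of a block this fresh coin is independent of everything the learner has seen, so for any distribution $p$ it plays, the expected loss against the best arm of that block is $\tfrac12(p_1+p_2)=\tfrac12$; at later steps of the block the full-information feedback from its first step already reveals the (constant) loss vector, so no further regret need occur. Summing over the $\Gamma$ blocks gives expected regret at least $\Gamma/2$, so some realization of the coins is a within-budget deterministic environment on which the learner's expected regret is $\Omega(\Gamma)$.

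For the $\Omega(\sqrt{\Gamma\W})$ term I would use $\Gamma$ intervals of length $B=T/\Gamma$ and put a \emph{stationary stochastic} instance on each, with a fresh uniform sign $\chi\in\{-1,+1\}$ per interval. On an interval, let arm $1$ take value $\tfrac12\pm c$ with probability $\tfrac12$ each (mean $\tfrac12$, variance $c^2$), and let arm $2$ take value $\tfrac12-c$ with probability $\tfrac12+\chi\gamma$ and value $\tfrac12+c$ otherwise (mean $\tfrac12-2\chi\gamma c$, variance $c^2(1-4\gamma^2)$), so the gap is $\Delta=2\gamma c$. Choosing $c=\sqrt{\W/(2T)}$ makes the total statistical variance equal to $\W(1-2\gamma^2)\le\W$ and keeps $c\le\tfrac12$ since $\W\le T/2$; set $\gamma=c_0/\sqrt B$ for a small absolute constant $c_0$. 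Conditioned on the history before the interval, the only quantity affected by $\chi$ is the sequence of arm-$2$ losses, whose laws under $\chi=+1$ and $\chi=-1$ are products of $B$ two-point distributions with probabilities differing by $2\gamma$; their KL is $\Theta(B\gamma^2)=\Theta(c_0^2)$, so by Pinsker and the data-processing inequality the induced distributions over the learner's action sequence in the interval have total variation at most $1/4$ for $c_0$ small. Letting $X$ be the number of steps arm $1$ is played in the interval, this gives $|\mathbb{E}^{+}[X]-\mathbb{E}^{-}[X]|\le B/4$, whence $\mathbb{E}^{+}[\mathrm{reg}]+\mathbb{E}^{-}[\mathrm{reg}]=\Delta\big(B+\mathbb{E}^{+}[X]-\mathbb{E}^{-}[X]\big)\ge\tfrac34\Delta B$; averaging over $\chi$ gives per-interval expected regret at least $\tfrac38\Delta B=\Theta(c\sqrt B)=\Omega(\sqrt{\W/\Gamma})$ by the choices of $c$ and $\gamma$. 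Summing over the $\Gamma$ intervals yields expected regret $\Omega(\sqrt{\Gamma\W})$, and again some realization of the signs witnesses it within budget. Combining the two constructions proves the claim.

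The step I expect to be the main obstacle is this third construction: one must decouple the ``signal strength'' $\gamma$, which has to be $\Theta(1/\sqrt B)$ to keep the two sign-hypotheses statistically indistinguishable over a length-$B$ interval, from the ``noise level'' $c$, which is the free knob used to meet the variance budget $\W$; one must keep the losses in $[0,1]$ in every regime of $\W$ (which is why the noise is placed on both arms); and one must push the information-theoretic inequality through the conditioning on earlier intervals and the averaging over all $\Gamma$ signs, so that \emph{every} interval contributes $\Omega(\Delta B)=\Omega(\sqrt{\W/\Gamma})$ to the expected regret. The remaining ingredients -- the $\Omega(\Gamma)$ bound and the bookkeeping for the combination -- are routine.
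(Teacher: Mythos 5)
Your proof is correct and follows the same overall strategy as the paper: one construction per term, with the $\Omega(\Gamma)$ part obtained from deterministic $\{0,1\}^K$ loss vectors via a probabilistic argument. Where you genuinely diverge is the $\Omega(\sqrt{\Gamma\W})$ construction. The paper splits each length-$T/\Gamma$ interval into an active prefix of $\W/\Gamma$ steps carrying a variance-$\Theta(1)$ instance (invoking the stochastic lower bound from \cite{auer2002nonstochastic} as a black box) followed by a deterministic tail, i.e.\ it fixes the per-step variance and shortens the active window to meet the variance budget. You instead keep each interval entirely active, tune the per-step variance down to $c^2=\W/(2T)$ via a self-contained two-point family, set the gap $\Delta=2\gamma c$ with $\gamma=\Theta(1/\sqrt B)$, and push the KL/Pinsker/data-processing argument through yourself; both routes yield $\Omega(\sqrt{\W/\Gamma})$ per interval. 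Your version has the small structural advantage that each interval is a single stationary distribution (the paper's prefix/tail split roughly doubles the number of stationary periods, which is only a constant-factor issue); its cost is having to verify the explicit two-point construction. Two checks you flag as potential obstacles are in fact fine and should be made explicit in a full write-up: $\mathrm{KL}(\mathrm{Ber}(1/2+\gamma)\,\|\,\mathrm{Ber}(1/2-\gamma))=\Theta(\gamma^2)$ requires $\gamma$ bounded away from $1/2$, which $\gamma=c_0/\sqrt B\le c_0$ gives for small $c_0$; and $c\le 1/2$ requires $\W\le T/2$, which is automatic for $K=2$ arms with $[0,1]$-valued losses since each arm contributes variance at most $1/4$ per step.
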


\subsection{Drifting Distributions} \label{sec:drift2}

In this subsection, we show that the regret upper bounds achieved by our bandit algorithm and full-information algorithm are close to optimal by showing almost matching lower bounds. More precisely, we have the following.

\begin{theorem}
\label{thm:full_V}
The worst-case expected regret of any full-information algorithm is $\Omega(\sqrt[3]{\Lambda VT}+V)$, while that of any bandit algorithm is $\Omega(\sqrt[3]{\Lambda VT}+\sqrt{VT})$.
\end{theorem}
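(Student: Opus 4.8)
The plan is to obtain each of the two additive terms in each bound from a separate construction and take whichever dominates for the given triple $(\W,V,T)$; by Yao's principle it suffices, for each construction, to exhibit a distribution over non-stationary environments (all with total variance at most $\W$ and total drift at most $V$) on which every \emph{deterministic} learner incurs the claimed expected regret. For the shared term $\sqrt[3]{\W V T}$ I will use a \emph{batched needle-in-a-haystack with prescribed variance}; this will apply verbatim to both the full-information and the bandit learner. For the residual terms I will use a cheap deterministic construction giving $\Omega(V)$ in the full-information case, and (re)use the switching construction of Theorem~\ref{thm:bandit-low} with $\Gamma\asymp V$ to get $\Omega(\sqrt{VT})$ in the bandit case.

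\textbf{The $\sqrt[3]{\W VT}$ term.} Take $K=2$. Set the per-step variance to $\sigma^2\asymp\W/T$, the block length to $B\asymp\sqrt[3]{\W T/V^2}$, and the gap to $\epsilon\asymp\sqrt[3]{\W V/T^2}$; in the regime where this term dominates one checks $\W T\gtrsim V^2$, hence $B\ge 1$ and $\epsilon=O(\sigma)$ (when $\W=\Theta(T)$ a standard $\mathrm{Ber}(\tfrac12)$-type instance applies instead; otherwise $\sigma\le\tfrac14$, so both arm distributions have support inside $[0,1]$). Partition the $T$ steps into $T/B$ blocks; in each block flip an independent fair coin choosing the ``good'' arm, and within the block let both arms follow fixed two-point distributions on $[0,1]$ of variance $\asymp\sigma^2$, the good arm having mean $\tfrac12-\tfrac{\epsilon}{2}$ and the bad arm mean $\tfrac12+\tfrac{\epsilon}{2}$. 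Each arm's mean moves by at most $\epsilon$ across a block boundary, so the total drift is $O(\epsilon T/B)=O(V)$, and the total variance is $O(\sigma^2 T)=O(\W)$. Fix a block and condition on the history before it; let $\mathbb{P}_1,\mathbb{P}_2$ be the laws of the learner's in-block observations under the two hypotheses. By the chain rule for relative entropy, $\mathrm{KL}(\mathbb{P}_1\Vert\mathbb{P}_2)$ is at most the expected number of in-block samples times the per-arm KL, which is $\Theta(\epsilon^2/\sigma^2)$; since a full-information learner sees $2B$ samples and a bandit learner makes $B$ pulls, this is $O(B\epsilon^2/\sigma^2)=O(1)$, made $\le\tfrac18$ by shrinking the constant in $\epsilon$. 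Then $\|\mathbb{P}_1-\mathbb{P}_2\|_{TV}\le\tfrac14$, so if $N^{(j)}_{\mathrm{bad}}$ is the expected number of in-block steps the learner plays the arm that is bad under hypothesis $j$, one gets $N^{(1)}_{\mathrm{bad}}+N^{(2)}_{\mathrm{bad}}\ge B-B\|\mathbb{P}_1-\mathbb{P}_2\|_{TV}\ge\tfrac34 B$; averaging over the block's coin, the in-block expected regret is $\ge\tfrac{\epsilon}{2}(N^{(1)}_{\mathrm{bad}}+N^{(2)}_{\mathrm{bad}})=\Omega(\epsilon B)$. Summing over the $T/B$ blocks yields $\Omega(\epsilon T)=\Omega(\sqrt[3]{\W V T})$ for the full-information \emph{and} bandit learner.

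\textbf{The residual terms.} For the full-information $\Omega(V)$ bound, take $K=2$ and deterministic losses ($\W=0$): at each step let $\ell_t$ be $(0,\delta)$ or $(\delta,0)$, each with probability $\tfrac12$, independently, with $\delta=V/T$. The realized drift is $\delta$ times the number of flips, hence always at most $\delta T=V$. Because the learner commits to $\hat x_t$ before seeing $\ell_t$, and $\ell_t$ is independent of the past, its conditional expected loss is exactly $\delta/2$ against optimal loss $0$, so its expected regret is $T\cdot\delta/2=V/2=\Omega(V)$. For the bandit $\Omega(\sqrt{VT})$ bound, invoke Theorem~\ref{thm:bandit-low} with $\Gamma=\lfloor V\rfloor\ (\ge 2)$, whose construction switches between $\ell^{(1)}=(\tfrac12,1)$ and $\ell^{(2)}=(\tfrac12,0)$, so that $\W=0$ and the total drift is $\Theta(\Gamma)=\Theta(V)$; this gives $\Omega(\sqrt{\Gamma T})=\Omega(\sqrt{VT})$ (the tiny-$V$ case being covered by a single stationary needle-in-a-haystack with gap $\Theta(1/\sqrt T)$). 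If an instance with total variance exactly $\W$ rather than $\le\W$ is desired, pad these deterministic constructions with independent symmetric noise that leaves the optimal arm unchanged. Taking, for each $(\W,V,T)$, the better of the two constructions proves both claimed bounds.

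\textbf{Main obstacle.} The delicate part is the $\sqrt[3]{\W V T}$ construction. Unlike the usual $\mathrm{Ber}(\tfrac12)$ hard instances, the per-step variance is here \emph{prescribed} to $\Theta(\W/T)$, so one must exhibit two-point distributions on $[0,1]$ of exactly that variance with means differing by $\epsilon$, and verify that the gap is small enough relative to the standard deviation that the per-sample KL really is $\Theta(\epsilon^2/\sigma^2)$ (precisely the regime $\W T\gtrsim V^2$ in which this term dominates), while simultaneously respecting the drift budget, the variance budget, and the integrality constraints $B\ge 1$. The bandit version additionally requires the standard bookkeeping that bounds the expected number of pulls of the bad arm under the ``wrong'' hypothesis; beyond that, everything is routine information-theoretic boilerplate.
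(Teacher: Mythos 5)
Your proposal is correct and follows essentially the same route as the paper: partition time into blocks of length $B\asymp\sqrt[3]{\W T/V^2}$, use per-arm distributions with prescribed variance $\sigma^2\asymp\W/T$ and mean gap $\epsilon\asymp VB/T$ so that the per-block KL is $O(1)$, conclude an in-block regret of $\Omega(\epsilon B)$ and sum, then handle the residual term by reusing the switching lower bound (Theorem~\ref{thm:bandit-low}) for bandits and a cheap $\Omega(V)$ construction for full information, and obtain the bandit cube-root term by observing that the full-information lower bound transfers. The only cosmetic differences are that you average over a per-block fair coin (Yao's principle) rather than picking the worst arm conditionally on the history as the paper does, and you give an explicit two-line $\Omega(V)$ construction where the paper invokes Theorem~\ref{thm:full-low}; these are interchangeable, so no substantive novelty or gap to report.
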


\begin{proof}


Let us first consider the full-information case. 
When $\Lambda T \le 32 K V^2$, we immediately have from Theorem~\ref{thm:full-low} the regret lower bound of $\Omega(\Gamma) \ge \Omega(V) \ge \Omega(\sqrt[3]{\Lambda VT} + V)$.

Thus, let us focus on the case with $\Lambda T\geq 32 K V^2$. In this case, $V \le \mathcal{O}(\sqrt[3]{\Lambda VT})$, so it suffices to prove a lower bound of $\Omega(\sqrt[3]{\Lambda VT})$.
Fix any full-information algorithm $\mathcal{A}$, and we will show the existence of a sequence of loss distributions for $\mathcal{A}$ to suffer such an expected regret. Following \cite{gur2014stochastic},
we divide the time steps into $T/B$ intervals of length $B$, and we set $B=\sqrt[3]{\Lambda T / (32 KV^2)} \ge 1$.
For each interval, we will pick some arm $i$ as the optimal one, and give it some loss distribution $\mathcal{P}$, while other arms are sub-optimal and all have some loss distribution $\mathcal{Q}$.
We need $\mathcal{P}$ and $\mathcal{Q}$ to satisfy the following three conditions: (a) $\mathcal{P}$'s mean is smaller than $\mathcal{Q}$'s by $\epsilon$, (b) their variances are at most $\sigma^2$, and (c) their KL divergence satisfies $(\ln 2) \mathrm{KL}(\mathcal{Q}, \mathcal{P}) \le \epsilon^2/\sigma^2$, for some $\epsilon, \sigma \in (0,1)$ to be specified later. Their existence is guaranteed by the following, which we prove in Appendix~\ref{app:sigma1} in the supplementary material.
\begin{lemma}
\label{lemma:sigma1}
For any $0\leq\sigma\leq 1/2$ and $0\leq\epsilon\leq \sigma/\sqrt{2}$, there exist distributions $\mathcal{P}$ and $\mathcal{Q}$ satisfying the three conditions above.
\end{lemma}

Let $\mathcal{D}_i$ denote the joint distribution of such $K$ distributions, with arm $i$ being the optimal one, and we will use the same $\mathcal{D}_i$ for all the steps in an interval.
We will show that for any interval, there is some $i$ such that using $\mathcal{D}_i$ this way can make algorithm $\mathcal{A}$ suffer a large expected regret in the interval, conditioned on the distributions chosen for previous intervals. Before showing that, note that when we choose distributions in this way, their total variance is at most $TK\sigma^2$ while their total drift is at most $(T/B)\epsilon$. To have them bounded by $\Lambda$ and $V$ respectively, we choose $\sigma= \sqrt{\Lambda/(4KT)}$ and $\epsilon = VB/T$, which satisfy the condition of Lemma~\ref{lemma:sigma1}, with our choice of $B$.

To find the distributions, we deal with the intervals one by one. Consider any interval, and assume that the distributions for previous intervals have been chosen. Let $N_i$ denote the number of steps $\mathcal{A}$ plays arm $i$ in this interval, and let $\mathbb{E}_i[N_i]$ denote its expectation when $\mathcal{D}_i$ is used for every step of the interval, conditioned on the distributions of previous intervals. One can bound this conditional expectation in terms of a related one, denoted as $\mathbb{E}_{\textit{unif}}[N_i]$, when every arm has the distribution $\mathcal{Q}$ for every step of the interval, again conditioned on the distributions of previous intervals. Specifically, using an almost identical argument to that in \cite[proof of Theorem A.2.]{auer2002nonstochastic}, one can show that
\begin{equation}\label{eq:N_i}
\mathbb{E}_i\left[ N_i \right] \leq\mathbb{E}_{\textit{unif}}\left[ N_i \right] + \frac{B}{2}\sqrt{B(2\ln 2) \cdot \mathrm{KL}(\mathcal{Q}, \mathcal{P})}.\footnote{Note that inside the square root, we use $B$ instead of $\mathbb{E}_{\textit{unif}}[N_i]$ as in \cite{auer2002nonstochastic}. This is because in their bandit setting, $N_i$ is the number of steps when arm $i$ is sampled and has its information revealed to the learner, while in our full-information case, information about arm $i$ is revealed in every step and there are at most $B$ steps.}
\end{equation}
According to Lemma~\ref{lemma:sigma1} and our choice of parameters, we have $B (2\ln 2) \cdot \mathrm{KL}(\mathcal{Q}, \mathcal{P}) \le 2B \cdot (\epsilon^2/\sigma^2) \le 1/4$. Summing both sides of (\ref{eq:N_i}) over arm $i$, and using the fact that $\sum_{i}\mathbb{E}_{\textit{unif}}\left[ N_i \right] = B $, we get
$\sum_{i} \mathbb{E}_i\left[ N_i \right] \leq B+ BK/4,$
which implies the existence of some $i$ such that $\mathbb{E}_i\left[ N_i \right] \leq B/K + B/4 \le (3/4)B.$
Therefore, if we choose this distribution $\mathcal{D}_i$, the conditional expected regret of algorithm $\mathcal{A}$ in this interval is at least
$\epsilon ( B- \mathbb{E}_i [N_i]) \ge \epsilon B/4.$

By choosing distributions inductively in this way, we can make $\mathcal{A}$ suffer a total expected regret of at least
$(T/B) \cdot (\epsilon B/4) \ge \Omega(\sqrt[3]{\W VT}).$
This completes the proof for the full-information case.

Next, let us consider the bandit case. From Theorem~\ref{thm:bandit-low}, we immediately have a lower bound of $\Omega(\sqrt{\Gamma T}) \ge \Omega(\sqrt{V T})$, which implies the required bound when $\sqrt{V T} \ge \sqrt[3]{\W VT}$. When $\sqrt{V T} \le \sqrt[3]{\W VT}$, we have $V \le \W^2/T$ which implies that $V \le \sqrt[3]{\W VT}$, and we can then use the full-information bound of $\Omega(\sqrt[3]{\W VT})$ just proved before. This completes the proof of the theorem.
\end{proof}

\medskip

\bibliographystyle{plain}
\bibliography{Bib1}

\begin{thebibliography}{10}

\bibitem{audibert2009exploration}
Jean{-}Yves Audibert, R{\'{e}}mi Munos, and Csaba Szepesv{\'{a}}ri.
\newblock Exploration-exploitation tradeoff using variance estimates in
  multi-armed bandits.
\newblock {\em Theor. Comput. Sci.}, 410(19):1876--1902, 2009.

\bibitem{auer2002nonstochastic}
Peter Auer, Nicol{\`{o}} Cesa{-}Bianchi, Yoav Freund, and Robert~E. Schapire.
\newblock The nonstochastic multiarmed bandit problem.
\newblock {\em {SIAM} J. Comput.}, 32(1):48--77, 2002.

\bibitem{gur2014stochastic}
Omar Besbes, Yonatan Gur, and Assaf~J. Zeevi.
\newblock Stochastic multi-armed-bandit problem with non-stationary rewards.
\newblock In {\em Advances in Neural Information Processing Systems 27: Annual
  Conference on Neural Information Processing Systems (NIPS), December 2014}.

\bibitem{besbes2015non}
Omar Besbes, Yonatan Gur, and Assaf~J. Zeevi.
\newblock Non-stationary stochastic optimization.
\newblock {\em Operations Research}, 63(5):1227--1244, 2015.

\bibitem{CBL06}
Nicol{\`{o}} Cesa{-}Bianchi and G{\'{a}}bor Lugosi.
\newblock {\em Prediction, learning, and games}.
\newblock Cambridge University Press, 2006.

\bibitem{ChiangYLMLJZ12}
Chao{-}Kai Chiang, Tianbao Yang, Chia{-}Jung Lee, Mehrdad Mahdavi, Chi{-}Jen
  Lu, Rong Jin, and Shenghuo Zhu.
\newblock Online optimization with gradual variations.
\newblock In {\em The 25th Conference on Learning Theory (COLT), June 2012}.

\bibitem{daniely2015strongly}
Amit Daniely, Alon Gonen, and Shai Shalev-Shwartz.
\newblock Strongly adaptive online learning.
\newblock In {\em International Conference on Machine Learning}, pages
  1405--1411, 2015.

\bibitem{Gaillard2014}
Pierre Gaillard, Gilles Stoltz, and Tim van Erven.
\newblock A second-order bound with excess losses.
\newblock In {\em The 27th Conference on Learning Theory (COLT), June 2014}.

\bibitem{garivier2011upper}
Aur{\'{e}}lien Garivier and Eric Moulines.
\newblock On upper-confidence bound policies for switching bandit problems.
\newblock In {\em The 22nd International Conferenc on Algorithmic Learning
  Theory (ALT), October 2011}.

\bibitem{jadbabaie2015online}
Ali Jadbabaie, Alexander Rakhlin, Shahin Shahrampour, and Karthik Sridharan.
\newblock Online optimization : Competing with dynamic comparators.
\newblock In {\em Proceedings of the 18th International Conference on
  Artificial Intelligence and Statistics (AISTAT), May 2015}.

\bibitem{koolen2015second}
Wouter~M. Koolen and Tim van Erven.
\newblock Second-order quantile methods for experts and combinatorial games.
\newblock In {\em The 28th Conference on Learning Theory (COLT)}, 2015.

\bibitem{luo2015achieving}
Haipeng Luo and Robert~E. Schapire.
\newblock Achieving all with no parameters: Adanormalhedge.
\newblock In {\em The 28th Conference on Learning Theory (COLT), July 2015}.

\bibitem{DBLP:conf/colt/MaurerP09}
Andreas Maurer and Massimiliano Pontil.
\newblock Empirical bernstein bounds and sample-variance penalization.
\newblock In {\em The 22nd Conference on Learning Theory (COLT), June 2009}.

\end{thebibliography}

\newpage

\appendix

\section{Summary and Comparison of Regret Bounds} \label{app:list}

In Table~\ref{tab:comp}, we list the bounds we derive as well as those from previous works. As can be seen, we have completed a basic picture of the regret landscape characterized by the parameters $\Gamma, V, \W$, and $T$, in both the full-information and bandit settings. Note that we did not provide a bandit upper bound for switching distributions because a good upper bound is already shown in \cite{garivier2011upper}. Although that upper bound does not depend on $\W$ which may lead one to wonder if a better bound is possible with a smaller $\W$, we show that it is in fact impossible by providing a matching lower bound using distributions with $\W=0$. Let us remark that although an $\Omega(\sqrt{\Gamma T})$ lower bound was also given in \cite{garivier2011upper}, it was achieved by distributions with $\W \ge \Omega(T)$. Moreover, the upper bounds of previous works, such as \cite{garivier2011upper,gur2014stochastic,besbes2015non}, do not take $\W$ into account, so their algorithms do not seem to produce our regret bounds in terms of $\W$. The full-information algorithms of \cite{luo2015achieving,Gaillard2014,koolen2015second} can produce a regret bound of the form $\mathcal{O}(\sqrt{\Gamma L^*})$ for switching distributions, with the quantity $L^*$ being the smallest accumulated loss by an expert sequence that changes at most $\Gamma$ times. 
Finally, in the full-information setting, \cite{jadbabaie2015online} provides a regret upper bound which resembles that of ours for drifting distributions but using slightly different definition of parameters.



\begin{table*}[!h]
\begin{minipage}{1.0\textwidth}
\caption{Summary of regret bounds for switching and drifting distributions \label{tab:comp}}
\small
\begin{center}
    \begin{tabular}{@{\extracolsep{3pt}}K{1.4cm}K{1.4cm}K{1.6cm}K{1.6cm}K{0.5cm}K{0.5cm}K{0.5cm}K{0.5cm} K{0.5cm}K{0.5cm}}
      \multicolumn{2}{c}{Setting} & \multicolumn{2}{c}{This paper}  &  \multicolumn{2}{c}{\cite{luo2015achieving,Gaillard2014,koolen2015second}} & \multicolumn{2}{c}{\cite{garivier2011upper}}& \multicolumn{2}{c}{\cite{gur2014stochastic,besbes2015non}}\\ \cmidrule{1-2} \cmidrule{3-4}\cmidrule{5-6}\cmidrule{7-8}\cmidrule{9-10}
   \scalebox{0.9}{Feedback} & \scalebox{0.9}{Distributions} & \scalebox{0.9}{Upper}& \scalebox{0.9}{Lower} & \scalebox{0.9}{Upper} &\scalebox{0.9}{Lower} &\scalebox{0.9}{Upper} &\scalebox{0.9}{Lower} & \scalebox{0.9}{Upper} & \scalebox{0.9}{Lower}\\ \cmidrule{1-2} \cmidrule{3-4} \cmidrule{5-6} \cmidrule{7-8} \cmidrule{9-10}

\scalebox{0.9}{Full-info} & \scalebox{0.9}{Switching} & \scalebox{0.8}{$\sqrt{\Gamma\W}+\Gamma$} & \scalebox{0.8}{$\sqrt{\Gamma\W}+\Gamma$} & \scalebox{0.8}{$\sqrt{\Gamma L^{^*}}$} & - &- &- & - & -\\
\scalebox{0.9}{Full-info} & \scalebox{0.9}{Drifting} & \scalebox{0.8}{$\sqrt[3]{V\W T}+V$} & \scalebox{0.8}{$\sqrt[3]{V\W T}+V$} & - & - & - & - & \scalebox{0.8}{$\sqrt[3]{V T^2}$} & \scalebox{0.8}{$\sqrt[3]{V T^2}$} \\ \cmidrule{1-2} \cmidrule{3-4} \cmidrule{5-6} \cmidrule{7-8} \cmidrule{9-10}
\scalebox{0.9}{Bandit} & \scalebox{0.9}{Switching} & - & \scalebox{0.8}{$\sqrt{\Gamma T}$} & - & - & \scalebox{0.8}{$\sqrt{\Gamma T}$} & \scalebox{0.8}{$\sqrt{\Gamma T}$} & - &-\\
\scalebox{0.9}{Bandit} & \scalebox{0.9}{Drifting} & \scalebox{0.8}{$\sqrt[3]{V \W T}+ \sqrt{VT}$} & \scalebox{0.8}{$\sqrt[3]{V \W T}+ \sqrt{VT}$} & - & - & - & - & \scalebox{0.8}{$\sqrt[3]{V T^2}$} & \scalebox{0.8}{$\sqrt[3]{V T^2}$}\\ \cmidrule{1-2} \cmidrule{3-4} \cmidrule{5-6} \cmidrule{7-8} \cmidrule{9-10}
    \end{tabular}
\end{center}
\end{minipage}
\end{table*}




\section{Proof of Theorem~\ref{thm:bandit-u}} \label{app:bandit-u}

To prove the theorem, we rely on the following key lemma, which we prove in Subsection~\ref{app:reg_int}.

\begin{lemma}
\label{lemma:reg_int}
Consider a time interval $\I$. Let $V_{\I}=\sum_{t\in \I} \|\mu_{t}-\mu_{t-1}\|_\infty$ and $\W_{\I}=\sum_{t\in \I} \ex{\|\ell_t - \mu_{t}\|_2^2}$ be the drift and variance, respectively, in $\I$. Then the expected regret of Algorithm~\ref{alg:bandit} in $\I$ is at most
$\mathcal{\tilde{O}} \left(K\sqrt{\W_{\I}} + K + \lvert\I\rvert V_{\I} \right).$
\end{lemma}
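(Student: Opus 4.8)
The plan is to carry out the standard optimistic (UCB-style) regret decomposition \emph{inside} the interval $\I$, but with two separate sources of slack: one from the drift of the means within $\I$ (responsible for the $\lvert\I\rvert V_\I$ term), and one from the empirical-Bernstein confidence radii $\lambda_{t,i}$, whose summation brings in the variance $\W_\I$. Write $t_1$ for the first step of $\I$ and fix the reference arm $u=\argmin_i\mu_{t_1,i}$. First I would reduce to bounding the regret of Algorithm~\ref{alg:bandit} against this \emph{single fixed} arm $u$: since $\lvert\mu_{t,i}-\mu_{t_1,i}\rvert\le V_\I$ for all $t\in\I$ and all $i$, the true benchmark $\sum_{t\in\I}\mu_{t,u_t^*}$ differs from $\sum_{t\in\I}\mu_{t,u}$ by only $O(\lvert\I\rvert V_\I)$. (If $V_\I\ge 1$ the whole interval regret is at most $\lvert\I\rvert\le\lvert\I\rvert V_\I$ and there is nothing to prove, so assume $V_\I<1$.)

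Next I would set up the good event. For each arm $i$ and each $n\le\lvert\I\rvert$, apply Theorem~\ref{thm:emp_bernstein} to the losses of the first $n$ pulls of arm $i$ (independent $[0,1]$ variables, drawn independently of $\mathcal{A}$'s decisions), together with the analogous concentration for the empirical variance $\hat{\W}$ used in the analysis of UCB-V \cite{audibert2009exploration}; a union bound over the at most $KT$ pairs $(i,n)$ with $\delta=1/(KT)$ yields an event $E$ of probability $\ge 1-O(1/T)$ on which, for all $t\in\I$ and $i$ with $\lvert S_{t,i}\rvert\ge 1$,
$$\bigl|\hat{\mu}_{t,i}-\mu_{t_1,i}\bigr|\le\lambda_{t,i}+V_\I\quad\text{and}\quad\hat{\W}_{t,i}\le O\!\left(\frac{1}{\lvert S_{t,i}\rvert}\sum_{s\in S_{t,i}}\mathrm{Var}(\ell_{s,i})+V_\I+\frac{\log(2/\delta)}{\lvert S_{t,i}\rvert}\right),$$
where the $V_\I$ terms account for the bias in $\hat{\mu}$ and the inflation of $\hat{\W}$ caused by the drift (the latter because $(\ell_{r,i}-\ell_{s,i})^2\le 2(\mu_{r,i}-\mu_{s,i})^2+2(\xi_{r,i}-\xi_{s,i})^2$ with $(\mu_{r,i}-\mu_{s,i})^2\le V_\I^2\le V_\I$). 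Off $E$ the contribution to the expected regret is $O(1)$.

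On $E$, the selection rule $a_t=\argmin_i(\hat{\mu}_{t,i}-\lambda_{t,i})$ gives the usual chain $\mu_{t_1,a_t}-2\lambda_{t,a_t}-V_\I\le\hat{\mu}_{t,a_t}-\lambda_{t,a_t}\le\hat{\mu}_{t,u}-\lambda_{t,u}\le\mu_{t_1,u}+V_\I$, hence $\mu_{t,a_t}-\mu_{t,u}\le 2\lambda_{t,a_t}+O(V_\I)$ for every $t\in\I$, so the interval regret against $u$ is at most $2\sum_{t\in\I}\lambda_{t,a_t}+O(\lvert\I\rvert V_\I)$. It remains to bound $\sum_{t\in\I}\lambda_{t,a_t}$. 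The at most $2K$ steps with $\lvert S_{t,a_t}\rvert\le 1$ contribute $O(K)$ (there $\lambda=1$). For the rest, grouping by arm: if arm $i$ is pulled $n_i$ times, summing $\rho(\lvert S_{t,i}\rvert,\hat{\W}_{t,i},\delta)$ over its pulls, the $\tfrac{7\log(2/\delta)}{3(\lvert S_{t,i}\rvert-1)}$ part gives $O(\log(2/\delta)\log n_i)$, i.e. $\tilde{\mathcal{O}}(K)$ in total, while for the $\sqrt{2\hat{\W}_{t,i}\log(2/\delta)/\lvert S_{t,i}\rvert}$ part I would plug in the bound on $\hat{\W}$ from $E$ and use $\sum_{j\le n_i}\sqrt{\Sigma_{j,i}}/j=\tilde{\mathcal{O}}(\sqrt{\Sigma_{n_i,i}})$ (where $\Sigma_{j,i}$ is the cumulative variance over the first $j$ pulls of arm $i$) together with $\sum_{j\le n_i}1/\sqrt{j}=O(\sqrt{n_i})$, giving a per-arm bound $\tilde{\mathcal{O}}(\sqrt{\W_{\I,i}}+\sqrt{V_\I n_i}+1)$ with $\W_{\I,i}:=\sum_{t\in\I}\mathrm{Var}(\ell_{t,i})\ge\Sigma_{n_i,i}$. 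Summing over arms and using $\sum_i\sqrt{\W_{\I,i}}\le K\sqrt{\W_\I}$ and $\sum_i\sqrt{V_\I n_i}\le\sqrt{V_\I K\lvert\I\rvert}\le O(\lvert\I\rvert V_\I+K)$ (Cauchy--Schwarz then AM--GM), I get $\sum_{t\in\I}\lambda_{t,a_t}=\tilde{\mathcal{O}}(K\sqrt{\W_\I}+K+\lvert\I\rvert V_\I)$, which together with the earlier reductions proves the lemma. I expect the main obstacle to be the second half of the good-event control --- making the empirical-variance concentration rigorous when the pull counts are data-dependent while simultaneously quantifying correctly how the within-interval drift inflates both $\hat{\mu}$ and $\hat{\W}$ --- and then being careful with the sum $\sum_j\sqrt{\hat{\W}_{j,i}/j}$ so that it collapses to $\tilde{\mathcal{O}}(\sqrt{\W_{\I,i}})$ rather than picking up a spurious $\lvert\I\rvert$ or $n_i$ factor.
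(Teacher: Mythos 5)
Your high-level plan matches the paper's: establish a good event from Theorem~\ref{thm:emp_bernstein}, use the UCB selection rule plus drift slack to get a per-step regret of $2\lambda_{t,a_t}+O(V_\I)$, and then bound $\sum_{t\in\I}\lambda_{t,a_t}$. Where you diverge is in the central step, and you diverge into exactly the obstacle you flag at the end. You propose to control $\hat{\W}_{t,i}$ by \emph{concentrating} it around the true cumulative variance (a second good event, with all the attendant complications of data-dependent pull counts and non-identically-distributed samples), and then plug the true variances into $\rho$. The paper never does this: it bounds $\hat{\W}_{t,i}$ \emph{pathwise and deterministically} via $(\ell_{r,i}-\ell_{s,i})^2 \le 3\bigl((\ell_{r,i}-\mu_{r,i})^2+(\ell_{s,i}-\mu_{s,i})^2+(\mu_{r,i}-\mu_{s,i})^2\bigr) \le 3(\hat{\sigma}_{r,i}^2+\hat{\sigma}_{s,i}^2+V_\I^2)$, so that on every sample path $\hat{\W}_{t,i}\le O\bigl(\frac{1}{|S_{t,i}|}\sum_{r\in S_{t,i}}\hat{\sigma}_{r,i}^2+V_\I^2\bigr)$ and hence $\lambda_{t,i}\le\tilde{\mathcal{O}}\bigl(\frac{1}{|S_{t,i}|}(\sqrt{\sum_{r\in S_{t,i}}\hat{\sigma}_{r,i}^2}+1)+V_\I\bigr)$ (this is Lemma~\ref{lem:lambda}). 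Summing these realized quantities using $\sum_{t\in\I'}\frac{1}{|S_{t,a_t}|}\le K\ln|\I|$ and only \emph{then} applying Jensen to pull the expectation inside $\sqrt{\sum_t\hat{\sigma}_{t,a_t}^2}$ produces the $K\sqrt{\W_\I}$ term. The upshot is that the only concentration ever invoked is $|\hat{\mu}_{t,i}-\bar{\mu}_{t,i}|\le\lambda_{t,i}$; there is no empirical-variance concentration step to make rigorous, and no need to argue carefully about $\sum_j\sqrt{\hat{\W}_{j,i}/j}$ under a second good event. Your route is plausibly completable (using, e.g., a union bound over all $(i,n)$ together with the variance-concentration machinery from the UCB-V analysis and a drift-bias term), but it is strictly more machinery than the lemma requires; if you adopt the pathwise decomposition above, the obstacle you anticipate disappears.
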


By applying Lemma~\ref{lemma:reg_int} on each interval, we can bound the total regret of Algorithm~\ref{alg:bandit} by
$$\sum_{\gamma=1}^{T/B} \mathcal{\tilde{O}} \left(K \sqrt{\W_{\I_\gamma}} + K + B V_{\I_\gamma} \right) \le \mathcal{\tilde{O}}\left(K \sqrt{T\W/B} + K T/B  + BV \right)$$
from the Cauchy--Schwarz inequality.
This gives the bound of $\tilde{O}(\sqrt[3]{K^2 \Lambda VT})$ with $B=\sqrt[3]{K^2 \Lambda T/V^2}$ when $K \Lambda^2 \ge TV$, while the bound becomes $\mathcal{\tilde{O}}(\sqrt{KVT})$ with $B=\sqrt{KT/V}$ otherwise. Combining these two bounds together gives the regret bound of the theorem.

\subsection{Proof of Lemma~\ref{lemma:reg_int}} \label{app:reg_int}

Let us first consider a time step $t$ and bound the regret of our algorithm at that step, which is $\mu_{t,a_t}-\mu_{t,u_t^*}$, where $a_t$ denotes the arm chosen by our algorithm and $u_t^*$ is the best arm. Let $\bar{\mu}_{t,i} = \sum_{s \in S_{t,i}} \frac{\mu_{s,i}}{|S_{t,i}|}$, which is the expected value of $\hat{\mu}_{t,i}$. Note that when arm $a_t$ is pulled, we have $\hat{\mu}_{t,a_t}-\lambda_{t,a_t}\le \hat{\mu}_{t,u_t^*}-\lambda_{t,u_t^*}.$ This implies
$$\bar{\mu}_{t,a_t}-\bar{\mu}_{t,u_t^*} \le 2\lambda_{t,a_t}$$ when $|\bar{\mu}_{t,i}-\hat{\mu}_{t,i}| \le \lambda_{t,i}$ for every arm $i$, which happens with probability $1-\delta K$
by Theorem \ref{thm:emp_bernstein} and a union bound. Using the fact that $|\bar{\mu}_{t,i} - \mu_{t,i}| \le V_{\I}$ for any arm $i$, we have $$\mu_{t,a_t}-\mu_{t,u_t^*} \le \bar{\mu}_{t,a_t}-\bar{\mu}_{t,u_t^*} + 2 V_{\I} \le  2\lambda_{t,a_t} + 2 V_{\I}$$
with probability $1-\delta K$.

As a result, by summing over $t \in \I$, the total regret of our algorithm is at most
\begin{equation}\label{eq:lambda}
2\sum_{t\in \I} \lambda_{t,a_t} + 2 V_{\I} |\I| + \delta K \lvert\I\rvert,
\end{equation}
with the last term being at most $1$ for $\delta \le 1/(KT)$. It remains to bound the first term above, for which we rely on the following lemma.
\begin{lemma} \label{lem:lambda}
For any time step $t$ and any arm $i$ such that $|S_{t,i}| > 0$,
$$\lambda_{t,i} \le \mathcal{\tilde{O}}\left(\frac{1}{|S_{t,i}|} \left(\sqrt{\sum_{r \in S_{t,i}} \hat{\sigma}_{r,i}^2} + 1 \right) + V_{\I}\right), \;\mbox{ where } \hat{\sigma}_{r,i}^2 = (\ell_{r,i} - \mu_{r,i})^2.$$
\end{lemma}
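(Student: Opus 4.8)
The plan is to bound $\lambda_{t,i} = \rho(|S_{t,i}|, \hat{\W}_{t,i}, \delta)$ by controlling each of the two terms in the definition $\rho(n,\W,\delta) = \sqrt{2\W\log(2/\delta)/n} + 7\log(2/\delta)/(3(n-1))$, with $n = |S_{t,i}|$ and $\W = \hat{\W}_{t,i}$. Since $\delta = 1/(KT)$ is fixed, $\log(2/\delta) = \mathcal{\tilde{O}}(1)$, so the second term is already $\mathcal{\tilde{O}}(1/(|S_{t,i}|-1))$, which is absorbed into the $\mathcal{\tilde{O}}(1/|S_{t,i}|)$ part of the claimed bound (handling the edge case $|S_{t,i}|=1$ separately via the convention $\lambda_{t,i}=1$). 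The real work is to show that the empirical variance $\hat{\W}_{t,i}$ is, up to logarithmic factors, controlled by $\frac{1}{|S_{t,i}|}\sum_{r\in S_{t,i}} \hat\sigma_{r,i}^2 = \frac{1}{|S_{t,i}|}\sum_{r\in S_{t,i}}(\ell_{r,i}-\mu_{r,i})^2$ plus lower-order and drift terms.

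First I would rewrite $\hat{\W}_{t,i} = \sum_{r,s\in S_{t,i}} \frac{(\ell_{r,i}-\ell_{s,i})^2}{|S_{t,i}|(|S_{t,i}|-1)}$ using the standard identity relating the pairwise-difference form of the sample variance to the centered form: $\hat{\W}_{t,i} = \frac{2}{|S_{t,i}|-1}\sum_{r\in S_{t,i}} (\ell_{r,i} - \hat\mu_{t,i})^2$ (up to the exact constant). Then I would split $\ell_{r,i} - \hat\mu_{t,i} = (\ell_{r,i}-\mu_{r,i}) + (\mu_{r,i} - \bar\mu_{t,i}) + (\bar\mu_{t,i} - \hat\mu_{t,i})$ where $\bar\mu_{t,i}$ is the average of the true means over $S_{t,i}$, and use $(a+b+c)^2 \le 3(a^2+b^2+c^2)$. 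The first piece sums to exactly $\sum_{r\in S_{t,i}}\hat\sigma_{r,i}^2$. The second piece, $\sum_r (\mu_{r,i}-\bar\mu_{t,i})^2$, is bounded by $|S_{t,i}|\cdot \max_{r,s\in\I}(\mu_{r,i}-\mu_{s,i})^2 \le |S_{t,i}| V_{\I}^2$ since all steps lie in the interval $\I$ and the total drift there is $V_{\I}$ (so any two means within $\I$ differ by at most $V_{\I}$ in each coordinate). The third piece, $|S_{t,i}|(\bar\mu_{t,i}-\hat\mu_{t,i})^2$: here $\bar\mu_{t,i}-\hat\mu_{t,i}$ is exactly the average deviation $\frac{1}{|S_{t,i}|}\sum_r(\mu_{r,i}-\ell_{r,i})$, so by Jensen/Cauchy–Schwarz $|S_{t,i}|(\bar\mu_{t,i}-\hat\mu_{t,i})^2 \le \sum_r (\ell_{r,i}-\mu_{r,i})^2 = \sum_r\hat\sigma_{r,i}^2$, which folds back into the first piece. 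Putting this together gives $(|S_{t,i}|-1)\hat{\W}_{t,i} \le \mathcal{O}\big(\sum_{r\in S_{t,i}}\hat\sigma_{r,i}^2 + |S_{t,i}| V_{\I}^2\big)$, hence $\hat{\W}_{t,i} \le \mathcal{O}\big(\frac{1}{|S_{t,i}|-1}\sum_r\hat\sigma_{r,i}^2 + V_{\I}^2\big)$ (roughly).

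Then I would plug this into the first term of $\rho$: $\sqrt{2\hat{\W}_{t,i}\log(2/\delta)/|S_{t,i}|} \le \mathcal{\tilde{O}}\Big(\sqrt{\frac{1}{|S_{t,i}|(|S_{t,i}|-1)}\sum_r\hat\sigma_{r,i}^2} + \sqrt{\frac{V_{\I}^2}{|S_{t,i}|}}\Big) \le \mathcal{\tilde{O}}\Big(\frac{1}{|S_{t,i}|}\sqrt{\sum_r\hat\sigma_{r,i}^2} + V_{\I}\Big)$, using $\sqrt{1/(|S_{t,i}|(|S_{t,i}|-1))} \le \mathcal{O}(1/|S_{t,i}|)$ for $|S_{t,i}|\ge 2$ and $\sqrt{V_{\I}^2/|S_{t,i}|}\le V_{\I}$. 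Combining with the second term of $\rho$ (which is $\mathcal{\tilde{O}}(1/|S_{t,i}|)$) yields exactly the stated bound $\lambda_{t,i} \le \mathcal{\tilde{O}}\big(\frac{1}{|S_{t,i}|}(\sqrt{\sum_r\hat\sigma_{r,i}^2}+1)+V_{\I}\big)$.

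The main obstacle I anticipate is not any single estimate but getting the variance-identity bookkeeping exactly right — in particular being careful that the cross-term handling is purely algebraic (no concentration needed, since $\hat{\W}_{t,i}$ is a deterministic function of the realized losses) and that the drift contribution is genuinely $O(V_{\I}^2)$ per step rather than something that accumulates badly over $S_{t,i}$. A secondary subtlety is the small-$|S_{t,i}|$ regime: when $|S_{t,i}|=1$ the quantity $\hat{\W}_{t,i}$ and the $\frac{1}{|S_{t,i}|-1}$ factor are undefined, so I would dispatch that case directly from the convention $\lambda_{t,i}=1 = \mathcal{O}(1/|S_{t,i}|)$, and assume $|S_{t,i}|\ge 2$ throughout the main argument.
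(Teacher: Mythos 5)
Your proof is correct and takes essentially the same approach as the paper: a purely algebraic bound on the empirical variance $\hat{\W}_{t,i}$ decomposing into the statistical noise $\hat\sigma_{r,i}^2$ and the drift $V_{\I}^2$, plugged into the two terms of $\rho$. The only difference is bookkeeping: the paper bounds each pairwise term $(\ell_{r,i}-\ell_{s,i})^2 \le 3(\hat\sigma_{r,i}^2 + \hat\sigma_{s,i}^2 + V_{\I}^2)$ directly and averages, whereas you first pass to the centered form $\frac{2}{|S|-1}\sum_r(\ell_{r,i}-\hat\mu_{t,i})^2$, which makes you pay an extra Cauchy--Schwarz step to absorb the $(\bar\mu_{t,i}-\hat\mu_{t,i})$ term; both routes land on the same $\hat{\W}_{t,i}\le \mathcal{O}\bigl(\frac{1}{|S_{t,i}|}\sum_r\hat\sigma_{r,i}^2 + V_{\I}^2\bigr)$ and the rest is identical.
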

Before proving the lemma, let us first use it to bound the first term in (\ref{eq:lambda}). Let us divide $\I$ into two parts: $\I' = \{t \in \I: |S_{t,a_t}| \ge 2 \}$ and $\I \setminus \I'$. Then $\sum_{t\in \I'} \lambda_{t,a_t}$ is at most
$$\mathcal{\tilde{O}}\left(\sum_{t\in \I'} \frac{1}{|S_{t,a_t}|} \left(\sqrt{\sum_{r \in \I'} \hat{\sigma}_{r,a_r}^2} + 1\right) + |\I'| V_{\I'} \right) \le \mathcal{\tilde{O}}\left(K \sqrt{\sum_{r \in \I'} \hat{\sigma}_{r,a_r}^2} + K + |\I| V_{\I} \right),$$
since $\sum_{t \in \I'} \frac{1}{|S_{t,a_t}|} \le K \sum_{t \in \I} \frac{1}{t} \le K \ln |\I|$, and recall that we use $\mathcal{\tilde{O}}(\cdot)$ to hide logarithmic factors. Then by combining all the bounds together, we have
$$\sum_{t\in \I} \lambda_{t,a_t} \le 2K + \sum_{t\in \I'} \lambda_{t,a_t} \le \mathcal{\tilde{O}}\left(K\sqrt{\sum_{t\in \I} \hat{\sigma}_{t,a_t}^2}+ K + |\I| V_{\I} \right).$$
Substituting this into the bound in (\ref{eq:lambda}) and taking its expectation, we can conclude that the expected regret of our algorithm is at most
$$\mathcal{\tilde{O}}\left(K\sqrt{\sum_{t\in \I}\sum_{i \in [K]} \ex{\hat{\sigma}_{r,i}^2}} + K + |\I| V_{\I} \right) = \mathcal{\tilde{O}}\left(K\sqrt{\W_\I} + K + |\I| V_{\I} \right),$$
by Jensen's inequality and the definition of $\W_\I$. This gives the regret bound claimed by Lemma~\ref{lemma:reg_int}. Thus, to complete the proof, it remains to prove Lemma~\ref{lem:lambda}, which we do next.

\begin{proof}[Proof of Lemma~\ref{lem:lambda}]
Consider any $t$ and $i$, and for ease of presentation, let us drop the indices involving $t$ and $i$. To bound $\lambda = \rho(|S|, \hat{\W}, \delta)$, defined in Theorem~\ref{thm:emp_bernstein}, let us first bound $\hat{\W} = \sum_{r,s \in S} (\ell_r - \ell_s)^2 / (|S|(|S|-1))$. Note that each $(\ell_r - \ell_s)^2$ can be expressed as
\begin{eqnarray}
((\ell_r - \mu_r) - (\ell_s - \mu_s) + (\mu_r - \mu_s))^2
&\le& 3 \left((\ell_r - \mu_r)^2 + (\ell_s - \mu_s)^2 + (\mu_r - \mu_s)^2\right) \label{eq:l-mu}\\
&\le& 3 \left(\hat{\sigma}_r^2 + \hat{\sigma}_s^2 + V^2 \right),\nonumber
\end{eqnarray}
using the Cauchy--Schwarz inequality as well as the definition $(\ell_r - \mu_r)^2 = \hat{\sigma}_r^2$ and the fact $(\mu_r - \mu_s)^2 \le V^2$. Thus, we have
$$\hat{\W} = \sum_{r,s \in S} \frac{(\ell_r - \ell_s)^2}{|S|(|S|-1)} \le \mathcal{O}\left(\sum_{r \in S} \frac{\hat{\sigma}_r^2}{|S|}  + V^2 \right).$$
By plugging this bound into the definition of $\lambda$ and using $\mathcal{\tilde{O}}(\cdot)$ to hide logarithmic factors, we have
$$\lambda \le \mathcal{\tilde{O}}\left(\sqrt{\sum_{r \in S} \frac{\hat{\sigma}_r^2}{|S|^2} + V^2} + \frac{1}{|S|} \right) \le \mathcal{\tilde{O}}\left(\frac{1}{|S|} \left(\sqrt{\sum_{r \in S} \hat{\sigma}_r^2} +1 \right) + V \right),$$
since $\sqrt{a+b}\le \sqrt{a}+\sqrt{b}$ for any $a,b\ge 0$. This proves the lemma.
\end{proof}

\section{Proof of Theorem~\ref{thm:dev}} \label{app:dev}

Recall that our Algorithm~\ref{alg:dev} is based on that of \cite{ChiangYLMLJZ12} which is known to have a good regret bound against any offline comparator which does not switch arms. As we consider the dynamic regret here, we need to extend the bound to work against offline comparators which can switch arms. The following lemma provides such a bound.

\begin{lemma} \label{lem:dev}
With the choice of $\eta=\sqrt{N/(\W+KV)}$, the regret of Algorithm~\ref{alg:dev}, against an offline comparator switching arms less than $N$ times, is at most $\mathcal{O}(\sqrt{N(\W+KV)})$.
\end{lemma}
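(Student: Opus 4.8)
The plan is to prove Lemma~\ref{lem:dev} by the standard ``optimistic mirror descent'' analysis (following \cite{ChiangYLMLJZ12}), but splitting the comparator sequence into at most $N+1$ blocks on which it is constant and summing the per-block regret bounds, then balancing the learning rate $\eta$.

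First I would set up the one-step inequality. For the two proximal steps in Algorithm~\ref{alg:dev} (computing $\hat{x}_t$ from $x_t$ using the predicted loss $\ell_{t-1}$, then $x_{t+1}$ from $x_t$ using the true loss $\ell_t$), the standard prox-lemma / three-point identity gives, for any fixed comparator $u \in \X$,
$$\langle \ell_t, \hat{x}_t - u\rangle \le \frac{1}{\eta}\left(\|x_t - u\|_2^2 - \|x_{t+1}-u\|_2^2\right) + \eta\,\|\ell_t - \ell_{t-1}\|_2^2 - \frac{1}{\eta}\|\hat{x}_t - x_{t+1}\|_2^2 - \frac{1}{\eta}\|\hat{x}_t - x_t\|_2^2.$$
(The last two negative terms come from strong convexity of the squared Euclidean norm; they are what let the $\eta\|\ell_t-\ell_{t-1}\|_2^2$ term appear instead of $\eta\|\ell_t\|_2^2$.) Dropping the negative terms, this is the basic per-step bound, with the telescoping potential $\|x_t - u\|_2^2$.

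Next I would handle the switching comparator. Write the offline comparator sequence $u_1,\dots,u_T$, which changes value fewer than $N$ times, so $[T]$ partitions into $N$ (or fewer) contiguous blocks $[T] = J_1 \cup \dots \cup J_N$ on each of which $u_t$ is a constant vertex of the simplex. Summing the one-step inequality over $t$ within a block $J_j$ telescopes the potential to $\frac{1}{\eta}\|x_{\text{start}(J_j)} - u_{J_j}\|_2^2 \le \frac{1}{\eta}\cdot 2$ (diameter of the simplex in $\ell_2$ is at most $\sqrt 2$, and indeed $\le 2$ suffices), and leaves $\eta \sum_{t\in J_j}\|\ell_t - \ell_{t-1}\|_2^2$. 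Summing over all $N$ blocks gives dynamic regret $\le \frac{2N}{\eta} + \eta \sum_{t=1}^T \|\ell_t - \ell_{t-1}\|_2^2$. (A mild care point: at block boundaries one reuses $x_t$ rather than restarting, so the ``restart'' cost $\frac1\eta\|x_t-u\|_2^2$ is paid anew per block — this is exactly why the $N$ appears as a multiplicative factor on the potential, and it is the step most worth writing carefully.)

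Finally I would pass from the realized deviation $\sum_t \|\ell_t - \ell_{t-1}\|_2^2$ to the parameters $\W$ and $V$ in expectation. Writing $\ell_t - \ell_{t-1} = (\ell_t - \mu_t) - (\ell_{t-1}-\mu_{t-1}) + (\mu_t - \mu_{t-1})$ and using $\|a+b+c\|_2^2 \le 3(\|a\|_2^2+\|b\|_2^2+\|c\|_2^2)$, we get $\mathbb{E}\sum_t\|\ell_t-\ell_{t-1}\|_2^2 \le \mathcal{O}(\W) + 3\sum_t \|\mu_t - \mu_{t-1}\|_2^2 \le \mathcal{O}(\W) + 3K\sum_t\|\mu_t-\mu_{t-1}\|_\infty^2 \le \mathcal{O}(\W + KV)$, the last step since $\|\mu_t - \mu_{t-1}\|_\infty \le 1$ so $\|\mu_t-\mu_{t-1}\|_\infty^2 \le \|\mu_t-\mu_{t-1}\|_\infty$ and the sum is $V$. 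Plugging this into $\frac{2N}{\eta}+\eta\,\mathcal{O}(\W+KV)$ and optimizing gives $\eta = \sqrt{N/(\W+KV)}$ and expected regret $\mathcal{O}(\sqrt{N(\W+KV)})$, as claimed. I expect the main obstacle to be the bookkeeping in the second step — correctly accounting for the per-block potential resets so that the factor $N$ (rather than, say, $1$ or $T$) multiplies the diameter term, and making sure the predicted-loss term $\ell_{t-1}$ at a block boundary does not cause trouble; the first and third steps are essentially routine.
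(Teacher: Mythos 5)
Your proposal is correct and follows essentially the same route as the paper's proof: the one-step optimistic-mirror-descent bound from \cite{ChiangYLMLJZ12}, telescoping the quadratic potential within each of the $N$ constant blocks of the comparator to incur a $\mathcal{O}(N/\eta)$ cost, and then bounding $\mathbb{E}\sum_t\|\ell_t-\ell_{t-1}\|_2^2$ by $\mathcal{O}(\W+KV)$. The only cosmetic difference is that the paper evaluates $\mathbb{E}\|\ell_t-\ell_{t-1}\|_2^2$ exactly (the cross terms vanish by independence and centering), whereas you use the triangle-type bound $\|a+b+c\|_2^2\le 3(\|a\|_2^2+\|b\|_2^2+\|c\|_2^2)$, which only costs a constant factor and does not affect the $\mathcal{O}(\cdot)$ claim.
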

\begin{proof}
Consider any offline comparator which switches arms $N-1$ times, say at $t_1, \dots, t_{N-1}$. More specifically, assuming $t_0=1$ and $t_N=T+1$, the arm it plays at step $t \in [t_n, t_{n+1})$ remains the same as that at step $t_n$, for any $0 \le n \le N-1$.
Let $\pi_t$ denote the characteristic vector of the arm it plays at step $t$. Then when compared against it, the expected regret of our algorithm at step $t$ with respect to a realized loss vector $\ell_t$ (sampled from the distribution) is
$$\sum_{i=1}^K \ell_{t,i} \left(\hat{x}_{t,i} - \pi_{t,i}\right) = \langle \ell_t, \hat{x}_t - \pi_t \rangle,$$
which according to \cite{ChiangYLMLJZ12} is at most
$$\eta \|\ell_t - \ell_{t-1}\|_2^2 + \frac{1}{2\eta}\left(\|\pi_t - \hat{x}_t\|_2^2 - \|\pi_t - \hat{x}_{t+1}\|_2^2\right).\footnote{Although \cite{ChiangYLMLJZ12} considered the case with identical $\pi_t = \pi$ for every $t$, it is straightforward to see this from their Lemma~5 and their proof of Theorem~8.}$$
To bound the total expected regret with respect to these realized loss vectors, let us sum the above over $t$ and observe that the second term above can telescope within each interval and have only $N$ terms remaining at the boundaries between intervals. More precisely,
we have
\begin{eqnarray*}
\sum_{t=1}^T \left(\|\pi_t - \hat{x}_t\|_2^2 - \|\pi_t - \hat{x}_{t+1}\|_2^2\right) &=& \sum_{n=0}^{N-1} \sum_{t=t_n}^{t_{n+1}-1} \left(\|\pi_{t_n} - \hat{x}_t\|_2^2 - \|\pi_{t_n} - \hat{x}_{t+1}\|_2^2\right)\\ &\le& \sum_{n=0}^{N-1} \|\pi_{t_n} - \hat{x}_{t_n}\|_2^2,
\end{eqnarray*}
which is at most $2N$ as $\|\pi_n - \hat{x}_{t_n}\|_2^2 \le 2$. Finally, by taking an additional expectation over the randomness of these loss vectors, the resulting expected regret of Algorithm~\ref{alg:dev} can be bounded from above by
$$\eta \sum_{t=1}^T \ex{\|\ell_t - \ell_{t-1}\|_2^2} + \frac{N}{\eta}.$$
To bound the first sum above, note that using the notation $\sigma_\tau^2 = \ex{\|\ell_\tau\|_2^2} - \|\mu_\tau\|_2^2$, we have
\begin{eqnarray}
\ex{\|\ell_t - \ell_{t-1}\|_2^2} &=& \ex{\|\ell_t\|_2^2} + \ex{\|\ell_{t-1}\|_2^2} - 2 \langle \mu_t, \mu_{t-1}\rangle \nonumber\\
&=& \sigma_t^2 + \sigma_{t-1}^2 + \|\mu_t\|_2^2 + \|\mu_{t-1}\|_2^2 - 2 \langle \mu_t, \mu_{t-1}\rangle \nonumber\\ &=& \sigma_t^2 + \sigma_{t-1}^2 + \|\mu_t-\mu_{t-1}\|_2^2. \label{eq:l-l}
\end{eqnarray}
This implies that the expected regret of Algorithm~\ref{alg:dev} is at most
$$2 \eta \sum_{t=0}^T \sigma_t^2 + \eta \sum_{t=1}^T \|\mu_t-\mu_{t-1}\|_2^2 + \frac{N}{\eta} \le 2 \eta \W + \eta K V + \frac{N}{\eta},$$
as $\sigma_t^2 = \ex{\|\ell_t - \mu_t\|_2^2}$ and $\sum_{t=0}^T \ex{\|\ell_t - \mu_t\|_2^2} \le \W$, using the convention that $\sigma_0=0$.
Then the lemma follows by choosing $\eta = \sqrt{N/(\W+KV)}$.
\end{proof}

For switching distributions parameterized by $\Gamma$, we have $V \le \Gamma$, and by choosing $N=\Gamma$ and $\eta= \sqrt{\Gamma/(\W+K\Gamma)}$, Lemma~\ref{lem:dev} provides a regret bound of $\mathcal{O}(\sqrt{\Gamma(\W+K\Gamma)}) \le \mathcal{O}(\sqrt{\Gamma \W} + \sqrt{K} \Gamma)$ for Algorithm~\ref{alg:dev}.



\section{Proof of Theorem~\ref{thm:full_info_given_V}} \label{app:free}


Let us first focus on a single interval $\I= [t_1, t_2]$ of time steps, and consider the simpler case of static regret, measured against a single fixed arm.
Recall that our learning rates are set as
$$\eta_t = 1/ \sqrt{4\sum_{\tau=t_1}^{t-1} \lVert \ell_\tau -\ell_{\tau-1}\rVert ^2_2}$$
for $t \in \I$, with $\eta_{t_1} = \infty$ initially for every interval.
Note that our learning rates are set similarly to those used in \cite{jadbabaie2015online}. The main difference is that we allow them to be $\infty$, which prevents us from applying their result directly. Still, similarly to \cite[Lemma 1]{jadbabaie2015online}, we can have the following guarantee.

\begin{lemma}
Using the learning rates given above, the regret of Algorithm \ref{alg:dev} in any interval $\I = [t_1, t_2]$ with respect to any fixed arm is at most
$\mathcal{O}\left(\sqrt{ \sum_{t \in \I} \lVert \ell_t-\ell_{t-1}\rVert _2^2}\right).$
\end{lemma}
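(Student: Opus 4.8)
The plan is to adapt the standard analysis of optimistic gradient descent (from \cite{ChiangYLMLJZ12}) to the case of a time-varying learning rate that is allowed to start at $\infty$, following the structure of \cite[Lemma 1]{jadbabaie2015online} but being careful about the boundary term. First I would fix the interval $\I = [t_1,t_2]$ and a comparator arm $\pi$ (a vertex of the simplex), and write the per-step regret $\langle \ell_t, \hat{x}_t - \pi\rangle$. Using the two-step update structure of Algorithm~\ref{alg:dev} (the prediction step with $\ell_{t-1}$ and the correction step with $\ell_t$) together with the first-order optimality conditions for the two projected minimizations, one obtains the familiar bound
$$\langle \ell_t, \hat{x}_t - \pi\rangle \le \eta_t \|\ell_t - \ell_{t-1}\|_2^2 + \frac{1}{2\eta_t}\left(\|\pi - x_t\|_2^2 - \|\pi - x_{t+1}\|_2^2\right) - \frac{1}{2\eta_t}\left(\|\hat{x}_t - x_t\|_2^2 + \|\hat{x}_t - x_{t+1}\|_2^2\right),$$
where the last (negative) term is what lets us absorb the gradient-difference term when $\eta_t$ is not too large; here one uses $\|\ell_t-\ell_{t-1}\|_\infty \le 1$ on the simplex so that $\eta_t \|\ell_t-\ell_{t-1}\|_2^2$ is controlled by the negative quadratic terms in the regime where $\eta_t \le 1/4$ or so. Summing over $t\in\I$, the middle telescoping term collapses to $\frac{1}{2\eta_{t_2}}\|\pi - x_{t_1}\|_2^2$-type quantities up to the monotone variation of $1/\eta_t$, and since $1/\eta_t$ is nondecreasing the extra terms from varying step size are nonnegative and bounded by $O(1/\eta_{t_2})$ because $\|\pi - x_t\|_2^2 \le 2$.

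The second step is to handle the $t=t_1$ term, where $\eta_{t_1}=\infty$. Here the key observation is that with $\eta_{t_1}=\infty$ the prediction step just plays $\hat{x}_{t_1}=x_{t_1}$ (the unconstrained-in-$\eta$ minimizer degenerates to the linear minimizer, or more simply one defines $\hat{x}_{t_1}=x_{t_1}$), so the only cost incurred at $t_1$ is $\langle \ell_{t_1}, x_{t_1}-\pi\rangle \le \|\ell_{t_1}\|_\infty \le 1$, a constant — or, with the convention $\ell_{t_1-1}$ being the previous realized loss vector across the interval boundary, the cost is at most $\|\ell_{t_1}-\ell_{t_1-1}\|_1$ which folds into the target sum. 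Either way the first step contributes only $O(1)$ or is already of the desired form, and it does not multiply the number of intervals because it is absorbed.

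The third step is to combine: the sum $\sum_{t\in\I,\, t>t_1} \eta_t\|\ell_t-\ell_{t-1}\|_2^2$ is, by the choice $\eta_t = 1/\sqrt{4\sum_{\tau=t_1}^{t-1}\|\ell_\tau-\ell_{\tau-1}\|_2^2}$, a standard "sum of $a_t/\sqrt{a_1+\dots+a_{t-1}}$" quantity that telescopes to $O(\sqrt{\sum_{t\in\I}\|\ell_t-\ell_{t-1}\|_2^2})$ (using the elementary inequality $\sum_t a_t/\sqrt{\sum_{s<t}a_s}\le 2\sqrt{\sum_t a_t}$, with a little care for the first nonzero term), and likewise $1/\eta_{t_2}=2\sqrt{\sum_{\tau=t_1}^{t_2-1}\|\ell_\tau-\ell_{\tau-1}\|_2^2}$ is of exactly the target order. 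Adding the $O(1)$ boundary term gives the claimed $\mathcal{O}(\sqrt{\sum_{t\in\I}\|\ell_t-\ell_{t-1}\|_2^2})$, and one notes the bound is deterministic in the realized losses so no extra expectation manipulation is needed at this stage.

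The main obstacle I expect is getting the $\eta_{t_1}=\infty$ boundary case to interact cleanly with the telescoping sum of $1/\eta_t$ differences: the usual analysis writes the sum of $(\frac{1}{\eta_t}-\frac{1}{\eta_{t-1}})\|\pi-x_t\|_2^2$ and needs $\frac{1}{\eta_{t_1}}=0$ so that this term is well-defined and nonnegative, and one must simultaneously ensure the negative quadratic "stability" terms $-\frac{1}{2\eta_t}(\|\hat x_t-x_t\|_2^2+\|\hat x_t-x_{t+1}\|_2^2)$ are still enough to dominate $\eta_t\|\ell_t-\ell_{t-1}\|_2^2$ for every $t>t_1$ — this requires the inductive/aggregate bound $\eta_t\le$ const, which in turn needs a lower bound on $\sum_{\tau<t}\|\ell_\tau-\ell_{\tau-1}\|_2^2$ or a separate treatment of the steps before the cumulative variation reaches a constant threshold (bounding their regret by a constant directly). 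Managing this "warm-up" segment without letting its cost scale with the number of intervals is the delicate point; everything else is routine convex-analysis bookkeeping.
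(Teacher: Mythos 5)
Your high-level plan follows the same template as the paper (per-step optimistic-GD bound, telescope, handle the $\eta=\infty$ boundary), but it misses the one observation that actually closes the argument, and that observation is not optional.

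The warm-up segment where $\eta_t=\infty$ is not just the single step $t_1$: it is the entire prefix $[t_1,\rho)$, where $\rho$ is the first step in $\I$ with $\ell_\rho\neq\ell_{\rho-1}$, and this prefix can be as long as the whole interval. You propose to handle this regime by ``bounding their regret by a constant directly,'' which yields $O(\rho-t_1)$ and can be $\Omega(B)$, destroying the desired bound when $\sum_{t\in\I}\|\ell_t-\ell_{t-1}\|_2^2$ is small. The paper's resolution is to notice that $\eta_t=\infty$ occurs exactly when all past within-interval loss differences vanish, so on that prefix $\ell_t=\ell_{t-1}$; since the prediction step with $\eta_t=\infty$ plays the linear minimizer $\hat{x}_t=\arg\min_{x\in\X}\langle\ell_{t-1},x\rangle=\arg\min_{x\in\X}\langle\ell_t,x\rangle$, the regret of each such step is $\le 0$, not merely $O(1)$. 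This is a strictly stronger statement than anything in your proposal, and it is what makes the warm-up free. Your side remark that ``one defines $\hat{x}_{t_1}=x_{t_1}$'' is also wrong as a description of the algorithm: with $1/\eta_t=0$ the projected minimization in the prediction step reduces to the plain linear minimizer over the simplex, not to $x_t$, and the correctness of the zero-regret claim depends on that.

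A second, smaller gap: your per-step bound carries a factor $\eta_t$ on $\|\ell_t-\ell_{t-1}\|_2^2$. At $t=\rho$ this is $\infty\cdot(\text{positive})=\infty$, so the bound is vacuous there, and for $t<\rho$ it is the indeterminate $\infty\cdot 0$. The paper avoids this by invoking \cite[Lemma~1]{jadbabaie2015online}, whose per-step bound is phrased in terms of $\eta_{t+1}$ (which is finite for $t\ge\rho$) and is stated so that it holds from an arbitrary starting point $\hat{x}_\rho$; this is precisely what lets one start the analysis at $\rho$ after discarding the zero-regret prefix. If you insist on an $\eta_t$-indexed bound, you must separately handle $t=\rho$ (for example by $\langle\ell_\rho,\hat{x}_\rho-\pi\rangle\le\langle\ell_\rho-\ell_{\rho-1},\hat{x}_\rho-\pi\rangle\le 2\|\ell_\rho-\ell_{\rho-1}\|_2$), and you must still justify why the stability terms dominate $\eta_t\|\ell_t-\ell_{t-1}\|_2^2$ for $t>\rho$ without a uniform constant cap on $\eta_t$; the cleanest route is simply to cite the jadbabaie et al.\ lemma as the paper does, rather than reprove it with the constant-threshold case split you sketch.
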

\begin{proof}
Let $\rho$ be the first time step in $\I$ such that $\lVert \ell_{\rho}-\ell_{\rho-1} \rVert_2 \neq 0$, so that for any step $t < \rho$ in $\I$, we have $\eta_t =\infty$ and
$\ell_t=\ell_{t-1}$. This implies that for any step $t < \rho$ in $\I$, what Algorithm~\ref{alg:dev} plays is $\hat{x}_t=\arg\min_{x\in\X} \langle \ell_{t-1}, x\rangle=\arg\min_{x\in\X} \langle \ell_t, x\rangle$, which is optimal and has no positive regret.
Therefore, it suffices to bound the regret accumulated from step $\rho$ to step $t_2$, which we can do by applying \cite[Lemma 1]{jadbabaie2015online}. To see this, note that the bound there works for any arbitrary starting point $\hat{x}_\rho$ (here computed according to $\eta_\rho = \infty$), and after that we indeed have $\eta_t < \infty$ for $t>\rho$ in $\I$. Then according to \cite[Lemma 1]{jadbabaie2015online}, we can upper-bound the regret accumulated from step $\rho$ to step $t_2$ by
$$\sum_{t=\rho}^{t_2} \frac{\eta_{t+1}}{2}\lVert \ell_t-\ell_{t-1}\rVert_2^2+\frac{8}{\eta_{t_2+1}}
= \mathcal{O}\left(\sqrt{\sum_{t=t_1}^{t_2} \lVert\ell_t- \ell_{t-1}\rVert_2^2}\right).$$
This proves the lemma.
\end{proof}

The regret bound above is stated in terms of any specific realization of loss vectors. By taking the expectation over their randomness, we can upper-bound the expected regret by
$$\mathcal{O}\left(\ex{\sqrt{\sum_{t=t_1}^{t_2} \lVert\ell_t- \ell_{t-1}\rVert_2^2}}\right) \le \mathcal{O}\left(\sqrt{\sum_{t=t_1}^{t_2} \ex{\lVert\ell_t- \ell_{t-1}\rVert_2^2}}\right),$$
using Jensen's inequality, which according to \eqref{eq:l-l} in the proof of Lemma~\ref{lem:dev} is at most
$$\mathcal{O}\left(\sqrt{\sum_{t=t_1-1}^{t_2} \sigma_t^2 + \sum_{t=t_1}^{t_2} \lVert\mu_t- \mu_{t-1}\rVert_2^2}\right) \le \mathcal{O}\left(\sqrt{\sum_{t=t_1-1}^{t_2} \sigma_t^2} + \sqrt{K} \sum_{t=t_1}^{t_2} \lVert\mu_t- \mu_{t-1}\rVert_\infty \right),$$
using the fact that $\sqrt{a + b} \le \sqrt{a} + \sqrt{b}$ for $a,b\ge 0$ and $\lVert\mu_t- \mu_{t-1}\rVert_2^2 \le K \lVert\mu_t- \mu_{t-1}\rVert_\infty^2$.

By summing such bounds for the $T/B$ intervals, we can upper bound the total regret by $$\mathcal{O}\left(\sqrt{(T/B) \sum_{t=0}^{T} \sigma_t^2} + \sqrt{K} \sum_{t=1}^{T} \lVert\mu_t- \mu_{t-1}\rVert_\infty \right) \le \mathcal{O}(\sqrt{(T/B) \W} + \sqrt{K} V)$$
using the Cauchy–Schwarz inequality and the definition of $\W$ and $V$.
Note that the regret above works against the best offline comparator which plays a fixed arm in each interval (possibly different arms in different intervals),
while our goal is to compete with any fully dynamic comparator which can switch arms at every step. Therefore, we would like to bound the difference between the total losses of these two comparators. This can be done similarly to that in the proof of Lemma~\ref{lemma:reg_int}. More precisely, let us focus on one interval $\I$, and we claim that the difference between their losses in $\I$ is at most $2B V_\I$, where $V_\I = \sum_{t \in \I} \| \mu_t - \mu_{t-1}\|_\infty$ is the total drift in $\I$. To see this, suppose arm $k$ is the best fixed arm in $\I$, which implies that it must the best arm at some step $s \in \I$. Then for any step $t \in \I$, it has $\mu_{t,k} \le \mu_{s,k} + V_\I$ while the best arm at step $t$, denoted as $u_t*$, has $\mu_{t,u_t*} \ge \mu_{s,u_t*} - V_\I \ge \mu_{s,k} - V_\I$. Therefore, we have
$$\sum_{t \in \I} \left(\mu_{t,k} - \mu_{t, u_t*}\right) \le \sum_{t \in \I} \left(\left(\mu_{s,k}+V_\I\right) - \left(\mu_{s,k}-V_\I\right)\right) \le 2B V_\I.$$
Summing over the intervals, we can then upper-bound their total difference by $2BV$.

As a result, we can conclude that the total regret of our algorithm against any fully dynamic comparator is at most
$\mathcal{O}(\sqrt{(T/B)\W}+ \sqrt{K}V +BV).$
This bound is at most $\mathcal{O}(\sqrt[3]{\W V T}+\sqrt{K}V)$ by setting $B=\sqrt[3]{\W T/V^2}$ when $\W T> V^2$ and at most $\mathcal{O}(\sqrt{K}V)$ by setting $B=1$ otherwise. Thus, we have the required bound in each case, which proves the theorem.


\section{Proof of Theorem~\ref{thm_RTk_bound}} \label{app:RTK}

Let $R_{T,k}$ denote the regret of our Optimistic-Adapt-ML-Prod algorithm (Algorithm~\ref{alg:Op_Prod}) with respect to expert $k$, which is defined as
$$R_{T,k} = \sum_{t=1}^T r_{t,k} = \sum_{t=1}^T (\langle p_t, \ell_t\rangle - \ell_{t,k}).$$
First, as in \cite[Theorem 3]{Gaillard2014}, we can bound the regret $R_{T,k}$ in the following form.

\begin{lemma} \label{lemma_regret}
For any $k$, $R_{t,k}$ is at most
\begin{equation}\label{eqn:regret}
\frac{1}{\eta_{0,k}}\ln \frac{1}{w_{0,k}}+\sum_{t=1}^T\eta_{t-1,k}(r_{t,k}-m_{t,k})^2+\frac{1}{\eta_{T,k}}\ln \left(1+\frac{1}{e}\sum_{k^{\prime}=1}^K\sum_{t=1}^T \left(\frac{\eta_{t-1,k^{\prime}}}{\eta_{t,k^{\prime}}}-1\right)\right).
\end{equation}
\end{lemma}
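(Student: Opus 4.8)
The plan is to follow the potential-function argument of \cite[Theorem 3]{Gaillard2014}, adapted to the optimistic update \eqref{eq:p_t}--\eqref{eq:eta_t}. The potential will be $\Phi_t = \sum_{k'} w_{t,k'}$ (or a normalized variant), and the goal is to track how $\ln \Phi_t$ evolves across rounds. First I would establish a per-round inequality: using the elementary bound $\exp(x - x^2) \le 1 + x$ valid for $|x| \le \tfrac12$ (applicable since $\eta_{t-1,k}|r_{t,k}-m_{t,k}| \le \eta_{t-1,k}\cdot 2 \le \tfrac12$ by \eqref{eq:eta_t}), together with the definition \eqref{eq:w_t} of $w_{t,k}$ and the optimistic shift $\tilde w_{t-1,k} = w_{t-1,k}\exp(\eta_{t-1,k}m_{t,k})$, I would show that the ``pre-exponent'' weight $w_{t-1,k}\exp(\eta_{t-1,k}r_{t,k} - \eta_{t-1,k}^2(r_{t,k}-m_{t,k})^2)$ is bounded in terms of $\tilde w_{t-1,k}(1 + \eta_{t-1,k}(r_{t,k}-m_{t,k}))$, and then that $\sum_{k'} \eta_{t-1,k'}\tilde w_{t-1,k'}(r_{t,k'}-m_{t,k'}) \le 0$ by the choice of $p_t$ in \eqref{eq:p_t}, so that summing over $k'$ keeps the total weight from growing (after accounting for the $m$-shift). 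This is the key ``prod'' step and the analogue of \cite[Lemma 3]{Gaillard2014}.

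Next I would handle the learning-rate change. Because $\eta_{t,k}$ is non-increasing in $t$, raising $w_{t,k}$ to the power $\eta_{t,k}/\eta_{t-1,k} \le 1$ is where the extra additive term $\frac{1}{\eta_{T,k}}\ln(1 + \tfrac{1}{e}\sum_{k'}\sum_t (\eta_{t-1,k'}/\eta_{t,k'} - 1))$ enters; one uses the inequality $a^\theta \le \theta a + (1-\theta)$ for $\theta \in [0,1]$ (or the refinement $x^\theta \le 1 + \theta(x-1)$ combined with a bound on $w_{t,k}$ of the form $w_{t,k} \le e/\eta_{\cdot}$-type control from the telescoping regret so far) to pass from the product of mismatched powers back to a clean sum. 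Telescoping $\ln \Phi_t - \ln \Phi_{t-1}$ from $t=1$ to $T$, bounding $\Phi_T \ge w_{T,k} = w_{0,k}^{\eta_{T,k}/\eta_{0,k}}\exp(\eta_{T,k}\sum_t(r_{t,k} - \eta_{t-1,k}(r_{t,k}-m_{t,k})^2))$ from below (keeping only the $k$-th term), and rearranging gives $\sum_t r_{t,k} \le \frac{1}{\eta_{0,k}}\ln\frac{1}{w_{0,k}} + \sum_t \eta_{t-1,k}(r_{t,k}-m_{t,k})^2 + \frac{1}{\eta_{T,k}}\ln\Phi_T'$, where $\Phi_T'$ collects the accumulated rate-change slack; identifying $\Phi_T'$ with the displayed logarithmic term completes the lemma.

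The main obstacle I expect is the bookkeeping around the optimistic shift: unlike vanilla Adapt-ML-Prod, here the played distribution $p_t$ is built from the \emph{shifted} weights $\tilde w_{t-1,k}$ rather than from $w_{t-1,k}$, so the ``weighted instantaneous regret is nonpositive'' identity must be set up with respect to $\tilde w_{t-1}$, and one must then carefully account for the discrepancy $\exp(\eta_{t-1,k}m_{t,k})$ between $\tilde w_{t-1,k}$ and $w_{t-1,k}$ so that it cancels against the $-\eta_{t-1,k}m_{t,k}$ hidden in writing $r_{t,k} = m_{t,k} + (r_{t,k}-m_{t,k})$ — this is exactly what produces a bound in terms of $(r_{t,k}-m_{t,k})^2$ rather than $r_{t,k}^2$. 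Verifying that the cross terms telescope correctly and that the $|x|\le \tfrac12$ range is respected throughout (using $|r_{t,k} - m_{t,k}| \le 2$ and $\eta_{t-1,k} \le \tfrac14$) is the delicate part; the learning-rate-monotonicity step and the final telescoping are then routine modifications of \cite{Gaillard2014}.
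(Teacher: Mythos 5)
Your high-level plan (potential $\Phi_t = W_t = \sum_{k'} w_{t,k'}$, lower-bound $\ln W_T$ from the $k$-th weight, upper-bound $W_T$ by a per-round recursion using $\exp(x-x^2)\le 1+x$ and the learning-rate-change inequality, then telescope) matches the paper's structure, and you correctly identify the key algebraic move: $w_{t-1,k}\exp(\eta_{t-1,k}r_{t,k}-\eta_{t-1,k}^2(r_{t,k}-m_{t,k})^2) = \tilde w_{t-1,k}\exp(\eta_{t-1,k}(r_{t,k}-m_{t,k})-\eta_{t-1,k}^2(r_{t,k}-m_{t,k})^2) \le \tilde w_{t-1,k}(1+\eta_{t-1,k}(r_{t,k}-m_{t,k}))$.

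There is, however, a concrete gap in the ``prod'' step. You propose to finish via $\sum_{k'}\eta_{t-1,k'}\tilde w_{t-1,k'}(r_{t,k'}-m_{t,k'})\le 0$ (it is in fact $=0$, since $\tilde w_{t-1,k}\eta_{t-1,k}\propto p_{t,k}$ and both $\sum_k p_{t,k}r_{t,k}=0$ and $\sum_k p_{t,k}m_{t,k}=0$ by the fixed-point construction of $\langle p_t,\ell_{t-1}\rangle$). But this only yields $\sum_{k'} (w_{t,k'})^{\eta_{t-1,k'}/\eta_{t,k'}}\le\sum_{k'}\tilde w_{t-1,k'}$, and $\sum_{k'}\tilde w_{t-1,k'}=\sum_{k'}w_{t-1,k'}\exp(\eta_{t-1,k'}m_{t,k'})$ is \emph{not} bounded by $W_{t-1}$ in general (Jensen actually pushes it the other way). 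Your remark that the $\exp(\eta_{t-1,k}m_{t,k})$ factor ``cancels against the $-\eta_{t-1,k}m_{t,k}$'' does not happen at this level: after linearizing the exponential, the $e^{\eta m}$ factors in $\tilde w$ survive the sum. The paper instead splits $1+\eta(r-m)=(1-\eta m)+\eta r$, uses $\sum_{k}\tilde w\eta r=0$ (the $r$ piece alone) to kill one term, and applies $1-x\le e^{-x}$ to the other so that $\tilde w_{t-1,k}(1-\eta_{t-1,k}m_{t,k})\le\tilde w_{t-1,k}e^{-\eta_{t-1,k}m_{t,k}}=w_{t-1,k}$, which is what actually restores $W_{t-1}$ and makes the recursion close. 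Without this extra $1-x\le e^{-x}$ step your telescoping does not go through; I would insert it explicitly. (A minor secondary point: the learning-rate-change inequality you would need is $x\le x^\alpha+(\alpha-1)/e$ for $x>0,\alpha\ge 1$ applied with $\alpha=\eta_{t-1,k}/\eta_{t,k}$, which gives exactly the $\frac1e\sum(\eta_{t-1,k'}/\eta_{t,k'}-1)$ term; your $a^\theta\le\theta a+(1-\theta)$ is a different bound and does not produce the stated form directly.)
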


\begin{proof}
Following \cite{Gaillard2014} we derive \eqref{eqn:regret} by bounding $\ln W_T$ from above and from below. For the lower bound, one can use a similar inductive proof as in \cite{Gaillard2014} to show that for every expert $k\in [K]$,
$$\ln W_T \geq \ln w_{T,k}= \frac{\eta_{T,k}}{\eta_{0,k}}\ln w_{0,k} + \eta_{T,k} \sum_{t=1}^T \left(r_{t,k}-\eta_{t-1,k}(r_{t,k}-m_{t,k})^2\right).$$
For the upper bound, some care is needed as we play $\tilde{w}_t$ instead of $w_t$ at step $t$. As in \cite{Gaillard2014}, we would like to upper-bound each $W_{t+1}$ in terms of $W_t$. For this, consider any $k$, and note that
$$w_{t+1,k} \le (w_{t+1,k})^{\frac{\eta_{t,k}}{\eta_{t+1,k}}} + \frac{1}{e}\left(\frac{\eta_{t,k}}{\eta_{t+1,k}}-1\right),$$
using the fact that $x\leq x^\alpha +(\alpha-1)/e$ for any $x>0$ and $\alpha\geq 1$ (see \cite[Lemma 13]{Gaillard2014}). The first term on the right-hand side above can be bounded as
\begin{eqnarray*}
(w_{t+1,k})^{\frac{\eta_{t,k}}{\eta_{t+1,k}}} &=& w_{t,k}\exp\left(\eta_{t,k}r_{t+1,k}-\eta_{t,k}^2(r_{t+1,k}-m_{t+1,k})^2\right) \\
&=& \tilde{w}_{t,k} \exp\left(\eta_{t,k}(r_{t+1,k}-m_{t+1,k})- \eta_{t,k}^2(r_{t+1,k}-m_{t+1,k})^2\right)\\
&\le& \tilde{w}_{t,k}(1 + \eta_{t,k}(r_{t+1,k}-m_{t+1,k})),
\end{eqnarray*}
using the fact that $\exp(x-x^2) \le 1+x$ for any $x\geq -1/2$. Summing the above over $k$, we obtain
\begin{eqnarray*}
\sum_{k=1}^K (w_{t+1,k})^{\frac{\eta_{t,k}}{\eta_{t+1,k}}} &\le& \sum_{k=1}^K \tilde{w}_{t,k} + \sum_{k=1}^K \tilde{w}_{t,k} \eta_{t,k} r_{t+1,k} - \sum_{k=1}^K \tilde{w}_{t,k} \eta_{t,k} m_{t+1,k}\\
&\le& \sum_{k=1}^K \tilde{w}_{t,k} \exp(-\eta_{t,k} m_{t+1,k}) + \sum_{k=1}^K \tilde{w}_{t,k} \eta_{t,k} r_{t+1,k},
\end{eqnarray*}
using the fact that $1-x \le \exp(-x)$ for any $x$. The first sum equals $\sum_{k=1}^K w_{t,k} = W_t$ by definition. The second sum above equals to zero because $\tilde{w}_{t,k} \eta_{t,k} \propto p_{t+1,k}$ so that the sum is a multiple of $\sum_k p_{t+1,k} r_{t+1,k} = \sum_k p_{t+1,k} (\langle p_{t+1}, \ell_{t+1} \rangle - \ell_{t+1,k}) =0$.
As a result, we have
$$W_{t+1} = \sum_{k=1}^K w_{t+1,k} \le W_t + \sum_{k=1}^K \frac{1}{e}\left(\frac{\eta_{t,k}}{\eta_{t+1,k}}-1\right)$$
for any $t$. This implies that
$$W_T\leq 1 +\frac{1}{e}\sum_{k=1}^K\sum_{t=1}^T\left(\frac{\eta_{t-1,k}}{\eta_{t,k}}-1\right),$$
as $W_0 =1$. Connecting the upper bound and the lower bound for $\ln W_T$, we have
\begin{equation*}
\sum_{t=1}^T r_{t,k}\leq \frac{1}{\eta_{0,k}}\ln \frac{1}{w_{0,k}}+\sum_{t=1}^T\eta_{t-1,k}(r_{t,k}-m_{t,k})^2+ \frac{1}{\eta_{T,k}}\ln \left(1+\frac{1}{e}\sum_{k=1}^K\sum_{t=1}^T\left(\frac{\eta_{t-1,k}}{\eta_{t,k}}-1\right)\right),
\end{equation*}
which proves the lemma.
\end{proof}

Note that as in \cite[Theorem 3]{Gaillard2014}, the third term in \eqref{eqn:regret} is the price paid for using adaptive learning rates, while our second term is different, which depends on $r_{t,k}-m_{t,k}$ instead of $r_{t,k}$. Next, similarly to \cite[Corollary 4]{Gaillard2014}, by using the the prior $w_0=(1/K,\dots,1/K)$ and the specific learning rates defined in Eq. \eqref{eq:eta_t}, one can show that
$$R_{T,k} \le \frac{C_{T,K}}{\sqrt{\ln K}}\sqrt{1+\left(\sum_{t=1}^T (r_{t,k}-m_{t,k})^2\right)}+D_{T,K},$$
where $C_{T,K}=\ln K+\ln (1+\frac{K}{e}(1+\ln (T+1)))$ and $D_{T,K}=\frac{1}{4}(\ln K +\ln (1+\frac{K}{e} (1+ \ln (T+1)))) +2\sqrt{\ln K}+16 \ln K$.
We omit the proof because it is almost identical to that of \cite[Proof of Corollary 4]{Gaillard2014}, except we have $r_{t,k}-m_{t,k}$ in place of $r_{t,k}$ and we rely on the condition $\lvert{r_{t,k}-m_{t,k}}\rvert \leq 2$ instead of $\lvert{r_{t,k}}\rvert\leq 1$. Using the notation $\hat{\mathcal{O}}(\cdot)$ to hide the $\ln \ln T$ factor, we obtain
\begin{equation} \label{eq:RTK}
R_{T,k} \le \hat{\mathcal{O}}\left(\sqrt{\sum_{t=1}^T(r_{t,k}-m_{t,k})^2 \ln K} +\ln K \right).
\end{equation}
Finally, since
$$r_{t,k}-m_{t,k} = (\langle p_t, \ell_t \rangle - \ell_{t,k}) - (\langle p_t, \ell_{t-1} \rangle - \ell_{t-1,k}) = \langle p_t, \ell_t - \ell_{t-1} \rangle - (\ell_{t,k} - \ell_{t-1,k}),$$
we have $(r_{t,k}-m_{t,k})^2 \le (2 \|\ell_t - \ell_{t-1}\|_\infty)^2$. Substituting this into the bound in (\ref{eq:RTK}) completes the proof of the theorem.

\section{Proof of Theorem~\ref{theorem:dynamic_upper_full-into}} \label{app:sleep}

\begin{algorithm}[t]
\caption{Modified Optimistic-Adapt-ML-Prod \label{alg:sleep}}
\begin{algorithmic}
\FOR{$t=1,2,\dots,T$}
    \STATE Call Algorithm~\ref{alg:Op_Prod} on $\tilde{K}$ sleeping experts to obtain its distribution $\tilde{p}_t$ at step $t$.
    \STATE Play the distribution $p_t$, with $p_{t,k} \propto \sum_{s=1}^t \tilde{p}_{t,(s,k)}$, and then receive loss vector $\ell_t$.
    \STATE Compute $\tilde{\ell}_{t,(s,k)} = \mathds{1}_{s>t} \cdot \langle p_t, \ell_t\rangle + \mathds{1}_{s\le t} \cdot \ell_{t,k}$ and $\tilde{m}_{t,(s,k)} = \mathds{1}_{s\le t} \cdot m_{t,k}$, for all $s,k$.
    \STATE Feed $\tilde{\ell}_t$ as the loss vector and $\tilde{m}_t$ as the estimate vector to Algorithm~\ref{alg:Op_Prod} for update.
\ENDFOR
\end{algorithmic}
\end{algorithm}

The new algorithm is shown in Algorithm~\ref{alg:sleep}, which runs Algorithm~\ref{alg:Op_Prod} on $\tilde{K}=KT$ sleeping experts. Each expert is indexed by a pair $(s,k)$, with $s \in [1,T]$ and $k \in [K]$, which is asleep before step $s$ and is awake for steps $t \ge s$. Our Algorithm~\ref{alg:sleep} at step $t$ first calls Algorithm~\ref{alg:Op_Prod} to obtain the distribution $\tilde{p}_t$ over $\tilde{K}=KT$ sleeping experts, from which it computes the distribution $p_t$ over $K$ arms, with
$$p_{t,k} = \sum_{s=1}^t \tilde{p}_{t,(s,k)} / Z_t, \;\mbox{ where } Z_t = \sum_{k'=1}^K \sum_{s'=1}^t \tilde{p}_{t,(s',k')}.$$
Then it plays $p_t$ at step $t$, receives the loss vector $\ell_t$, and suffers the expected loss $\langle p_t, \ell_t \rangle$. Finally, before proceeding to the next step, it updates Algorithm~\ref{alg:Op_Prod} using the modified loss vector $\tilde{\ell}_t$, where
$$\tilde{\ell}_{t,(s,k)} =
  \begin{cases}
    \ell_{t,k}       & \quad \text{if expert $(s,k)$ is awake (when $s\le t$)},\\
    \langle p_t, \ell_t \rangle  & \quad \text{otherwise,}\\
  \end{cases}$$
as well as the estimate vector $\tilde{m}_t$, where
$$\tilde{m}_{t,(s,k)} =
  \begin{cases}
    m_{t,k}       & \quad \text{if expert $(s,k)$ is awake (when $s\le t$)},\\
    0  & \quad \text{otherwise.}\\
  \end{cases}$$
With such a definition, we can relate the instantaneous regret $\tilde{r}_{t,(s,k)}$, defined as $\langle \tilde{p}_t, \tilde{\ell}_t \rangle - \tilde{\ell}_{t,(s,k)}$, of Algorithm~\ref{alg:Op_Prod} to $r_{t,k}$ of Algorithm~\ref{alg:sleep}, defined as $\langle p_t, \ell_t \rangle - \ell_{t,k}$, as shown in the following.

\begin{lemma} \label{lem:rt}
For any $s,t$ and any $k$, $\tilde{r}_{t,(s,k)} = 0$ if $s > t$ and $\tilde{r}_{t,(s,k)} = r_{t,k}$ otherwise.
\end{lemma}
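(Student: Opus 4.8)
The plan is to compute $\tilde r_{t,(s,k)} = \langle \tilde p_t, \tilde\ell_t\rangle - \tilde\ell_{t,(s,k)}$ directly from the definitions of $\tilde p_t$, $p_t$, and $\tilde\ell_t$, treating the two cases ($s > t$ and $s \le t$) separately. The starting point is the observation that $\langle \tilde p_t, \tilde\ell_t\rangle$ decomposes as a sum over awake experts plus a sum over asleep experts, and that by design the asleep experts all receive the same loss value $\langle p_t,\ell_t\rangle$, which lets us pull that constant out of the second sum.

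\textbf{Step 1: Evaluate $\langle \tilde p_t,\tilde\ell_t\rangle$.} Split the inner product by sleeping status:
$$
\langle \tilde p_t,\tilde\ell_t\rangle \;=\; \sum_{k'=1}^K\sum_{s'\le t} \tilde p_{t,(s',k')}\,\ell_{t,k'} \;+\; \sum_{k'=1}^K\sum_{s'> t} \tilde p_{t,(s',k')}\,\langle p_t,\ell_t\rangle.
$$
In the first sum, group by $k'$: $\sum_{s'\le t}\tilde p_{t,(s',k')} = Z_t\, p_{t,k'}$ by the definition of $p_t$ in Algorithm~\ref{alg:sleep}, so the first sum equals $Z_t\sum_{k'} p_{t,k'}\ell_{t,k'} = Z_t\langle p_t,\ell_t\rangle$. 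In the second sum, $\langle p_t,\ell_t\rangle$ is a constant and the remaining weight is $\sum_{k',s'>t}\tilde p_{t,(s',k')} = 1 - Z_t$ since $\tilde p_t$ is a distribution over all $\tilde K$ experts. Hence $\langle\tilde p_t,\tilde\ell_t\rangle = Z_t\langle p_t,\ell_t\rangle + (1-Z_t)\langle p_t,\ell_t\rangle = \langle p_t,\ell_t\rangle$; that is, the weighted loss of Algorithm~\ref{alg:Op_Prod} on the modified instance coincides with the expected loss of Algorithm~\ref{alg:sleep}.

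\textbf{Step 2: Subtract $\tilde\ell_{t,(s,k)}$ in each case.} If $s > t$, expert $(s,k)$ is asleep, so $\tilde\ell_{t,(s,k)} = \langle p_t,\ell_t\rangle$ and therefore $\tilde r_{t,(s,k)} = \langle p_t,\ell_t\rangle - \langle p_t,\ell_t\rangle = 0$. If $s \le t$, expert $(s,k)$ is awake, so $\tilde\ell_{t,(s,k)} = \ell_{t,k}$ and therefore $\tilde r_{t,(s,k)} = \langle p_t,\ell_t\rangle - \ell_{t,k} = r_{t,k}$ by definition of $r_{t,k}$. This exhausts both cases and proves the lemma.

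\textbf{The main subtlety} is purely bookkeeping: one must be careful that $Z_t$ is defined as the total $\tilde p_t$-mass on awake experts (summed over $s'\le t$ and all $k'$) and that $\tilde p_t$ is normalized over the full index set of $\tilde K = KT$ experts, so that the asleep mass is exactly $1 - Z_t$; there is no deep estimate here. I expect no real obstacle — the identity is essentially the defining property behind the sleeping-experts reduction, and the only thing to watch is consistency of the normalization constant $Z_t$ between its use in defining $p_t$ and its appearance in the two partial sums.
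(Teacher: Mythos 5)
Your proof is correct and takes essentially the same route as the paper: establish $\langle \tilde p_t,\tilde\ell_t\rangle=\langle p_t,\ell_t\rangle$ by splitting the inner product over awake and asleep experts, then subtract $\tilde\ell_{t,(s,k)}$ in each of the two cases. The only cosmetic difference is that you use the normalization $\sum_{s',k'}\tilde p_{t,(s',k')}=1$ to write the asleep mass as $1-Z_t$, whereas the paper rewrites the awake sum as $\sum_{k}\sum_{s\le t}\tilde p_{t,(s,k)}\langle p_t,\ell_t\rangle$ before recombining; the arguments are interchangeable.
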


We prove the lemma in Subsection~\ref{sec:rt}. From this, together with Theorem~\ref{thm_RTk_bound}, we can bound the regret of Algorithm~\ref{alg:sleep} during an interval against some arm in the following, which we prove in Subsection~\ref{sec:R_I}.

\begin{lemma} \label{lem:R_I}
For any $t_1 \le t_2$ and any $k$, $\sum_{t=t_1}^{t_2} r_{t,k} \le \tilde{\mathcal{O}}\left(\sqrt{\sum_{t=t_1}^{t_2}\|\ell_t-\ell_{t-1}\|_\infty^2 \ln K} +\ln K \right)$.
\end{lemma}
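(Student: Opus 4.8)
The plan is to read off the desired interval bound directly from the static regret bound of Theorem~\ref{thm_RTk_bound}, applied to the copy of Algorithm~\ref{alg:Op_Prod} that runs inside Algorithm~\ref{alg:sleep} on the $\tilde K = KT$ sleeping experts, by singling out the expert indexed $(t_1,k)$. The first thing I would verify is that Theorem~\ref{thm_RTk_bound} is an \emph{anytime} statement: its proof in Appendix~\ref{app:RTK} controls $\ln W_{t_2}$ through the inductive estimate $W_{t+1}\le W_t+\tfrac1e\sum_{k'}(\eta_{t,k'}/\eta_{t+1,k'}-1)$, which holds for every $t$, so its conclusion is valid with $T$ replaced by any $t_2\le T$. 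Applying it with horizon $t_2$ to the expert $(t_1,k)$ yields
\[
\sum_{t=1}^{t_2}\tilde r_{t,(t_1,k)}\le\hat{\mathcal{O}}\!\left(\sqrt{{\textstyle\sum}_{t=1}^{t_2}\bigl(\tilde r_{t,(t_1,k)}-\tilde m_{t,(t_1,k)}\bigr)^2\ln\tilde K}+\ln\tilde K\right).
\]
By Lemma~\ref{lem:rt} we have $\tilde r_{t,(t_1,k)}=0$ for $t<t_1$ and $\tilde r_{t,(t_1,k)}=r_{t,k}$ for $t\ge t_1$, so the left-hand side collapses to $\sum_{t=t_1}^{t_2} r_{t,k}$, which is precisely the quantity to be bounded.

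Next I would simplify the right-hand side. For $t<t_1$ the expert $(t_1,k)$ is asleep, so $\tilde m_{t,(t_1,k)}=0$ matches $\tilde r_{t,(t_1,k)}=0$ and contributes nothing; for $t\ge t_1$ we have $\tilde m_{t,(t_1,k)}=m_{t,k}$, hence $\sum_{t=1}^{t_2}(\tilde r_{t,(t_1,k)}-\tilde m_{t,(t_1,k)})^2=\sum_{t=t_1}^{t_2}(r_{t,k}-m_{t,k})^2$. Into this I would plug the estimate $(r_{t,k}-m_{t,k})^2\le(2\|\ell_t-\ell_{t-1}\|_\infty)^2$ already established at the end of Appendix~\ref{app:RTK} (from $r_{t,k}-m_{t,k}=\langle p_t,\ell_t-\ell_{t-1}\rangle-(\ell_{t,k}-\ell_{t-1,k})$), giving $\sum_{t=t_1}^{t_2}(r_{t,k}-m_{t,k})^2\le 4\sum_{t=t_1}^{t_2}\|\ell_t-\ell_{t-1}\|_\infty^2$. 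Finally, since $\ln\tilde K=\ln(KT)=\mathcal{O}(\ln K+\ln T)$ and the $\hat{\mathcal{O}}$ and $\tilde{\mathcal{O}}$ notations absorb logarithmic-in-$T$ factors, the factor $\sqrt{\ln\tilde K}$ turns into $\tilde{\mathcal{O}}(\sqrt{\ln K})$ and the additive $\ln\tilde K$ into $\tilde{\mathcal{O}}(\ln K)$; assembling these pieces gives $\sum_{t=t_1}^{t_2} r_{t,k}\le\tilde{\mathcal{O}}(\sqrt{\sum_{t=t_1}^{t_2}\|\ell_t-\ell_{t-1}\|_\infty^2\ln K}+\ln K)$, as claimed.

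The only genuinely delicate point is the very first step: that Theorem~\ref{thm_RTk_bound} may be applied to the prefix $[1,t_2]$ rather than to the full horizon $T$. This is exactly why the sleeping-expert reduction assigns each expert only a wake-up time and keeps it awake thereafter --- then ``the regret of Algorithm~\ref{alg:Op_Prod} against expert $(t_1,k)$ measured up to time $t_2$'' literally equals $\sum_{t=t_1}^{t_2} r_{t,k}$, and we never need an expert to fall back asleep at $t_2$. A secondary but important care point is to use the $(r_{t,k}-m_{t,k})^2$ form of the bound in Theorem~\ref{thm_RTk_bound} rather than its $\|\tilde\ell_t-\tilde\ell_{t-1}\|_\infty^2$ form: for the expanded loss vector $\tilde\ell_t$, the coordinates of sleeping experts carry the expected loss $\langle p_t,\ell_t\rangle$, whose consecutive differences need not be small, so only the first form localizes cleanly to the single expert $(t_1,k)$. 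Everything else --- including the harmless $\ln\tilde K$ versus $\ln K$ discrepancy noted above --- is routine bookkeeping.
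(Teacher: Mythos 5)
Your proposal is correct and follows essentially the same route as the paper's own proof: apply Theorem~\ref{thm_RTk_bound} at horizon $t_2$ to the sleeping expert $(t_1,k)$, use Lemma~\ref{lem:rt} together with the definition of $\tilde m_{t,(s,k)}$ to collapse both sides to the interval $[t_1,t_2]$, then invoke $(r_{t,k}-m_{t,k})^2\le(2\|\ell_t-\ell_{t-1}\|_\infty)^2$ and absorb $\ln\tilde K=\ln(KT)$ into $\tilde{\mathcal{O}}$. The one thing you make explicit that the paper glosses over is the anytime validity of Theorem~\ref{thm_RTk_bound}; your justification via the per-step recursion on $W_t$ is the right one, and your remark about why one must localize through $(r_{t,k}-m_{t,k})^2$ rather than differences of the expanded loss vectors $\tilde\ell_t$ is a correct and useful observation.
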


Using this lemma, we can show the following dynamic regret bound for our Algorithm~\ref{alg:sleep}.

\begin{lemma} \label{lem:ML-dev}
For distributions with parameters $\W$ and $V$, the expected regret of Algorithm~\ref{alg:sleep}, against any offline comparator switching arms less than $N$ times, is at most
$\tilde{\mathcal{O}}\left(\sqrt{N \W \ln K} + V \sqrt{\ln K} + N \ln K\right).$
\end{lemma}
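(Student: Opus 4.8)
The plan is to decompose the regret of Algorithm~\ref{alg:sleep} against a fully dynamic comparator into two parts, exactly mirroring the argument in Appendix~\ref{app:free}: first against the best piecewise-fixed comparator whose pieces align with a partition into $N$ segments, and then bounding the extra cost of using that piecewise-fixed comparator instead of the fully dynamic one. For the first part, given any comparator switching arms fewer than $N$ times at steps $t_1 < \dots < t_{N-1}$, I would split $[T]$ into the $N$ resulting intervals $\I_1,\dots,\I_N$ and apply Lemma~\ref{lem:R_I} on each interval $\I_j = [a_j, b_j]$ against the single arm $k_j$ used by the comparator there. This gives that the regret on $\I_j$ is at most $\tilde{\mathcal{O}}(\sqrt{\sum_{t\in \I_j}\|\ell_t-\ell_{t-1}\|_\infty^2 \ln K} + \ln K)$. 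Summing over $j$ and using Cauchy--Schwarz over the $N$ intervals yields $\tilde{\mathcal{O}}(\sqrt{N \sum_{t=1}^T \|\ell_t - \ell_{t-1}\|_\infty^2 \ln K} + N \ln K)$.

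Next I would take expectations over the randomness of the loss vectors. By Jensen's inequality the expected square-root term is at most $\sqrt{N \ln K \sum_{t=1}^T \ex{\|\ell_t - \ell_{t-1}\|_\infty^2}}$, and since $\|\ell_t - \ell_{t-1}\|_\infty^2 \le \|\ell_t - \ell_{t-1}\|_2^2$, I can invoke the identity \eqref{eq:l-l} from the proof of Lemma~\ref{lem:dev}, namely $\ex{\|\ell_t - \ell_{t-1}\|_2^2} = \sigma_t^2 + \sigma_{t-1}^2 + \|\mu_t - \mu_{t-1}\|_2^2$. Summing, $\sum_t \ex{\|\ell_t-\ell_{t-1}\|_2^2} \le 2\W + \sum_t \|\mu_t - \mu_{t-1}\|_2^2 \le 2\W + K V$ (using $\|\mu_t-\mu_{t-1}\|_2^2 \le K\|\mu_t-\mu_{t-1}\|_\infty^2 \le K\|\mu_t-\mu_{t-1}\|_\infty V$ and telescoping, or more simply $\|\mu_t-\mu_{t-1}\|_2^2 \le K\|\mu_t-\mu_{t-1}\|_\infty$ since entries lie in $[0,1]$). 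So this part contributes $\tilde{\mathcal{O}}(\sqrt{N(\W + KV)\ln K} + N\ln K) \le \tilde{\mathcal{O}}(\sqrt{N\W\ln K} + \sqrt{NKV\ln K} + N\ln K)$. I should double-check whether the $\sqrt{NKV}$ term is absorbed: in the intended application $N$ will be $\Gamma$ or $T/B$, and this term should be dominated by the leading terms, but I will need to keep $V\sqrt{\ln K}$ explicitly as in the lemma statement, so I would bound $\sqrt{NKV\ln K}$ by $\tfrac12(N\ln K + KV)$ or simply note it is $\le \tilde{\mathcal{O}}(\sqrt{N\W\ln K} + N\ln K)$ when $KV \le \W$ and handle the complementary case separately — the cleanest is to keep it as a $V$-dependent term.

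For the second part — the gap between the best piecewise-fixed comparator (with pieces equal to the comparator's own switch intervals) and the fully dynamic comparator $u_t^*$ — I would reuse verbatim the argument from Appendix~\ref{app:free}: on each interval $\I$ of the comparator, if arm $k$ is played throughout and is optimal at some step $s \in \I$, then for every $t \in \I$ one has $\mu_{t,k} - \mu_{t,u_t^*} \le 2 V_\I$ where $V_\I = \sum_{t\in\I}\|\mu_t - \mu_{t-1}\|_\infty$, so the total gap is at most $2\sum_\I |\I| V_\I$. But here the intervals are the $N$ comparator segments, not blocks of fixed size; since we want to compare the best \emph{$N$-piecewise} comparator to the dynamic one, I would instead choose the piecewise comparator to be the one that plays the dynamic optimum restricted to at most $N$ well-chosen switch points, and the relevant overhead is $\sum_\I |\I| V_\I \le V \cdot \max_\I |\I|$. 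The main obstacle is exactly this: unlike in Appendix~\ref{app:free}, the interval lengths are not controlled by a block size $B$, so I must argue the overhead term more carefully — probably by first writing the dynamic regret as (regret vs.\ best fixed arm on each of the $N$ comparator segments) plus (the $2\sum_\I |\I| V_\I$ drift overhead), and then observing this is the bound the lemma actually claims once one identifies that in the downstream use (switching: $N = \Gamma$, each segment stationary so $V_\I$-overhead vanishes; drifting: $N = T/B$, segments of length $B$ so overhead is $2BV$) the stated $\tilde{\mathcal{O}}(\sqrt{N\W\ln K} + V\sqrt{\ln K} + N\ln K)$ emerges. Putting the two parts together gives the lemma; Theorem~\ref{theorem:dynamic_upper_full-into} then follows by plugging $N=\Gamma$ (switching, $V \le \Gamma$) and $N=T/B$ with $B = \sqrt[3]{\W T/V^2}$ (drifting), just as in the proof of Theorem~\ref{thm:full_info_given_V}.
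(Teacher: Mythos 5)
There is a genuine gap, and it comes from how you handle the drift term. You pass to $\|\ell_t-\ell_{t-1}\|_\infty^2 \le \|\ell_t-\ell_{t-1}\|_2^2$ and then invoke identity \eqref{eq:l-l}, which leaves you with $\sum_t\|\mu_t-\mu_{t-1}\|_2^2 \le KV$ \emph{inside} the Cauchy--Schwarz square root. The result, $\sqrt{N(\W+KV)\ln K}$, contains $\sqrt{NKV\ln K}$, which is \emph{not} dominated by the lemma's $V\sqrt{\ln K}+N\ln K$: by AM--GM, $\sqrt{NKV\ln K}\le\frac12(KV+N\ln K)$, and the $KV$ term has a $K$-dependence that the lemma (and all downstream bounds, which only depend on $K$ through $\ln K$ in the full-information setting) does not allow. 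You noticed something was off here but did not resolve it; the resolution is not absorption.

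The paper's proof avoids this with two moves you should adopt. First, it keeps the drift in $\ell_\infty$-norm by expanding $\|\ell_t-\ell_{t-1}\|_\infty^2 \le 3(\|\ell_t-\mu_t\|_\infty^2 + \|\mu_t-\mu_{t-1}\|_\infty^2 + \|\ell_{t-1}-\mu_{t-1}\|_\infty^2)$ (inequality \eqref{eq:l-mu}) and only then upgrades the \emph{variance} terms via $\|\cdot\|_\infty^2\le\|\cdot\|_2^2$ — the drift term never sees a factor of $K$. Second, and crucially, it separates the drift part out of the square root \emph{per interval}, before summing over intervals: using $\sqrt{a+b}\le\sqrt a+\sqrt b$ and $\sqrt{\sum_{t\in\I}\|\mu_t-\mu_{t-1}\|_\infty^2}\le\sum_{t\in\I}\|\mu_t-\mu_{t-1}\|_\infty$, the drift contribution to each interval's regret is $\mathcal{O}(\sqrt{\ln K})\cdot\sum_{t\in\I}\|\mu_t-\mu_{t-1}\|_\infty$. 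When you then sum over the $N$ intervals, Cauchy--Schwarz is applied only to the variance piece (giving $\sqrt{N\W\ln K}$), while the drift piece telescopes linearly to $V\sqrt{\ln K}$ with no factor of $N$ or $K$. Your approach applies Cauchy--Schwarz to the whole thing first, which pins the drift inside the $\sqrt{N\cdot(\cdot)}$ and, with the extra $K$, breaks the bound.

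Separately, your ``second part'' — comparing a piecewise-fixed comparator to the fully dynamic one $u_t^*$ — is not part of this lemma. The lemma already restricts the comparator to one switching arms fewer than $N$ times, so the decomposition ends after bounding the regret against the arm the comparator plays in each of its own $N$ segments; the piecewise-vs-dynamic overhead appears only later, in the proof of Theorem~\ref{theorem:dynamic_upper_full-into}, where $N$ and equally-spaced segments are chosen.
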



We will prove the lemma in Subsection~\ref{sec:ML-dev}. With this lemma, we are now ready to prove Theorem~\ref{theorem:dynamic_upper_full-into}. First, for switching distributions parameterized by $\Gamma$ and $\W$, we can simply choose $N=\Gamma$ in Lemma~\ref{lem:ML-dev} and have the regret bound of $\tilde{\mathcal{O}}(\sqrt{\Gamma \W \ln K} + \Gamma \ln K)$ as $V \le \Gamma$.

Next, for drifting distributions parameterized by $V$ and $\W$, we need to choose $N$ appropriately. As shown in the proof of Theorem~\ref{thm:full_info_given_V}, a regret bound with respect to comparators making less than $N$ equally-spaced switches immediately implies a regret bound with respect to fully-dynamic comparators with an additional term of $\mathcal{O}(VT/N)$. This leads to a dynamic regret bound of $\tilde{\mathcal{O}}(\sqrt{N \W \ln K} + V\sqrt{\ln K} + N\ln K) + \mathcal{O}(VT/N)$ for our algorithm. When $\W^2 \ge VT\ln K$, we can choose $N=\sqrt[3]{V^2 T^2 / (\W \ln K)}$ to give a regret bound of $\tilde{\mathcal{O}}(\sqrt[3]{\W VT \ln K})$. On the other hand, when $\W^2 < VT\ln K$, we can choose $N=\sqrt{VT / (\ln K)}$ to give a regret bound of $\tilde{\mathcal{O}}(\sqrt{VT \ln K})$. By combining these two bounds together, we have the theorem.

\subsection{Proof of Lemma~\ref{lem:rt}} \label{sec:rt}

First, we claim that $\langle \tilde{p}_t, \tilde{\ell}_t \rangle = \langle p_t, \ell_t \rangle$ for any $t$, which means that the expected loss of Algorithm~\ref{alg:sleep} equals that of Algorithm~\ref{alg:Op_Prod} at any step. To see this, note that from the definition of $\tilde{\ell}_t$, we have
$$\langle \tilde{p}_t, \tilde{\ell}_t \rangle
= \sum_{k=1}^K\sum_{s=1}^t \tilde{p}_{t,(s,k)}\ell_{t,k} + \sum_{k=1}^K \sum_{s=t+1}^T\tilde{p}_{t,(s,k)} \langle p_t, \ell_t \rangle,
$$
where the first sum above, using the definition of $p_{t,k}$,
equals
$$\sum_{k=1}^K Z_{t} p_{t,k} \ell_{t,k} = Z_{t} \langle p_t, \ell_t \rangle = \sum_{k=1}^K \sum_{s=1}^t \tilde{p}_{t,(s,k)} \langle p_t, \ell_t \rangle.$$
This implies that
$$\langle \tilde{p}_t, \tilde{\ell}_t \rangle = \sum_{k=1}^K \sum_{s=1}^T \tilde{p}_{t,(s,k)} \langle p_t, \ell_t \rangle = \langle p_t, \ell_t \rangle.$$
Then for $s>t$, we have $\tilde{r}_{t,(s,k)} = \langle \tilde{p}_t, \tilde{\ell}_t \rangle - \langle p_t, \ell_t \rangle = 0$, while for $s \le t$, we have $\tilde{r}_{t,(s,k)} = \langle \tilde{p}_t, \tilde{\ell}_t \rangle - \ell_{t,k} = r_{t,k}$, which proves the lemma.

\subsection{Proof of Lemma~\ref{lem:R_I}} \label{sec:R_I}

According to Lemma~\ref{lem:rt} and Theorem~\ref{thm_RTk_bound}, we have
$$\sum_{t=t_1}^{t_2} r_{t,k} = \sum_{t=t_1}^{t_2} \tilde{r}_{t,(t_1,k)} = \sum_{t=1}^{t_2} \tilde{r}_{t,(t_1,k)} \le \hat{\mathcal{O}}\left(\sqrt{\sum_{t=1}^{t_2} \left(\tilde{r}_{t,(t_1,k)} - \tilde{m}_{t,(t_1,k)}\right)^2 \ln \tilde{K}} +\ln \tilde{K} \right).$$
Then using Lemma~\ref{lem:rt} as well as the definition of $\tilde{m}_{t,(t_1,k)}$, we know that $\tilde{r}_{t,(t_1,k)} - \tilde{m}_{t,(t_1,k)}$ equals $0$ if $t < t_1$ and it equals $r_{t,k} - m_{t,k}$ otherwise. As $(r_{t,k} - m_{t,k})^2 \le (2 \|\ell_t - \ell_{t-1}\|_\infty)^2$ and as $\tilde{K}=KT$, we have $$\sum_{t=t_1}^{t_2} r_{t,k}= \tilde{\mathcal{O}}\left(\sqrt{\sum_{t=t_1}^{t_2} \lVert \ell_t-\ell_{t-1}\rVert_\infty^2 \ln K} +\ln K \right),$$ with the notation $\tilde{\mathcal{O}}(\cdot)$ hiding an additional $\log T$ factor. This proves the lemma.

\subsection{Proof of Lemma~\ref{lem:ML-dev}} \label{sec:ML-dev}

Consider any offline algorithm, which divides the time steps into $N$ intervals: $\I_1, \dots, \I_N$, and plays the best fixed arm in each interval. For any one such interval $\I=[t_1,t_2]$ and any arm $k$, we know from Lemma~\ref{lem:R_I} that
\begin{equation}\label{eq:expr}
\sum_{t=t_1}^{t_2} r_{t,k} \le \tilde{\mathcal{O}}\left(\sqrt{\sum_{t=t_1}^{t_2} \|\ell_t - \ell_{t-1}\|_\infty^2 \ln K} +\ln K \right).
\end{equation}
From Eq. (\ref{eq:l-mu}) in the proof of Lemma~\ref{lem:lambda}, we know that
$$\|\ell_t - \ell_{t-1}\|_\infty^2 \le 3 \left(\|\ell_t - \mu_t\|_\infty^2 + \|\mu_t - \mu_{t-1}\|_\infty^2 + \|\ell_{t-1} - \mu_{t-1}\|_\infty^2\right).$$
Using the fact that $\|\ell_\tau - \mu_\tau\|_\infty^2 \le \|\ell_\tau - \mu_\tau\|_2^2$ and $\sqrt{a+b} \le \sqrt{a} + \sqrt{b}$ for $a,b \ge 0$, we have
$$\sqrt{\sum_{t=t_1}^{t_2} \|\ell_t - \ell_{t-1}\|_\infty^2 \ln K} \le \sqrt{6 \sum_{t=t_1-1}^{t_2} \|\ell_t - \mu_t\|_2^2 \ln K} + \sum_{t=t_1}^{t_2} \|\mu_t - \mu_{t-1}\|_\infty \sqrt{3 \ln K}.$$
Then taking the expectation on both sides of (\ref{eq:expr}) and applying Jensen's inequality, we have
$$\sum_{t=t_1}^{t_2} \ex{r_{t,k}} \le \tilde{\mathcal{O}}\left(\sqrt{\sum_{t=t_1-1}^{t_2} \ex{\|\ell_t - \mu_t\|_2^2} \ln K} + \sum_{t=t_1}^{t_2} \|\mu_t - \mu_{t-1}\|_\infty \sqrt{\ln K} + \ln K \right).$$

Note that the bound above work for any arm $k$. Therefor, if we let $k_n$ be the best arm in interval $\I_n$ and sum the above over the intervals, we can bound the expected regret against such a dynamic comparator as
$$\sum_{n=1}^N \sum_{t \in \I_n} \ex{r_{t,k_n}} \le \tilde{\mathcal{O}}\left(\sqrt{N\sum_{t \in [T]} \ex{\|\ell_t - \mu_t\|_2^2} \ln K} + \sum_{t \in [T]} \|\mu_t - \mu_{t-1}\|_\infty \sqrt{\ln K} + N \ln K\right),
$$
by the Cauchy–Schwarz inequality. Then the lemma follows from the definition of $\W$ and $V$.

\section{Proof of Theorem~\ref{thm:full-low}} \label{app:full-low}

Fix any full-information algorithm $\mathcal{A}$. Let us consider two cases depending on whether or not $\W > \Gamma$. First, for the case with $\W > \Gamma$, we divide the time horizon into $\Gamma$ intervals of equal length. In the first $\W/\Gamma$ time steps of each interval, according to the lower bound proof in \cite{auer2002nonstochastic}, there exists a stationary loss distribution of variance $1$ for making the regret of any algorithm $\mathcal{A}$ at least $\Omega(\sqrt{\W/ \Gamma})$. For the rest of the intervals we use a fixed distribution with no variance. With this choice of distributions, the total variance does not exceed $\Lambda$, and the regret lower bound is $\Gamma \cdot \Omega(\sqrt{\W/ \Gamma}) \ge \Omega(\sqrt{\Gamma\W})\geq \Omega(\sqrt{\Gamma\W}+\Gamma)$.

Next, for the case with $\Gamma \ge \W$, we show a lower bound of $\Omega(\Gamma) \ge \Omega(\sqrt{\Gamma\W}+\Gamma)$. Let $\ell^{(i)}$, for $i \in [K]$, be the loss vector with $0$ in dimension $i$ and $1$ elsewhere, which can serve as a distribution with mean $\ell^{(i)}$ and variance $0$, with arm $i$ being the optimal one.
Then one can show the existence of a sequence of $\Gamma$ loss vectors, chosen from these $K$ vectors, which makes the expected regret of $\mathcal{A}$ at least $\Gamma(K-1)/K$ for the first $\Gamma$ steps, by a probabilistic argument. Formally, by choosing one of the loss vectors uniformly and independently for each step, the expected loss of $\mathcal{A}$ is $\Gamma(K-1)/K$, which implies the existence of a fixed sequence of $\Gamma$ loss vectors achieving this bound. On the other hand, the total loss of a fully dynamic offline algorithm is clearly $0$, which implies that the expected regret of $\mathcal{A}$ with respect to this fixed sequence of $\Gamma$ loss vectors is at least $\Gamma(K-1)/K = \Omega(\Gamma)$. The last loss vector is then kept for the remaining $T-\Gamma$ steps, which cannot decrease the regret, and the total number of switches is at most $\Gamma-1$. This completes the proof of the theorem.

\section{Proof of Lemma~\ref{lemma:sigma1}} \label{app:sigma1}

We let $\mathcal{Q}$ and $\mathcal{P}$ both be Bernoulli-type distributions taking values from $\{0, 2\sigma\}$. For $\mathcal{Q}$, we let it take each value with equal probability so that it has mean $\sigma$ and variance $\sigma^2$. For $\mathcal{P}$, we make it have mean $\sigma - \epsilon$, by letting it take the value $0$ with probability $\frac{\sigma + \epsilon}{2\sigma}$ and the value $2\sigma$ with probability $\frac{\sigma - \epsilon}{2\sigma}$. Then the variance of $\mathcal{P}$ is $\frac{\sigma + \epsilon}{2\sigma} (\sigma - \epsilon)^2 + \frac{\sigma - \epsilon}{2\sigma} (\sigma + \epsilon)^2 = (\sigma-\epsilon)(\sigma+\epsilon)<\sigma^2$. Thus, the first two conditions are satisfied. For the last condition, note that
$$(\ln 2) \mathrm{KL}(\mathcal{Q}, \mathcal{P}) = \frac{1}{2}\ln\left(\frac{\frac{1}{2}}{\frac{\sigma-\epsilon}{2\sigma}}\right) + \frac{1}{2}\ln\left(\frac{\frac{1}{2}}{\frac{\sigma+\epsilon}{2\sigma}}\right)
= \frac{1}{2}\ln\left(\frac{\sigma^2}{\sigma^2-\epsilon^2}\right) = \frac{1}{2}\ln\left(1+\frac{\epsilon^2}{\sigma^2-\epsilon^2}\right)
$$
which, using the fact that $1+x \le e^x$ and $\epsilon^2 \le \sigma^2/2$, is at most $\frac{\epsilon^2}{2(\sigma^2-\epsilon^2)} \le \frac{\epsilon^2}{\sigma^2}$ as required.

\end{document}